%% file: main.tex
\documentclass[dvipsnames,format=sigconf,anonymous=false,review=false]{acmart}
\AtBeginDocument{%
  }

\copyrightyear{2026}
\acmYear{2026}
\setcopyright{cc}
\setcctype{by}
\acmConference[GECCO '26]{Genetic and Evolutionary Computation Conference}{July 13--17, 2026}{San Jose, Costa Rica}
\acmBooktitle{Genetic and Evolutionary Computation Conference (GECCO '26), July 13--17, 2026, San Jose, Costa Rica}
\acmDOI{10.1145/3795095.3805057}
\acmISBN{979-8-4007-2487-9/2026/07}

\input{_used_packages.tex}

\input{_macros.tex}

\allowdisplaybreaks

\begin{document}

\title{Gray-Box Optimization and the Vertex Coloring Problem}

\author{Johanna Gasse}
\email{johanna.gasse@student.hpi.de}
\orcid{0009-0009-5664-3519}
\affiliation{%
  \institution{Hasso Plattner Institute\\ University of Potsdam}
  \city{Potsdam}
  \country{Germany}
}

\author{Antonia Heinen}
\email{antonia.heinen@student.hpi.de}
\orcid{0009-0002-4365-412X}
\affiliation{%
  \institution{Hasso Plattner Institute\\ University of Potsdam}
  \city{Potsdam}
  \country{Germany}
}

\author{Hendrik Higl}
\email{hendrik.higl@student.hpi.de}
\orcid{0009-0009-7867-0542}
\affiliation{%
  \institution{Hasso Plattner Institute\\ University of Potsdam}
  \city{Potsdam}
  \country{Germany}
}

\author{Timo Kötzing}
\email{timo.koetzing@hpi.de}
\orcid{0000-0002-1028-5228}
\affiliation{%
  \institution{Hasso Plattner Institute\\ University of Potsdam}
  \city{Potsdam}
  \country{Germany}
}


\renewcommand{\shortauthors}{Gasse et al.}
\begin{abstract}
Gray-box optimization is an approach for making some problem-specific information available to the algorithm while still relying on fitness information as the main guide to an optimum. This approach was shown to be beneficial in various combinatorial optimization tasks and neatly captures the continuum between fully black-box algorithms and tailored algorithms.

In this work, we discuss different flavors of gray-box algorithms. We show that RLS can find a proper $2$-coloring in a bipartite graph starting from a random $2$-coloring, in an expected time of $\mathcal{O}(n \log n)$. In contrast, when starting from a proper $n$-coloring, the (1+1) EA cannot find such a coloring except when offered additional guiding on plateaus of the search space. Finally, we show the run time for this setting can be much improved by using gray-box operators.
\end{abstract}


\keywords{Evolutionary algorithms, Gray-box optimization, Vertex coloring, Theory}


\maketitle

\input{sections/intro.tex}

\input{sections/prelim}

\input{sections/rls}

\input{sections/ooea}

\input{sections/ooea_bff}

\input{sections/graybox}

\section{Conclusion}

In this paper we discussed how different algorithms use problem knowledge to different degrees: Black-box algorithms are not tailored to the problem or instance at hand; problem-aware algorithms incorporate problem knowledge, but work the same on different instances of this problem, while instance-aware algorithms exploit specifics of the given instance, such as the neighborhood information of a graph.

We then first consider the conflict minimization problem on graphs, a $2$-coloring problem, for which we show that the standard RLS can solve it optimally on complete bipartite graphs.

Second, we consider using only proper colorings for finding an optimal coloring on complete bipartite graphs and on paths. As we have shown, progressively more information about problem and instance gains us run time asymptotically.

\bibliographystyle{ACM-Reference-Format}
\bibliography{_ref.bib}

\input{appendix.tex}

\end{document}

%% file: _used_packages.tex
\usepackage{hyperref}[hyperfootnotes=true]
\usepackage{float}
\usepackage{physics}
\usepackage{tabularx}
    \newcolumntype{Y}{>{\centering\arraybackslash}X}
    \renewcommand{\arraystretch}{1.5}
    
\usepackage{booktabs}
\usepackage{pdflscape}
\usepackage{csvsimple}
\usepackage{url}
\usepackage{nameref}
\usepackage{tablefootnote}
\usepackage{color, colortbl}
\usepackage{mathtools}
\usepackage{amsmath,amsfonts,mathtools}
\usepackage{bbm}
\usepackage[vlined,linesnumbered, ruled]{algorithm2e}
\usepackage{xparse}
\usepackage{algpseudocode}
\usepackage{makecell,booktabs}
\usepackage{subfigure}
\usepackage{caption}
\newcommand{\ignore}[1]{}
\usepackage{thmtools,thm-restate}
\usepackage[createShortEnv]{proof-at-the-end}
\usepackage{enumitem}

\usepackage{tikz} 
\usepackage{pgfplots}
\usepgfplotslibrary{fillbetween,statistics}
\pgfplotsset{compat=newest}
\pgfplotscreateplotcyclelist{tikzcycle}{%
draw=none \\ 
draw=none \\
fill=black!10 \\
thick,black \\ 
thick,blue,mark=square*\\ 
thick,red,mark=*\\ 
}
\pgfplotscreateplotcyclelist{muscycle}{%
thick,blue,mark=square*\\ 
thick,red,mark=*\\ 
}

\usepackage
[
    noabbrev,   
    nameinlink, 
]
{cleveref} 

%% file: _macros.tex
\DeclareDocumentCommand{\set}{m g o}{
    \ensuremath{
        \IfNoValueTF{#3}{\left}{#3}\{#1
            \IfNoValueTF{#2}{}{
                \ \IfNoValueTF{#3}{\left}{#3}\vert\ \vphantom{#1}#2\IfNoValueTF{#3}{\right.}{}
            } \IfNoValueTF{#3}{\right}{#3}\}
    }\xspace
}

\definecolor{lightgreen}{rgb}{0.75,0.92,0.61}

\newcommand{\labelname}[1]{
  \def\@currentlabelname{#1}}

\newcommand{\N}{\mathbb{N}}
\newcommand{\R}{\mathbb{R}}

\definecolor{mygreen}{RGB}{1, 150, 122}

\providecommand{\ignore}[1]{} 

\newcommand{\ooea}{(1+1) EA\xspace}

\newcommand{\col}{\mathrm{col}\xspace}

\newcommand{\Ber}{\mathrm{Ber}}

\newcommand{\Prob}[1]{\mathbb{P}\left[ #1 \right]}
\newcommand{\Ex}[1]{\mathbb{E}\left[ #1 \right]}
\newcommand{\Var}[1]{\mathbb{V}\left[ #1 \right]}

\newcommand{\bigmid}{~\big|~}

\DeclareMathOperator{\Geo}{Geo}

\newcommand{\natnum}{\mathbb{N}}

	{
	\begin{center}
	\begin{algorithm2e}
	}%
	{
	\end{algorithm2e}
	\end{center}
	}

\allowdisplaybreaks

%% file: sections/intro.tex
\newcommand{\figureOfResults}{%
\begin{figure*}
    \bgroup
    \def\arraystretch{1.5}
    \setlength{\tabcolsep}{10pt}
    \begin{tabular}{cccc}
        \textbf{Fitness function} & \textbf{Algorithm} & \textbf{Complete Bipartite Graphs} & \textbf{Paths} \\ \hline
        $\textsc{NumUsedColors}$ & \ooea & Exponential, Thm \ref{thm:ooea_bipartite_lowerbound} & Exponential, Thm \ref{thm:ooea_bounded_degree} \\
        $\textsc{RankedColors}$ & \ooea & $\mathcal{O}(n^{3} \log \log n)$, Thm \ref{the:run time_bff_kmn}; $\Omega(n^{2})$, Thm \ref{thm:ooea_bbff_kmn_lower} & $\mathcal{O}(n^{7})$, Thms \ref{the:run time_bff_kmn2} \& \ref{the:3_to_2_bff}; $\Omega(n^6)$, Thm \ref{thm:ooea_bff_paths_lowerbound} \\
        $\textsc{NumUsedColors}$ & Graybox RLS & $\Theta(n \log n)$, Thms \ref{the:gb_on_knm} \& \ref{the:lower_bound_gbkmn} & N/A \\
        $\textsc{RankedColors}$ & Graybox RLS & $\Theta(n \log n)$, Thms \ref{cor:bgg_kmn_upper_bound} \& \ref{the:lower_bound_gbkmn} & $\mathcal{O}(n^{4})$, Thms \ref{the:to_deg+1} \& \ref{the:3_to_2}
    \end{tabular}
    \egroup
    \captionsetup{type=table}
    \caption{Run time results for various algorithms and fitness functions for the minimal coloring problem. All bounds are on the expected number of iterations, except for the exponential bound, which holds with high probability. 
    }
    \label{tab:results}
\end{figure*}}

\section{Introduction} \label{sec:introduction}

Evolutionary algorithms can be used as black-box heuristics, oblivious to specific problem characteristics and purely relying on querying quality (the fitness) of potential solutions as guiding information. While this approach has proven to lead to surprisingly good results in many areas, better results are typically gained when opening up the black box just a little to incorporate specific knowledge about the problem.

Championing this approach is Whitley in his various works on NK-landscapes, including, for example, Max-$k$-SAT (see \cite{whitley_mk_2015}). Here, the fitness function is composed of sub-functions relying on at most $k$ problem variables each. For small $k$, this additional problem knowledge can be utilized to improve performance of the solver, at the price of generality (applicability to other problems).

An alternative approach keeps the setting of evaluating the fitness function in a black-box manner, but using problem knowledge to design specific variation operators. This was used in practice in \cite{peters_mixed_2019}, where a real-world scheduling problem was solved by employing various application-specific mutation operators.

Theoretical analyses have shown that such operators tailored to the problem at hand can lead to asymptotically faster expected run times. In \cite{baguley_analysis_2022} the authors showed, for the combinatorial vertex cover problem, that an operator specific for the vertex cover \emph{and aware of the instance to be solved} can improve the run time from $\Theta(n^4)$ to $\Theta(n^3)$ for some instances of the vertex cover problem.

\figureOfResults

\subsection{What is Gray-Box?}
\label{sec:whatIsGraybox}

We argue that these instance-aware operators are just the tip of the iceberg regarding gray box optimization: Ranging from constraint handling to balanced-flip operators, opening the black box is not the exception, but the norm.

As our first example, consider infeasible regions of the search space. Assigning infeasible search points a fixed bad fitness is a straightforward (and truly black-box) approach. However, this results in no guiding information for algorithms stuck in infeasible regions (for example at an already infeasible starting point). A typical way to mitigate this problem is to have a two-part fitness function: first minimize how much the constraints are violated; second optimize the actual cost function (see, for example, \cite{back_independent_1994,neumann_randomized_2007}, where penalty functions were used for combinatorial problems). Note that these penalty functions are typically folded into a unary objective function, but a binary objective with lexicographic ordering is the cleaner solution.

This example can be generalized: A fitness function can be \emph{augmented} to provide guidance on a plateau in order to cross areas without fitness gradient more effectively. We will use such an approach in this paper.

One can also imagine actually \emph{deforming} the fitness function to provide better guidance. In the most extreme case, the fitness function gives the distance to the (closest) optimum and thus provides perfect information, since now simple hill-climbing can solve this OneMax-like problem. However, this means that the fitness function needs to solve the optimization problem, which is the problem we started out with. Thus, considering such deformations is in the spirit of gray-box optimization, with the natural requirement that the fitness function is efficiently computable in the instance.

Regarding variation operators, we already mentioned \emph{instance-aware} operators. For example, in \cite{baguley_analysis_2022}, a mutation operator for the vertex cover problem can choose to remove a vertex from the cover, but instead add a neighbor to the cover. This operator thus requires knowledge of the problem \emph{instance} in order to determine which vertices are neighbors. Another example for a gray-box operator for vertex cover was given in \cite{he_comparative_2005}. Some operators are even more intrusive and employ specialized repair mechanisms, see, for example, \cite{branson_focused_2021}. In this paper we also give instance-aware operators.

While instance-aware operators open up the black box a lot, some other operators are much less intrusive, but still not fully unbiased. Consider the knapsack problem. For this problem we know that (typically) adding an item is better, unless constraints are violated. This is independent of concrete instances, but still variation operators can make use of this knowledge. For example, a ``balanced flip'' can remove one item and add some other item (similarly for crossover operators). The work of \cite{friedrich_crossover_2023} shows that such \emph{problem-aware} operators, which tailor the algorithm to the problem at hand but not to the concrete instance to be solved, can asymptotically improve the expected run time for certain classes of problem instances.
For both problem- and instance-aware operators, we naturally also require that they should be efficiently computable.

\subsection{Graph Coloring}

In this paper we consider the classic vertex coloring problem on unweighted simple graphs. Given such a graph on $n$ vertices, the goal is to assign each vertex a color such that no two neighboring vertices have the same color assigned (we call an edge between two vertices of the same color \emph{monochromatic}). Using $[n] = \{1,\ldots,n\}$, we code colors as natural numbers and thus work in the domain $[n]^n$, since we need at most $n$ colors for a graph with $n$ vertices. Note that, in spite of the coding as numbers, our variables are categorical and not numerical, so many algorithms and analyses for multi-valued decision variables (see, for example, \cite{doerr_static_2018}) do not apply.

Analyses of evolutionary algorithms for graph coloring were considered before in the literature. While \cite{sudholt_crossover_2005} proves that the standard \ooea takes exponential time to 2-color complete bipartite graphs, \cite{cheong_analysis_2010} develops an Iterated Local Search Algorithm, which is able to find an optimal coloring on bipartite graphs in $\mathcal{O}(n^{2} \log n)$. It does so by using an instance-aware gray-box mutation operator called ``Color Elimination'' (based on Kempe chains), which aims to remove a color $i$ from the neighborhood of a vertex~$v$. Using an instance-aware gray-box operator directly based on Kempe chains, it also gives polynomial-time results for sparse random graphs, odd rings and 5-coloring general planar graphs. This work is extended by \cite{bossek_time_2021} for the dynamic graph coloring problem, which finds that mutation operators tailored to the portion of the graph where changes have occurred can reduce the re-coloring time. 

Graph coloring has also been considered on bipartite graphs for instances that start with a random $2$-coloring and then seek to minimize the number of monochromatic edges in the coloring. \cite{fischer_one-dimensional_2005} analyzes the performance of both RLS and the \ooea on the Ising Model, which is known to be equivalent to $2$-coloring on bipartite graphs, on rings. They arrive at a run time bound of $\mathcal{O}(n^3)$ fitness function evaluations for both on rings with $n$ vertices. As the present paper proves, the \ooea starting at $n$ colors takes exponential time to find a $2$-coloring on paths, showing that coloring starting at $n$ colors can be harder to solve than the same problem starting at $2$ colors.

\subsection{Our Contribution}

In this paper, we analyze randomized search heuristics with and without gray-box operators on the graph coloring problem. The straightforward fitness function counts the number of colors used; we deal with infeasible solutions by first minimizing the number of monochromatic edges in a lexicographic way. We call this fitness function $\textsc{NumUsedColors}$.

Even on the feasible solutions, this results in many plateaus without fitness gradient as guiding information: Consider the graph on $n$ vertices with no edges, where half the vertices are colored green and the other red. Now the only way to gain fitness is to have all vertices colored in the same color. We augment the fitness function by having as a secondary objective to minimize the number of vertices of the least-used color, as tertiary objective to minimize the number of vertices of the second-least-used color, and so on. We call this fitness function $\textsc{RankedColors}$. Intuitively, in order to remove a color, we guide the algorithm towards minimizing the use of the least-used colors. See Section~\ref{sec:prelim_rls} for more details.

We consider the two classic algorithms random local search (RLS) and the (1+1) Evolutionary algorithm (\ooea). For RLS, we also consider a gray-box operator which gives preference to removing less used colors and does not attempt relabeling with unused colors. It also attempts swapping colors with neighbors. Note that the preference is problem-aware, as is the focus on unused colors, but not instance-aware. On the other hand, swapping colors with neighbors (or avoiding colors of neighbors) is instance-aware. See, again, Section~\ref{sec:prelim_rls} for details on this operator.

As a side note, we consider the related problem of minimizing monochromatic edges of a $2$-coloring. Note that, for bipartite graphs, the global optimum will then be a proper $2$-coloring of the graph. In Section~\ref{sec:rls2col} we show that, for this classic search space and the plain RLS on complete bipartite graphs, just the guiding information about the number of monochromatic edges provided by $\textsc{NumUsedColors}$ is sufficient for efficient optimization in $\mathcal{O}(n \log n)$ fitness evaluations.

In Section~\ref{sec:ooea} we turn to the \ooea with unbiased mutation. We change the search space to the $[n]^n$ and want to minimize the number of colors used. We show that this algorithm, with high probability, has exponential run time to find an optimal coloring for the two bipartite graph classes of complete bipartite graphs and paths when using the fitness function $\textsc{NumUsedColors}$.

By contrast, in Section~\ref{sec:ooea_bff} we note that the \ooea can optimize in expected polynomial time (number of function evaluations) for both mentioned graph classes when using the augmented fitness function $\textsc{RankedColors}$.

Finally, in Section~\ref{sec:graybox} we show that RLS with our gray-box operator can significantly speed up optimization, both for complete bipartite graphs (where already the fitness function $\textsc{NumUsedColors}$ leads to an expected optimization time of $\Theta(n \log n)$), and on paths (where our analysis makes used of $\textsc{RankedColors}$ in order to bring down the expected optimization time to $\mathcal{O}(n^{4})$). We give an overview over our results on the coloring problem in Table~\ref{tab:results}. 

Before diving into the contribution section, we give mathematical preliminaries, including algorithms and fitness functions, in Section~\ref{sec:prelim}.

%% file: sections/prelim.tex
\section{Objectives and Algorithms} \label{sec:prelim}

In this section, we introduce general notation as well as the specific problems and algorithms considered in this paper.

For $n \in \N$ we write $[n] = \{ 1, \dots, n \}$.

Let $A, B$ be sets that are not necessarily disjoint. Then we call $A \uplus B$ the disjoint union of $A$ and $B$ and assume any method of making them disjoint. Note that, for ease of notation, we still refer to elements of $A$ and $B$ as elements of $A \uplus B$.

\begin{definition}[Vertex Coloring]
    Let $G = (V, E)$ be a graph, and $k \in \N$. We call a function $c \colon V \to [k]$ a $k$-coloring of $G$. For a vertex $v \in V$, we call $c(v)$ its color.
    Let $\{ u, v \} \in E$ be an edge. If $c(u) = c(v)$, then $\{ u, v \}$ is called a monochromatic edge. If $c$ has no monochromatic edges, we call it a proper coloring. If a proper $k$-coloring of $G$ exists, we call $G$ $k$-colorable.
\end{definition}

While we define a coloring as a function mapping vertices to colors, in the remainder of this paper we assume an order of vertices $v_1, \ldots, v_n$ and use an array $x \in [n]^n$ for colorings, where, for $i \in [n]$, $x[i] = c(v_i)$.

For $M,N \in \natnum$, we consider complete bipartite graphs $K_{M,N} = ([M] \uplus [N], [M] \times [N])$ with $M$ and $N$ vertices in the two partitions, respectively. In this context, $n$ refers to the total number of vertices with $n = M + N$. For $n \in \natnum$, we also consider paths on $n$ vertices $P_n = ([n], \{ \{ i, i + 1 \} \mid i \in [n - 1]\})$.

\subsection{Objectives} \label{sec:prelim_rls}

We introduce two fitness functions. Both will work on tuples, using the following lexicographical ordering. For $n \in \N$ and $x, y \in \N^n$ we write $x < y$ if there is $i \in [n]$ such that $x[i] < y[i]$ and, for all $j < i$, $x[j] = y[j]$. That is, $x$ is less than $y$ in the first entry where they disagree.   We write $x \leq y$ if $x = y$ or $x < y$. This ordering relation is trivially total.

We let $\textsc{NumUsedColors} \colon [n]^n \to \N^{2}$ be the following fitness function: For a given coloring $x \in [n]^n$, let $\textsc{NumUsedColors}(x)[1]$ be the number of monochromatic edges, given by $\lvert \{\{ u, v \} \in E \mid x[u] = x[v]\} \rvert$. Let $k \in \N$ be the total number of colors that appear in $x$. Then, let $\textsc{NumUsedColors}(x)[2]=k$.

This fitness function's primary objective is minimizing the number of monochromatic edges in the graph. As a secondary objective, it minimizes the total number of colors used in the graph. Therefore, once the algorithm discovers a proper coloring, the fitness function never accepts an improper coloring afterwards, but prefers a smaller number of colors used.

We also analyze an alternative fitness function, which we denote by $\textsc{RankedColors} \colon [n]^n \to \N^{n + 1}$. We first define the term \emph{rank}.

\begin{definition}[Rank] \label{def:rank}
    Let $G = (V, E)$ be a graph and $x \in [n]^n$ a coloring of $G$. Let $a$ be an ordering of all colors in $x$ that appear at least once, ordered ascending by frequency. For a color $c \in [n]$, we call its $1$-based index in $a$ its rank, denoted by $r_c$.
\end{definition}

For a given coloring $x \in [n]^n$, let $\textsc{RankedColors}(x)[1]$ be the number of monochromatic edges. Let $k \in \N$ be the total number of colors that appear at least once in $x$. Then, for all $2 \leq i \leq n - k + 1$, let $\textsc{RankedColors}(x)[i] = 0$. For all $i \in [k]$, let $\textsc{RankedColors}(x)[{n - k + i + 1}]$ be the total number of vertices colored in the color of rank $i$.

\begin{example}
    Let a graph with $3$ vertices with a coloring $x = (1, 2, 2)$ be given. Here, color $3$ does not appear at all, color $1$ has rank $1$ and color $2$ has rank $2$. Further assume that there are no monochromatic edges. Then $\textsc{RankedColors}(x) = (0, 0, 1, 2)$.

    Here, the first $0$ represents the absence of monochromatic edges. The second $0$ represents the zero vertices colored in color $3$, which is absent in $x$. Then there is one vertex colored in the color of rank $1$ and two vertices colored in the color of rank $2$.
\end{example}

Like with $\textsc{NumUsedColors}$, any coloring without monochromatic edges is better than any coloring with monochromatic edges when evaluated with~$\textsc{RankedColors}$. The latter also prioritizes reducing the number of appearances of less frequent colors further, while maintaining that the number of colors in use in the graph can never increase. This means that $\textsc{RankedColors}$ will reject any changes that increase the number of appearances of one color without decreasing the number of appearances of a less frequent color. In particular, an optimal coloring in this ordering will be a coloring using a minimal number of colors.

\subsection{Algorithms} \label{sec:prelim_rls}

Let $n,\ell \in \N$ and $f\colon \natnum^n \to \N^{\ell}$. Given $\textsc{Mutate}\colon \N^n \to \N^n$ (called a mutation operator) and a (possibly randomized) value $\textsc{Start}$ (called starting individual), we consider the following generalized trajectory-based optimization algorithm. This algorithm starts with an initial and usually non-optimal coloring $x$ and iteratively attempts to improve it by mutating it. We give the formal definition for minimization of a fitness function $f$ in Algorithm~\ref{alg:tboa}.

\begin{algorithm}
    let $x = \textsc{Start}$\;
    \While{true}
    {
        let $x' = \textsc{Mutate}(x)$\;
        \If{$f(x') \leq f(x)$}
        {
            set $x \leftarrow x'$\;
        }
    }
	\caption{Trajectory Based Optimization Algorithm}
    \label{alg:tboa}
\end{algorithm}

\noindent
\textbf{The Domain $\{0,1\}^n$.}
In this domain the starting individual is $x \in \{0, 1\}^{n}$ uniformly at random. We define the classic mutation operator $\textsc{OneBitFlip}$ such that, for all $x \in \{0,1\}^n$, $\textsc{OneBitFlip}(x)$ changes exactly one bit chosen uniformly at random from the input. Note that this operator is also known as the random local search (RLS) operator.

We refer to Algorithm~\ref{alg:tboa} with this uniform starting individual and the RLS operator as Random Local Search (RLS).

\noindent
\textbf{The Domain $[n]^n$.}
In this domain we can start with a coloring which is guaranteed to be proper: As starting individual we choose $x=(1,2,...,n)$. Note that, while this coloring is proper, in most cases it is non-minimal.

We consider the operator $\textsc{IndependentCategoricalFlip}$ such that, for all $x \in [n]^n$, $\textsc{IndependentCategoricalFlip}(x)$ makes changes in each position $j \in [n]$ independently with probability $1/n$ to a uniformly random $a \in [n]$. Note that this operator is an extension of the uniform bit flip operator employed by the classic \ooea.

For this domain, we have an additional operator, a gray-box operator, given in Algorithm~\ref{alg:graybox_mut}. It utilizes the following two operators, each accepting a coloring $x \in [n]^{n}$ and a color $c$ as parameters.

\begin{itemize}[leftmargin=*]
    \item $\textsc{Flip}(x, c)$: Choose a vertex $v$ colored in $c$ uniformly at random. Then choose a different color currently present in the coloring that is not adjacent to $v$ and recolor $v$ in this color, if possible.
    \item $\textsc{Swap}(x, c)$: Choose a vertex $v$ colored in $c$ and a neighbor $u$ of $v$ uniformly at random and swap the colors of $u$ and $v$.
\end{itemize}
The idea is to perform flips that we know are advantageous, that is, flips reducing less frequent colors, with higher probability. This is unlike the unbiased mutation operator used in the \ooea, which performs any flip with equal probability. There are, however, situations where no flip to a different color is possible. In such a situation, a swap might be necessary to lay the groundwork for a flip at a later point in time.

\SetKwIF{WithProb}{ElseWithProb}{WithProbElse}{with probability}{do}{else with probability}{else}{endWithProbability}

\SetKwBlock{WProbOneN}{with probability $\frac{1}{n}$ do}

\SetKwBlock{WProbOneTwo}{with probability $\frac{1}{2}$ do}{else}

\begin{algorithm}
    \textbf{Input:} $x \in [n]^n$\;
    choose $r \sim \Geo \left( 1 / 2 \right)$\;
    let $k$ be the total number of colors used in $x$\;
    \If{$r > k$}
    {
        \Return $x$\;
    }
    let $c$ be the color of rank $r$\;
    \uWithProb{$1/2$}
    {
        \Return $\textsc{Flip}(x, c)$\;
    }
    \WithProbElse
    {
        \Return $\textsc{Swap}(x, c)$\;
    }
    \caption{Gray-Box Mutation Operator}
    \label{alg:graybox_mut}
\end{algorithm}

As complete setups for the domain $[n]^n$, we consider the following two algorithms cases. Each uses the trajectory based optimization algorithm (Algorithm~\ref{alg:tboa}).
\begin{itemize}[leftmargin=*]
    \item Using the $\textsc{IndependentCategoricalFlip}$ operator and a proper coloring as a starting individual; we refer to this setting as \ooea with unbiased mutation.
    \item Using the gray-box operator and a proper coloring as starting individual; we refer to this setting as gray-box RLS.
\end{itemize}

%% file: sections/rls.tex
\section{RLS with Two Colors} \label{sec:rls2col}

In this section we consider the problem of conflict minimization (minimizing the number of monochromatic edges in a two-colored graph), and show that Random Local Search takes at most $ \mathcal{O}(n\log n)$ fitness evaluations in expectation to find a 2-coloring without monochromatic edges on complete bipartite graphs.

\begin{theorem} \label{thm:rls_runtime}
The run time for Random Local Search (RLS) with fitness function \textsc{NumUsedColors} using the search space $\{0,1\}^n$ and starting from a possibly non-proper 2-coloring to find a proper 2-coloring on $K_{M,N}$ is $\mathcal{O}(n\log n)$ in expectation.
\end{theorem}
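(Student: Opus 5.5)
The plan is to reduce the dynamics to two integer parameters and read off the acceptance rule from a closed-form fitness. Let $A$ and $B$ be the two sides of $K_{M,N}$, with $|A|=M$ and $|B|=N$. Let $a$ be the number of vertices of $A$ with bit $1$ and $b$ the number of vertices of $B$ with bit $1$. The number of monochromatic edges is
\[
f = ab + (M-a)(N-b) = 2\Bigl(a-\tfrac{M}{2}\Bigr)\Bigl(b-\tfrac{N}{2}\Bigr) + \tfrac{MN}{2}.
\]
Writing $\alpha = a - M/2$ and $\beta = b - N/2$, we get $f = 2\alpha\beta + MN/2$. The two proper $2$-colorings are exactly $(\alpha,\beta) = \pm(M/2,-N/2)$, which are the minimizers of $\alpha\beta$ over $|\alpha|\le M/2$ and $|\beta|\le N/2$.

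A step of RLS flips one bit, so it changes exactly one of $\alpha,\beta$ by $\pm 1$. The step is accepted iff $\alpha\beta$ does not increase. The proof then tracks the sign of $\alpha\beta$ in three phases.

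\emph{Phase 1 ($\alpha\beta>0$).} Here $\alpha$ and $\beta$ have the same sign, say both positive. A flip decreasing $|\alpha|$ or $|\beta|$ lowers $\alpha\beta$ and is accepted. A flip increasing either of them is rejected. Hence $|\alpha|$ and $|\beta|$ are non-increasing. Every $1$-bit is an improving flip, and in this quadrant the $1$-bits number $a+b \ge n/2$. So each step makes progress with probability at least $1/2$. After at most $M/2+N/2$ successful steps one coordinate reaches $0$ (or crosses it). This phase therefore takes $O(n)$ expected steps.

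\emph{Phase 2 ($\alpha\beta=0$).} Flips that keep the product at $0$ are neutral, and they are accepted. Suppose $\alpha=0$ and $\beta\neq 0$. A flip in $A$ that moves $\alpha$ to the sign opposite to $\beta$ makes $\alpha\beta<0$ and is accepted. Because $\alpha=0$ means $a = M/2$, there are $M/2$ such bits, so this happens with probability at least $(M/2)/n$ per step. Meanwhile, flips in $B$ only move $\beta$ along the axis. The worry is that $M$ may be tiny, giving success probability only about $1/n$. If $M$ is even, then $M\ge 2$ and this still gives $O(n)$ expected waiting time. (For odd $M$ or $N$, the corresponding axis is never hit at all, so it does not need handling.)

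The origin $\alpha=\beta=0$ needs separate care. There every flip is neutral, and the process leaves the origin with probability $1$ in one step. From the axis it then lands at product $<0$ with constant probability per attempt, or else stays on an axis. I would bound the expected time to reach $\alpha\beta<0$ by $O(n)$ via a geometric-trials argument. Each visit to an axis state leads within $O(n)$ expected steps either into the region $\alpha\beta<0$, or, if it moves toward the origin, back to another axis state. The point to check is that this does not loop many times. I would handle this by noting that along the axis $\beta$ performs an unbiased walk. The per-step escape probability is $\Omega(M/n)$, while the rate of $\beta$-moves is at most $N/n$. This should give an $O(n)$ expected exit time in total.

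\emph{Phase 3 ($\alpha\beta<0$).} This region is absorbing. Take, say, $\alpha>0>\beta$. Moves increasing $|\alpha|$ or $|\beta|$ decrease $\alpha\beta$ and are accepted. Moves decreasing them are rejected, with the single exception of a step that returns the product to $0$. That step would itself increase $\alpha\beta$ from a negative value to $0$, so it is also rejected. Hence every step that flips a ``wrong'' bit (a $0$ in $A$ or a $1$ in $B$) is accepted, and every other step is rejected. This is exactly a coupon collector process on at most $n$ wrong bits, which finishes in $O(n\log n)$ expected steps.

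Summing the three phases gives $O(n\log n)$ in expectation. I expect the main obstacle to be Phase 2: making the tie and plateau behaviour on the axes rigorous when $M$ or $N$ is even, and in particular when one side is very small. There I would first try the direct bound on the per-step escape probability. If the analysis of the unbiased walk along the axis becomes messy, the fallback is to use that walk only as a crude bound.
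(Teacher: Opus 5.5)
Your plan is the paper's proof in different coordinates. Your $\alpha,\beta$ are the paper's $d_M/2$ and $d_N/2$, your closed form is its formula $h = NM/2 + d_Nd_M/2$, and your three phases are its negatively unbalanced, balanced and positively unbalanced states, with the same $O(n)$, $O(n)$, $O(n\log n)$ bounds. Phases 1 and 3 are correct as written. In Phase 1, your potential $|\alpha|+|\beta|$ with progress probability at least $1/2$ is slightly slicker than the paper's per-side additive drift. Working with half-integer $\alpha,\beta$ also absorbs the paper's parity case distinctions.

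The soft spot is Phase 2, where the argument is left at ``this should give $O(n)$'' and leans on a false claim. On the axis $\alpha=0$, the coordinate $\beta$ does not perform an unbiased walk. It decreases with probability $b/n$ and increases with probability $(N-b)/n$, so it is pulled toward the origin. That is exactly the direction that produces the loops you were worried about.

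You do not need any walk analysis, though:
\begin{itemize}
\item On $\{\alpha=0,\beta\neq 0\}$ the escape probability is exactly $(M/2)/n \ge 1/n$. This axis is only reachable when $M$ is even, so $M\ge 2$.
\item Symmetrically, on $\{\beta=0,\alpha\neq 0\}$ the escape probability is $(N/2)/n\ge 1/n$.
\item The origin is always left in one step, and always to an axis state. So the number of steps spent at the origin is at most one plus the number of axis steps.
\item The number of axis steps before entering $\{\alpha\beta<0\}$ is stochastically dominated by a geometric variable with parameter $1/n$.
\end{itemize}
Hence Phase 2 takes at most $2n+1$ expected steps. This direct count plays the role of the paper's four-state Markov chain with its restart argument, which yields $2n^2/(NM)$. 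With it, your proof is complete.
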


We prove the theorem by dividing the optimization process into different steps. For this we characterize the states that are encountered by RLS according to the colorings of the two sets of the partition of the graph. If such a set has more vertices in one color than in the other, it has a \emph{preference} for this color. 
If the set has a preference, we refer to it as \emph{unbalanced}, and \emph{balanced} otherwise. If both partition sets have the same preference, the state is \emph{negatively unbalanced}, if either set is balanced, so is the state, and it is \emph{positively unbalanced} when the sets have different preferences. This leads to the optimal state, where both sets are fully colored in one color. We consider the states in this order, where the optimal state is the highest one.

We show that RLS in one such state only transitions to states that are higher or equal in this ordering. We also prove that RLS only spends, in expectation, at most $\mathcal{O}(n\log n)$ time in non-optimal states. Intuitively, the initial search point might already have a preference for white in one set and for black in the other. This corresponds to a positively unbalanced state, and optimization is a straightforward hill-climb. If the initial preference is for the same color, then the algorithm starts in a negatively unbalanced state. In this case, the fitness will guide the search such that both partitions become more and more balanced, until either the search point is positively unbalanced or balanced. We will show that, in a balanced state, there are neutral mutations (neither gaining nor losing in fitness). By a Markov chain analysis we show that, in time $\mathcal{O}(n)$ in expectation, the algorithm leaves the balanced state and enters a positively unbalanced state, from where simple hill-climbing will attain the optimum. For the full proof, see \Cref{sec:appendix}.

%% file: sections/ooea.tex
\section{\ooea with NumUsedColors} \label{sec:ooea}

In this chapter, we analyze the run time of the \ooea with fitness function \textsc{NumUsedColors} on complete bipartite graphs and paths and conclude that it is exponential with very high probability.

\subsection{Complete Bipartite Graphs}

On complete bipartite graphs, the \ooea with fitness function \textsc{NumUsedColors} takes exponential time with high probability. The problem is that, given any number of colors $k$, the algorithm will keep variously recoloring vertices. This is called a fitness plateau, as the fitness function does not distinguish between different colorings with $k$ colors.

\begin{theoremE}[][end, restate, category=unbiased] \label{thm:ooea_bipartite_lowerbound}
    Let $M, N \in \N$ with $n \coloneq M + N$. Let $T$ be the run time of the \ooea, using fitness function \textsc{NumUsedColors}, to $2$-color $K_{M,N}$.
    Then there exists a constant $c > 0$ such that
    \begin{align*}
        \Prob{T \leq 2^{cn}} \in 2^{-\Omega(n)}.
    \end{align*}
\end{theoremE}

\begin{proofE}
    Let $V_M, V_N \subseteq V$ be the parts of $G$ with sizes $M$ and $N$ respectively. Without loss of generality, assume $M \geq N$.

    For a given coloring, we call the most frequent color in $V_M$ \emph{good} and all remaining colors \emph{bad}. When there are multiple most frequent colors at once, any one of them can be considered good. We call any vertex colored in a good or bad color good or bad respectively. Then let $X_t$ denote the total number of bad vertices at time $t \in \N$ when using the \ooea.

    Let $T' = \min \{ t \in \N \mid X_t \leq 0 \}$ be the first moment in time when $V_M$ is colored in a single color. This is a necessary condition for $K_{M,N}$ to be $2$-colored. Therefore, a lower bound on $T'$ also imposes a lower bound on $T$, the time that it takes to $2$-color $K_{M,N}$ in total, using the \ooea and fitness function $\textsc{NumUsedColors}$.
    
    We analyze the process $(X_t)_{t \in \N}$ over the interval $\left[ 0, \frac{M}{4} \right]$ and need to show that it exhibits negative drift. As the process begins with each vertex colored in a unique color, $X_0 = n - 1 > \frac{M}{4}$ for sufficiently large $n$.
    
    Let $t \in \N$. Assume $0 < X_t < \frac{M}{4}$, and let $k$ be the total number of colors in $V_M$ at time $t$. As we are not yet finished, $k \geq 2$. Define $A$ as the event that the current mutation does not increase the total number of colors present in $V_M$. In the following, we condition on $A$. Furthermore, let $B$ denote the event that the current mutation only mutates a single vertex.

    \newcommand{\baddrift}{Y_\uparrow}
    \newcommand{\gooddrift}{Y_\downarrow}

    Let $\baddrift$ denote the total number of good vertices turning bad at time $t$. Analogously, let $\gooddrift$ denote the total number of bad vertices turning good at time $t$. We now proceed by proving a lower bound on $\Ex{\baddrift \bigmid X_t; A }$ and an upper bound on $\Ex{\gooddrift \bigmid X_t; A}$.

    Let $k$ be the total number of colors in $V_M$ at time $t$. Using the law of total expectation, we get
    \begin{align*}
        &\Ex{\baddrift \bigmid X_t; A} \\
        \geq~&\Ex{\baddrift \bigmid X_t; A, B} \cdot \Prob{B \bigmid X_t; A} \\
        =~&\frac{M - X_t}{n} \cdot \frac{k - 1}{k} \cdot n \cdot \frac{1}{n} \cdot \left(1 - \frac{1}{n}\right)^{n - 1} \\
        \geq~&\frac{M - X_t}{n} \cdot \frac{1}{2} \cdot \frac{1}{e}.
    \end{align*}

    For $\gooddrift$, we note that, out of $k$ total colors in $V_M$, only one is considered good. Therefore
    \begin{align*}
        \Ex{\gooddrift \bigmid X_t; A} \leq X_t \cdot \frac{1}{n} \cdot \frac{1}{k} \leq \frac{1}{2} \cdot \frac{X_t}{n}.
    \end{align*}

    To show that there is negative drift, consider $X_{t + 1} - X_t$. This is the total number of bad vertices gained minus the total number of good vertices gained, so \mbox{$X_{t + 1} - X_t = \baddrift - \gooddrift$}. Therefore,
    \begin{align*}
        &\Ex{X_{t + 1} - X_t \bigmid X_t; A} \\
        =~&\Ex{\baddrift \bigmid X_t; A} - \Ex{\gooddrift \bigmid X_t} \\
        \geq~&\frac{1}{2e} \cdot \frac{M - X_t}{n} - \frac{1}{2} \cdot \frac{X_t}{n} \\
        =~&\frac{1}{2e} \cdot \frac{1}{n} \cdot (M - (1 + e)X_t) \\
        \intertext{Using $X_t \leq \frac{M}{4}$ and $M \geq \frac{n}{2}$, we get}
        \geq~&\frac{1}{2e} \cdot \frac{1}{n} \cdot \left( M - (1 + e) \cdot \frac{M}{4} \right) \\
        =~&\frac{1}{8e} \cdot \frac{1}{n} \cdot (4M - (1 + e)\cdot M) \\
        =~&\frac{3 - e}{8e} \cdot \frac{1}{n} \cdot M \\
        \geq~&\frac{3 - e}{8e} \cdot \frac{1}{n} \cdot \frac{n}{2} \\
        =~&\frac{3 - e}{16e} > 0.
    \end{align*}
    
    \Cref{lem:concentration} yields the concentration condition with constant $r$, therefore, by \Cref{thm:negative_drift} [\nameref{thm:negative_drift}], there exists a constant $c' > 0$, such that
    \begin{align*}
        \Prob{T' \leq 2^{c' M/({4 r})}} \in 2^{-\Omega(M)}.
    \end{align*}

    As $\frac{1}{2}n \leq M \leq n$, $\Omega(M) = \Omega(n)$. Choosing $c = \frac{1}{4r}c'$ proves the lower bound on $T'$, which also establishes the same lower bound on $T$.
\end{proofE}

The idea for our proof is to only consider the greater of the two parts and bound the time that it takes all but the most frequent color to vanish.

\subsection{Paths}

Let $n \in \N$ and $P_n$ be the path graph with $n$ vertices. We make the following two reasonable assumptions: the probability of jumping from a $k$-coloring for $k > 20$ to a $2$-coloring in a single step is sufficiently small, and, once such a coloring is reached, the colors are balanced in such a way that the third most frequent color appears sufficiently often. We then show that even starting at this point in time, the time to reach an optimal color is exponential with high probability.

\begin{theoremE}[][end, restate, category=unbiased]
\label{thm:ooea_bounded_degree}
    Let $n \in \N$ and consider the \ooea with fitness function \textsc{NumUsedColors} on $P_n$. Assume that at time $t = 0$, $P_n$ is already $k$-colored with $2 < k \leq 20$ and that there are at least three colors which appear on at least $n/200$ vertices each. Let $T$ be the time it takes for $P_n$ to be $2$-colored.

    Then there exists a constant $c > 0$, such that
    \begin{align*}
        \Prob{T \leq 2^{cn}} \in 2^{-\Omega(n)}.
    \end{align*}
\end{theoremE}

\begin{proofE}
    For $P_n$ to be $2$-colored, at least one of the three most frequent colors needs to vanish. We first fix $i \leq 3$ and consider the time it takes for the $i$\textsuperscript{th} most frequent color to be removed, and then union bound over all $i \leq 3$ to derive the final result.

    Let $i \leq 3$ and $(X_t)_{t \in \N}$ denote the number of vertices that are colored in the $i$\textsuperscript{th} most frequent color, which we call red from now on. We now show that $T_i = \min \{ t \in \N \mid X_t \leq 0 \}$ is exponential with high probability. For this, we intend to show that $(X_t)_{t \in \N}$ exhibits negative drift over the interval $[0, n/200]$.
    
    Let $t \in \N$ and assume $0 < X_t < n/200$. Note that by assumption, $X_0 \geq n/200$. Let $A$ be the event that the current mutation does not increase the number of colors, and $B$ the event that in the current mutation exactly one vertex is flipped.

    \newcommand{\gooddrift}{Y_\downarrow}
    \newcommand{\baddrift}{Y_\uparrow}

    Let $\gooddrift$ denote the number of red vertices turning non-red at time $t$, and analogously $\baddrift$ the number of non-red vertices turning red at time $t$. We now bound $\Ex{\gooddrift \bigmid X_t; A}$ from above and $\Ex{\baddrift \bigmid X_t; A}$ from below.

    For $\gooddrift$, note that there are $X_t$ red vertices, each being flipped with probability $\frac{1}{n}$. Conditioning on $A$ is equivalent to a mutation operator which only considers flips to used colors, where the same holds. Disregarding the probability of then turning non-red, this turns into the following inequality:
    \begin{align*}
        \Ex{\gooddrift \bigmid X_t; A} \leq \frac{X_t}{n} \leq \frac{1}{200}.
    \end{align*}

    Let $C_v$ be the event that the vertex $v$ turns red, and let $\gamma(v)$ be $1$ if $v$ is not adjacent to a red vertex and $0$ otherwise. Let $k \leq 20$ be the total number of colors. For $\baddrift$, we get, using the law of total expectation,
    \begin{align*}
        \Ex{\baddrift \bigmid X_t; A}
        \geq~&\Prob{B \bigmid X_t; A} \cdot \Ex{\baddrift \bigmid X_t; A, B} \\
        \geq~&\frac{1}{e} \cdot \sum_{v \text{~not red}} \Prob{C_v \bigmid X_t; A, B} \\
        =~&\frac{1}{e} \cdot \sum_{v \text{~not red}} \frac{1}{n} \cdot \frac{1}{k} \cdot \gamma(v)\\
        \geq~&\frac{1}{enk} \cdot \sum_{v \text{~not red}} \gamma(v) \\
        \intertext{$\sum_{v \text{~not red}} \gamma(v)$ counts the number of vertices that are neither red nor adjacent to a red vertex. In the worst case, the $X_t$ red vertices on the path pairwise have no neighbor in common, so there are at most $2X_t$ vertices adjacent to a red vertex, yielding}
        \geq~&\frac{1}{enk} \left( n - 3X_t \right) \\
        \geq~&\frac{1}{enk} \left( n - 3 \cdot \frac{n}{200} \right) \\
        =~&\frac{1}{ek} \left( 1 - \frac{3}{200} \right) \\
        \geq~&\frac{1 - \frac{3}{200}}{20e} > \frac{1}{100}.
    \end{align*}

    To show that there is negative drift, we have
    \begin{align*}
        \Ex{X_{t + 1} - X_t \bigmid X_t; A} 
        =~&\Ex{\baddrift \bigmid X_t; A} - \Ex{\gooddrift \bigmid X_t; A} \\
        \geq~&\frac{1}{100} - \frac{1}{200} > 0.
    \end{align*}

    \Cref{lem:concentration} yields the concentration condition with constant $r$, so by \Cref{thm:negative_drift} [\nameref{thm:negative_drift}], there exists a constant $c' > 0$ such that
    \begin{align*}
        \Prob{T_i \leq 2^{c' n/(200r)}} \in 2^{-\Omega(n)}.
    \end{align*}
    Choosing $c_i = {c'}/{200r}$ yields the exponential lower bound for~$T_i$.

    We now set $c \coloneq \min \{ c_i \mid i \leq 3 \}$. Using this, we get, for all $i \leq 3$,
    \begin{align*}
        \Prob{T_i \leq 2^{cn}} \leq \Prob{T_i \leq 2^{c_i n}} \in 2^{-\Omega(n)}.
    \end{align*}

    Recall that $T$ is the time it takes for $P_n$ to be $2$-colored. The event $T \leq 2^{cn}$ occurs if there exists a $i \leq 3$ such that $T_i \leq 2^{cn}$, as at least one of the three most frequent colors needs to vanish. By the union bound, we therefore get
    \begin{align*}
        \Prob{T \leq 2^{cn}} \leq \sum_{i \leq 3} \Prob{T_i \leq 2^{cn}} \in 3 \cdot 2^{-\Omega(n)} = 2^{-\Omega(n)}.
    \end{align*}
\end{proofE}

%% file: sections/ooea_bff.tex
\section{\ooea with \textsc{RankedColors}} 
\label{sec:ooea_bff}

In this section, we consider the \ooea with the fitness function $\textsc{RankedColors}$, which provides additional guidance information for minimizing colors.

\subsection{Upper Bound for Complete Bipartite Graphs}

We first consider the complete bipartite graph.

\begin{theorem} \label{the:run time_bff_kmn}
The \ooea using  $\textsc{RankedColors}$ as fitness and starting with each vertex colored in a unique color has an expected run time of $\mathcal{O}(n^{3} \log \log n )$ fitness evaluations on complete bipartite graphs with $n$ vertices.
\end{theorem}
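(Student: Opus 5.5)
The plan is a fitness-level argument over phases, where phase $k$ is the time during which exactly $k$ colors are present, for $k = n, n-1, \dots, 3$. Let $V_M$ and $V_N$ be the two parts. The start $(1,\dots,n)$ is proper, and $\textsc{RankedColors}$ never accepts a monochromatic edge afterwards, so every current coloring is proper. In $K_{M,N}$ this means the set of colors splits into colors used only in $V_M$ and colors used only in $V_N$. The number of colors never increases, and the run is finished as soon as each part is monochromatic. So in phase $k \ge 3$ some part contains at least two colors. I would first record two simple facts about accepted moves.

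First, consider a mutation that only moves vertices from a color $c$ to a color $c'$ in the same part with $|c'| \ge |c|$. Such a mutation is strictly improving: in the ascending sorted count vector, the first changed entry decreases. It is always feasible, because no edge runs inside a part.

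Second, no mutation that moves a vertex to a color of the other part can be accepted, since it creates a monochromatic edge. Likewise, no mutation that introduces a fresh color can be accepted unless some color simultaneously vanishes, since otherwise the leading zero block of the fitness shrinks.

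In phase $k$ I would track a potential $s$: the size of the smallest color that lies in a part containing at least two colors. Call such a color non-solitary. Colors that are alone in their part are frozen, because their part is monochromatic and they can neither lose nor gain vertices. Hence the sorted vector restricted to the non-solitary colors is lexicographically non-increasing along accepted moves. I would argue from this that $s$ never increases during the phase. The delicate case is when the current minimum color is redistributed, and there I would use the lexicographic ordering directly.

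From any state, the following single-vertex mutation decreases $s$: pick one of the $s$ vertices of the minimal non-solitary color and recolor it to another color of its part. That other color is at least as large, because the chosen color is minimal among the non-solitary colors. This mutation has probability at least $\frac{s}{n}\cdot\frac{1}{n}\cdot(1-\frac1n)^{n-1} \ge \frac{s}{e n^2}$. Once $s$ reaches $0$ a color has vanished and the phase ends. So by the fitness-level method the expected length of phase $k$ is at most $\sum_{j=1}^{s_k} e n^2/j = \mathcal{O}(n^2 (1+\log s_k))$, where $s_k$ is the value of $s$ at the start of the phase.

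It remains to bound $\sum_{k=3}^{n} \log s_k$. Here I use that the minimum over at least two colors sharing a part, out of $k$ colors in total spread over at most $n$ vertices, satisfies $s_k \le n/\lceil (k-1)/2\rceil$ or similar. With this, $\sum_k (1+\log(n/k)) = \mathcal{O}(n)$ by Stirling's formula, which gives $\mathcal{O}(n^3)$ overall. If this averaging bound turns out to fail, for example because the non-solitary colors live in a part where the sizes are not balanced, I would fall back on a cruder split. For $k \ge n/\log n$ I would use $s_k \le \mathcal{O}(\log n)$. For the remaining $\mathcal{O}(n/\log n)$ phases I would pay $\mathcal{O}(n^2\log n)$ each. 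This gives $\mathcal{O}(n^3 \log\log n)$, which matches the claimed bound.

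The main obstacle I expect is the monotonicity of the potential $s$. Multi-bit mutations can make a larger non-solitary color shrink below $s$, which is harmless. But they can also make a part lose all but one color, which turns a formerly non-solitary color into a frozen one and could raise the minimum taken over non-solitary colors. I would handle this by defining the potential lexicographically over the whole sorted count vector, restricted to the parts that have at least two colors, rather than over a single minimum. I would then argue that the part switching to monochromatic removes a color and so ends the phase anyway. Only at that point would I apply the per-step improvement probability $\frac{s}{e n^2}$.
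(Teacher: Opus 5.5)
Your route is essentially the paper's. You work in phases by the number $k$ of colors and track the size of the least frequent color lying in a part that is not yet monochromatic. You use that \textsc{RankedColors} never lets this size increase, lower-bound its decrease probability by $s/(en^2)$, and sum $\mathcal{O}(n^2(1+\log s_k))$ over the phases. The paper uses the multiplicative drift theorem where you use fitness levels, which gives the same order.

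The step that fails as you sketch it is exactly the one you flag. Your claim that a part becoming monochromatic removes a color and ends the phase is false for multi-bit mutations. Take part $A$ with $5$ vertices of color $a_1$ and $3$ of $a_2$, and part $B$ with $4$ of $b_1$ and $4$ of $b_2$. Recoloring all $a_2$-vertices to $a_1$ and one $b_1$-vertex to an unused color gives a proper coloring that still has $k=4$ colors. Its sorted counts are $(1,3,4,8) < (3,4,4,5)$, so the move is accepted. For the same reason, solitary colors are not literally frozen, and a vertex can take a color of the other part if that color simultaneously disappears there.

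Monotonicity of $s$ is nevertheless true, and this short argument should replace yours:
\begin{itemize}
\item Let $c_1 \le s$ and $c_1'$ be the global minimum counts before and after an accepted move within phase $k\ge 3$. The lexicographic condition gives $c_1' \le c_1$.
\item If $c_1'$ is attained by a non-solitary color, then $s' = c_1' \le s$.
\item Otherwise $c_1' = |P|$, where $P$ is the part that is monochromatic after the move; it is unique because $k \ge 3$. If $P$ had held two colors before, both would have size at least $s$, so $|P| \ge 2s > c_1'$, a contradiction.
\item Hence the same part $P$ is monochromatic before and after. For sorted vectors, $u \le v$ iff $u$ has more copies of the smallest value whose multiplicities differ. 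So the order is preserved when the common entry $|P|$ is deleted. The counts in the other part therefore decrease lexicographically, giving $s' \le s$.
\end{itemize}
The paper simply asserts this monotonicity without proof.

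Your main summation is already valid, so the fallback is unnecessary. If neither part is monochromatic then $s_k \le n/k$. Otherwise the $k-1$ non-solitary colors share fewer than $n$ vertices, so $s_k < n/(k-1)$. Since $\sum_{j=1}^{n}\log(n/j) = n\log n - \log n! \le n$, your estimate yields $\mathcal{O}(n^3)$, which implies the statement. The paper reaches the same sum but bounds it with Jensen's inequality by $n\log H_n$, where $H_n$ is the $n$-th harmonic number. That step is the only source of its extra $\log\log n$ factor, and your direct estimate shows the factor is not needed.
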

\begin{proof}
To prove this, we first regard the time it takes to remove one color from the graph. 
Let $\col_t$ be the least frequent color whose partition is not yet uniformly colored at time $t$. This is the color we seek to remove from the graph. We introduce a stochastic process $(X_t)_{t \in \N}$ denoting the number of vertices of color $\col_t$ currently in the graph. Therefore, $X_t$ reaching $0$ corresponds to a color being removed from the graph.

\begin{lemmaE}[][restate,end,category=betterunbiased] \label{lem:remove_all_vertices}
Starting from any search point, it takes at most $en^2 + en^2\cdot\log \Ex{X_0}$ fitness evaluations in expectation to remove the least frequent color whose partition is not yet uniformly colored from the graph.
\end{lemmaE}
We prove this using the \emph{Multiplicative Drift Theorem} which can be found as Theorem (2.5) in \cite{kotzing_theory_2024} and follows from \cite{doerr_multiplicative_2012}. Here, we take advantage of the fact that \textsc{RankedColors} ensures that $X_t$ can never increase.
\begin{proofE}

We use Theorem \ref{thm:mult_drift} [\nameref{thm:mult_drift}] to prove this. 
Let $A$ be the event that exactly one vertex is flipped. Then we have
\begin{align*}
    \Prob{A} =~&\binom{n}{1}\cdot \frac{1}{n} \cdot \left(\frac{{n-1}}{n}\right)^{n-1} 
    =~n\cdot \frac{1}{n} \cdot \left(1 - \frac{1}{n}\right)^{n-1},
\end{align*}
as we need to choose one out of $n$ vertices, flip it and leave the remaining $n - 1$ vertices unchanged.

As $\left(1 - \frac{1}{n}\right)^{n-1}$ converges towards $\frac{1}{e}$ from above, $\Prob{A} \geq \frac{1}{e}$.

Second, assuming we flip only one vertex, we have equal opportunity of flipping either of the colors. Therefore, the probability of that vertex being one of the relevant vertices is $\frac{X_t}{n}$. Additionally, such a flip is only accepted, if we flip to a color such that no monochromatic edge is created. As a loose bound, we assume the worst case \textendash\ that we have $n$ colors and only one meets this requirement. This leads to the probability of $X_t$ decreasing by $1$ being greater than or equal to the probability of flipping exactly one vertex with color $\col_t$ to the only possible color:
\begin{align*}
\Prob{X_t - X_{t+1} = 1\mid X_0, \dots, X_t}
\ge \frac{1}{e} \cdot \frac{X_t}{n} \cdot \frac{1}{n}.
\end{align*}

It further holds that
\begin{align*}
\Prob{X_t - X_{t+1} < 0\mid X_0, \dots, X_t} = 0,
\end{align*}

as the fitness function \textsc{RankedColors} will never accept a change that increases $X_t$, as $\col_t$ is the least frequent color whose number of appearances we can still change. Therefore, we get
\begin{align*}
&\Ex{X_t - X_{t+1} \mid X_0, \dots, X_t} \\
\ge~&\Prob{X_t - X_{t+1} = 1\mid X_0, \dots, X_t} \cdot 1 \\
\ge~& \frac{X_t}{en^2}.
\end{align*}

Let $T = \min \{ t \in \N \mid X_t = 0 \}$. Theorem \ref{thm:mult_drift} [\nameref{thm:mult_drift}] yields that
\begin{align*}
\Ex{T} \le~&\frac{1 + \log \Ex{X_0}}{\frac{1}{en^2 }}
=~(1 + \log \Ex{X_0}) \cdot en^2 \\
=~& en^2 + en^2\cdot\log \Ex{X_0}
\end{align*}
which proves the lemma.
\end{proofE}

Now, we prove that the total run time lies in $\mathcal{O}(n^{3} \log \log n)$. We note that we have to remove a total of $n-2$ colors from the graph. While at least $k$ colors are left in the graph, the rarest color is used for at most $\frac{n}{k}$ vertices, so we start with $X_0 \le \frac{n}{{k}}$.

To obtain an upper bound on the run time of the \ooea, we consider the successive removal of all colors of the graph and use Lemma~\ref{lem:remove_all_vertices} to bound the time of each step. Let $T$ be the time it takes to reduce $n$ colors to $2$. We use Jensen's Inequality to prove that $\Ex{T}\le en^3 +en^3 \cdot \log(\sum_{k = 1}^n\frac{1}{k})$. 
\begin{proofE}[end, category=betterunbiased, text proof={Proof of Theorem \ref{the:run time_bff_kmn}}]
We have
\begin{align*}
    \Ex{T} &~\leq \sum_{k = 2}^n \left(en^2 + en^2 \cdot \log \Ex{X_0} \right)
    ~= \sum_{k = 3}^n en^2 + \sum_{k = 3}^n en^2\log \Ex{X_0} \\
    &\leq en^3 + \sum_{k = 3}^n en^2\log \Ex{X_0}
    ~\leq en^3 + \sum_{k = 3}^n en^2 \cdot \log \frac{n}{k} \\
    &= en^3 + en^2 \sum_{k = 3}^n \log \frac{n}{k}
    ~\leq en^3 + en^2\sum_{k = 1}^n \log \frac{n}{k} \\
    &= en^3 + en^2 \cdot n \cdot\frac{1}{n} \sum_{k = 1}^n \log \frac{n}{k}.
\intertext{
As the $\log$ is concave, with Jensen's Inequality it holds that
}
    & \leq en^3 + en^2\cdot n \cdot \log(\frac{1}{n}\sum_{k = 1}^n \frac{n}{k}) \\
    & = en^3 + en^3 \cdot \log(\frac{n}{n}\sum_{k = 1}^n \frac{1}{k}) 
     ~= en^3 +en^3 \cdot \log(\sum_{k = 1}^n\frac{1}{k}).
\end{align*}

\end{proofE}

With $\sum_{k=1}^n\frac{1}{k} \in \mathcal{O}(\log n)$, this proves the theorem. 
\end{proof}

\subsection{Lower Bound for Complete Bipartite Graphs}
\label{sec:11ea_cbg_lb}

We now prove a lower bound for the run time of the \ooea for coloring a complete bipartite graph using the fitness function $\textsc{RankedColors}$ to show that it takes strictly more time in expectation in comparison with our gray-box operator.

\begin{theoremE}[][restate,end,category=betterunbiased]  \label{thm:ooea_bbff_kmn_lower}
    The \ooea using $\textsc{RankedColors}$ as fitness and starting with each vertex colored in a unique color on any connected bipartite graph with $n$ vertices has a probability in $2^{-\Omega(n)}$ of reaching an optimal coloring in at most $n^{2}$ fitness evaluations. 
\end{theoremE}
To prove this, we bound the time it takes to color all vertices in the bigger partition in a single color, which is a lower bound on the time it takes to find an optimal coloring for the whole graph.

\begin{proofE}
    Let $N\le n$ be the greater partition of the $K_{M,N}$, meaning $N \geq M$ and $N\in \Theta(n)$. We denote this partition by $V_N$.

    Let $A_{t, i, v}$ denote the event that a mutation attempted to flip the vertex $v \in V_N$ to the color $i$ at least once by time $t$. We note that, as the probability of choosing $v$ for mutation and the probability of choosing color $i$ to mutate to are both equal to $\frac{1}{n}$, for $t=1$
    \begin{align*}
        \Prob{A_{1, i, v}}\le \frac{1}{n^{2}}.
    \end{align*}
     Since the mutations are independent, the probability of not attempting to flip $v$ to color $i$ within $t$ steps is $(1-\frac{1}{n^{2}})^{t}$ and it thereby holds that 
     \begin{align*}
         \Prob{A_{t, i, v}}\le 1-\left(1-\frac{1}{n^{2}}\right)^{t}.
     \end{align*}

     Let $A_{t,i}$ denote the event that a mutation attempted to flip all vertices in $V_N$ to color $i$ at least once by time $t$. Since all events $A_{t, i, v}$ are independent, it holds that

     \begin{align*}
         \Prob{A_{t,i}} &= \prod_{v\in V_N} \Prob{A_{t, i, v}} 
         \le\prod_{v\in V_N}1-\left(1-\frac{1}{n^{2}}\right)^{t}
         = \left(1-\left(1-\frac{1}{n^{2}}\right)^{t}\right)^{N}. 
     \end{align*}

    We define $A_t$ as the event that after $t$ steps, all vertices in $V_N$ will be colored in any one color, meaning $A_t$ is bounded from above by the union of all events $A_{t, i}$. With $n$ possible colors and using union bound, $$\Prob{A_{t}}\le \sum_{i=1}^{n}\Prob{A_{t,i}}.$$

    Now, let $t=n^{2}$. Then $(1-\frac{1}{n^{2}})^{t}$ converges towards a constant and we can therefore conclude that there exists a constant $c$ with $0<c<1$ such that, for suitably large $N$, it holds that $\Prob{A_{n^{2},i}}\le c^{N}.$

    With this, we get $\Prob{A_{n^{2}}} \le N \cdot  c^{N},$ which lies in $2^{-\Omega(N)}$ and with $N \in \Theta(n)$ also in $2^{-\Omega(n)}$, proving the theorem. 

\end{proofE}

\subsection{Upper Bound for Paths}

We now prove an upper bound for the run time of the \ooea using the fitness function $\textsc{RankedColors}$ on paths.  First, we prove that it takes $\mathcal{O}(n^{2}(\log n)^{2})$ to reduce the number of colors in the path to $3$. Then, we prove that it takes $\mathcal{O}(n^{7})$ to reduce the number of colors to $2$.

\begin{theoremE}[][restate,end,category=betterunbiased] \label{the:run time_bff_kmn2}
In expectation, the \ooea using the fitness function $\textsc{RankedColors}$ and starting with each vertex colored in a unique color requires $\mathcal{O}(n^{2}(\log n)^{2})$ fitness evaluations to reduce the total number of colors used in a path with $n$ vertices to $3$.
\end{theoremE}
Let $k$ denote the number of colors in use in the graph. We first note that for any given vertex of the currently least frequent color, we are always able to perform a flip. This follows from the fact that we have more colors currently present in the graph ($k \ge 4$, so at least $3$ colors in addition to whichever color the vertex is colored in) than possible neighbors (maximum of $2$), meaning that there is always one color within the graph left to flip to without creating an improper coloring. Therefore, the proof for this theorem is very similar to the one given in Theorem \ref{the:run time_bff_kmn}, with the difference in run time caused by the constant degree of vertices in a path.

\begin{proofE}

We first regard the time it takes to remove one color from the graph. 

Let $\col_t$ be the least frequent color. This is the color we seek to remove from the graph. We introduce a stochastic process $(X_t)_{t \in \N}$ denoting the number of vertices of color $\col_t$ in the graph at time $t$. Let $k$ denote the total number of colors in use in the graph.

\begin{lemma} \label{lem:remove_all_vertices2}
In expectation, it takes $(1 + \log \Ex{X_0}) \cdot n^{2} \cdot \frac{e}{k-3}$ steps to remove all vertices of one color from the graph.
\end{lemma}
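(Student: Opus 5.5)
We follow the proof of Lemma~\ref{lem:remove_all_vertices} and apply the multiplicative drift theorem (Theorem~\ref{thm:mult_drift}) to the process $(X_t)_{t\in\N}$, which counts the vertices of the least frequent color $\col_t$. The one real change is that, on a path with $k\ge 4$ colors, we can bound the number of acceptable target colors from below, instead of assuming the worst case of a single admissible color.

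\textbf{Step 1: $X_t$ never increases.} Every accepted offspring has no monochromatic edges and never uses more colors. $\textsc{RankedColors}$ compares the least frequent color counts first, so it rejects any offspring in which the count of the rank-$1$ color grows. The only way the minimum count can go up is if the old least frequent color vanishes, and then the phase has ended. So until $X_t=0$ we have $\Prob{X_{t+1}>X_t \mid X_0,\dots,X_t}=0$.

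\textbf{Step 2: a lower bound on the probability of progress.} Let $A$ be the event that exactly one vertex is mutated; as before, $\Prob{A}\ge 1/e$. Given $A$, the mutated vertex has color $\col_t$ with probability $X_t/n$. It is then recolored uniformly into $[n]$, so any particular color is chosen with probability $1/n$.

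A vertex on a path has at most $2$ neighbors. Among the $k$ colors present, we exclude the vertex's own color and the at most $2$ neighbor colors, which leaves at least $k-3$ colors. Recoloring to any of these keeps the coloring proper and does not add a color. It lowers the count of $\col_t$ by one and raises the count of a more frequent color.

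We should check that $\textsc{RankedColors}$ accepts this move. The ranked tuple decreases at the first position where it changes, namely the count of the least frequent color. If there are ties among least frequent colors, the sorted tuple still decreases lexicographically. It is cleanest to define $X_t$ as the minimum color count, which handles ties. Together this gives
\begin{align*}
\Ex{X_t - X_{t+1} \mid X_0,\dots,X_t} \ge \frac{1}{e}\cdot\frac{X_t}{n}\cdot\frac{k-3}{n} = X_t\cdot\frac{k-3}{en^2}.
\end{align*}

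\textbf{Step 3: apply multiplicative drift.} With $\delta=(k-3)/(en^2)$, Theorem~\ref{thm:mult_drift} gives $\Ex{T}\le (1+\log\Ex{X_0})\cdot n^2\cdot \frac{e}{k-3}$, as claimed. One subtlety is that $k$ could in principle drop during the phase, but only when a color vanishes, and that ends the phase. So $k$ is constant throughout, and so is the drift constant.

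\textbf{Main obstacle.} The hardest step is verifying that a flip to a more frequent color is always accepted lexicographically, especially when $\col_t$ is tied with other colors. This step is what justifies counting all $k-3$ target colors and not just one. The fix is to track the multiset of counts, which is exactly what Definition~\ref{def:rank} captures.
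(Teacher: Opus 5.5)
Your proposal is correct and follows the paper's proof essentially step for step. Both apply the multiplicative drift theorem to $X_t$, using two facts: \textsc{RankedColors} never lets $X_t$ increase, and a single-vertex mutation that recolors a vertex of the least frequent color to one of at least $k-3$ non-conflicting present colors occurs with probability at least $\frac{1}{e}\cdot\frac{X_t}{n}\cdot\frac{k-3}{n}$. Your extra care about ties (taking $X_t$ as the minimum color count) and about $k$ staying fixed until the phase ends is handled correctly, and the paper leaves both points implicit.
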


\begin{proof}
We use Theorem \ref{thm:mult_drift} [\nameref{thm:mult_drift}] to prove this.

Let $A$ be the event that exactly one vertex is flipped. We have
\begin{align*}
    \Prob{A} =~&\binom{n}{1}\cdot \frac{1}{n} \cdot \left(\frac{{n-1}}{n}\right)^{n-1}
    =~n\cdot \frac{1}{n} \cdot \left(1 - \frac{1}{n}\right)^{n-1}
\end{align*}
as we need to choose one out of $n$ vertices, flip it and leave the remaining $n - 1$ vertices unchanged. 
As $\left(1 - \frac{1}{n}\right)^{n-1}$ converges towards $\frac{1}{e}$ from above, $\Prob{A} \geq \frac{1}{e}$.

The probability of that vertex being one of the $X_t$ relevant vertices is $\frac{X_t}{n}$. With the probability of $X_t$ decreasing by $1$ being greater than or equal to the probability of performing a single mutation which decreases $X_t$
\begin{align*}
\Prob{X_t - X_{t+1} = 1\mid X_0, \dots, X_t}
\ge \frac{1}{e} \cdot \frac{X_t}{n} \cdot \frac{k-3}{n},
\end{align*}

where $\frac{k-3}{n}$ denotes the probability of the flip not worsening the fitness (for which we must choose one of the at least $k-3$ colors not currently bordering the vertex we are flipping).

Furthermore, we have
\begin{align*}
\Prob{X_t - X_{t+1} < 0\mid X_0, \dots, X_t} = 0,
\end{align*}
as the fitness function used will never accept a change that increases $X_t$, which is the least frequent color. Therefore, we get
\begin{align*}
&\Ex{X_t - X_{t+1} \mid X_0, \dots, X_t} \\
\ge~&\Prob{X_t - X_{t+1} = 1\mid X_0, \dots, X_t} \cdot 1
\ge~\frac{1}{e} \cdot \frac{X_t}{n} \cdot \frac{k-3}{n}.
\end{align*}

Let $T = \min \{ t \in \N \mid X_t = 0 \}$. Theorem \ref{thm:mult_drift} [\nameref{thm:mult_drift}] yields that
\begin{align*}
\Ex{T} \le~&\frac{1 + \log \Ex{X_0}}{\frac{1}{n \cdot e} \cdot \frac{k-3}{n}} 
=~(1 + \log \Ex{X_0}) \cdot n^{2} \cdot \frac{e}{k-3}
\end{align*}
which proves the lemma. 
\end{proof} 

Now, we prove that the total run time lies in $\mathcal{O}(n^{2}(\log n)^{2})$.

For this, we note that we have to remove a total of $n-3$ colors from the graph. For each number of total colors ${k}$, the rarest color has at most $n/k$ occurrences, leading to $X_0 \le \frac{n}{{k}}$.

To obtain an upper bound on the run time of the \ooea we consider the successive removal of all colors of the graph and use Lemma~\ref{lem:remove_all_vertices2} to bound the time of each step. Let $T$ be the time it takes to reduce $n$ colors to $2$. Since $X_0 \leq \frac{n}{k}$ we also have $\Ex{X_0} \leq \frac{n}{k}$. We therefore get
\begin{align*}
    \Ex{T} &\leq \sum_{k = 4}^n (1 + \log \Ex{X_0}) \cdot n^{2} \cdot \frac{e}{k-3} \\ 
    &= \sum_{k = 4}^n \frac{e}{k-3}n^{2} + \sum_{k = 4}^n \frac{e}{k-3}n^{2}\log \Ex{X_0} \\
    &\le en^2 \sum_{k = 1}^{n} \frac{1}{k}+ \sum_{k = 4}^n \frac{e}{k-3}n^{2} \log \frac{n}{k} \\
    &\le en^2 \sum_{k = 1}^{n}\frac{1}{k} + en^{2} \sum_{k = 1}^n \frac{1}{k}\log \frac{n}{k} \\
    &\le en^2 \sum_{k = 1}^{n}\frac{1}{k} + en^{2} \sum_{k = 1}^n \frac{1}{k}\log n \\
    &= en^2 \sum_{k = 1}^{n}\frac{1}{k} + en^{2} \log n \sum_{k = 1}^n \frac{1}{k} 
\end{align*}

With $\sum_{k=1}^n\frac{1}{k} \in \mathcal{O}(\log n)$, this lies in $\mathcal{O}(n^{2}(\log n)^{2})$, which proves the theorem. 
\end{proofE}

Now, it remains to prove that it takes $\mathcal{O}(n^{7})$ fitness function evaluations to perform the final reduction of the number of colors from $3$ to $2$.

\begin{theorem} \label{the:3_to_2_bff}
    Reducing the total number of colors in use in the path from $3$ to $2$ takes $\mathcal{O}(n^{7})$ fitness evaluations in expectation, using the \ooea and the fitness function $\textsc{RankedColors}$. 
\end{theorem}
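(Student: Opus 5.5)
The plan is to use the least frequent color as a potential that never increases, and to show that from every proper $3$-coloring of $P_n$ some mutation of at most three positions lowers it by one. Fix a time at which the path is properly $3$-colored, and call the current rank-$1$ color \emph{red}. Let $X_t$ be the number of vertices of the least frequent color at time $t$. Since the path is bipartite, a proper $2$-coloring exists, and $\textsc{RankedColors}$ never accepts a monochromatic edge. As long as three colors are present, $\textsc{RankedColors}[n-1]$ is exactly $X_t$, and it is the first entry after the leading zeros. Hence, as in the proof of Lemma~\ref{lem:remove_all_vertices}, $X_t$ can never increase, and any proper offspring with smaller $X$ is accepted. Note that the identity of the red color may change over time, but $X_t$ is defined as the minimum count, so this does not matter. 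Because $X_0 \le n/3$, it suffices to show that from every proper $3$-coloring some specific mutation occurs with probability $\Omega(n^{-6})$ and yields a proper coloring with one red vertex fewer. Each decrement then takes $\mathcal{O}(n^6)$ expected steps, and summing over at most $n/3$ decrements gives $\mathcal{O}(n^7)$.

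The core is a local case analysis on the red vertices. First, if some red vertex is an endpoint of the path, or has two neighbors of the same color, then a single flip of that vertex to a non-red color different from its neighbors works. This has probability at least $\frac{1}{en}\cdot\frac{1}{n}$.

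Otherwise every red vertex is interior and sees one green and one blue neighbor. There are $X_t \le n/3$ red vertices among $n$, so by averaging some two consecutive red vertices are separated by a gap of at most two non-red vertices.
\begin{itemize}[leftmargin=*]
\item \textbf{Gap of length one}, pattern $a\,R\,X\,R\,b$. Both red vertices see one green and one blue neighbor, so $a \neq X$ and $b \neq X$. Then flipping $X$ to the third color leaves the left red vertex with two neighbors of the same color $a$. A second flip recolors that red vertex to the color of $X$, which differs from $a$. This is two simultaneous flips.
\item \textbf{Gap of length two}, pattern $u\,R\,B\,G\,R$ up to renaming. If $u = G$, we swap the gap to $G\,B$ and recolor the left red vertex to $B$, giving $G\,B\,G\,B\,R$. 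If $u = B$, we do the symmetric move on the right red vertex, whose outer neighbor must then be $B$ (since every red vertex sees both other colors). This is at most three simultaneous flips.
\end{itemize}
Each of these moves is a prescribed mutation of at most three positions to prescribed colors. Its probability is at least $(1/n)^3\cdot(1/n)^3\cdot(1-1/n)^{n-3} \ge 1/(en^6)$. The result is proper and reduces the red count by one, so it is accepted.

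The main obstacle I expect is making this case analysis airtight. This means handling the boundary of the path, small $n$, and the gap-of-length-two case where the outer neighbors interact. It also means checking that every proposed multi-flip really produces a proper coloring, and that the number of colors does not drop in an unintended way; if it does drop to two we are done anyway. Once this "constant-size improving move always exists" lemma is in place, the rest is the simple waiting-time argument above. The bound is $\sum_{j \le n/3} e n^6 = \mathcal{O}(n^7)$, using that $X_t$ is non-increasing and that each of its at most $n/3$ decrements has expected waiting time $\mathcal{O}(n^6)$.
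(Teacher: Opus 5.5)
\textbf{The one-step lemma is false.} The bound $X_t \le n/3$ is an \emph{upper} bound on the number of red vertices. It therefore bounds the gaps between red vertices from below, not from above, so your averaging step points the wrong way.

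For instance, take $B\,R\,G\,B\,G\,R\,B\,G\,B\,R\,G\cdots$, with red at every fourth position and non-red endpoints. Every red vertex is interior with two differently colored neighbors, and all gaps have length three, so none of your cases applies. For $X_t = 1$ there is no pair of red vertices at all.

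The $X_t = 1$ case cannot be repaired by a finer case analysis. Put the single red vertex at position $p \approx n/2$ with neighbors $G$ and $B$. All other vertices use only $G$ and $B$, so both sides alternate, in opposite phase: $\cdots B\,G\,R\,B\,G\cdots$. Lowering $X_t$ from $1$ means producing a proper $2$-coloring in a single mutation. The parity class of $p\pm 1$ is green to the left of $p$ and blue to the right. Hence every proper $2$-coloring differs from the current one in at least $\min\{\lceil (p-1)/2\rceil, \lceil (n-p)/2\rceil\} = \Omega(n)$ positions. From this configuration every strictly improving mutation is exponentially unlikely, so no ``constant-size improving move always exists'' lemma can hold. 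Your $\mathcal{O}(n^6)$ per decrement cannot come from waiting for one improving mutation. This is exactly the configuration used in the paper's lower bound, Theorem~\ref{thm:ooea_bff_paths_lowerbound}.

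\textbf{The missing idea is to use neutral moves on the plateau of equal fitness.} Swap the colors of a red vertex and one of its neighbors; this needs two prescribed flips and has probability $\Theta(n^{-4})$. The result stays proper unless another red vertex sits two positions away in that direction. All color counts, and hence $\textsc{RankedColors}$, are unchanged, so the (1+1)~EA accepts the swap. The red vertex therefore performs an unbiased random walk. It needs $\mathcal{O}(n^2)$ such steps in expectation, i.e.\ $\mathcal{O}(n^6)$ evaluations, to reach an endpoint. There a single flip, with probability $\Theta(n^{-2})$, removes it.

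This is the paper's argument. It tracks the red vertex closest to an endpoint, whose swaps towards that endpoint are never blocked by another red vertex. It applies the unbiased-random-walk-with-one-barrier bound to get $\mathcal{O}(n^6)$ for reaching the endpoint, uses a restart argument for the final flip, and multiplies by the at most $n/3$ decrements.

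Your monotonicity argument for $X_t$ and the final summation carry over unchanged. Only the one-step lemma has to be replaced by such a random-walk bound on the plateau. Your gap-one and gap-two moves are valid shortcuts when those gaps occur, but they are not always available.
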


\begin{proof}
Using Theorem~\ref{the:run time_bff_kmn2} we assume that we have reached a state with only $3$ colors in use and resume the analysis from here.

Let $\col_t$ again denote the least frequent color at a time~$t$. We introduce a stochastic process which we call $(X_t)_{t \in \N}$ and which denotes the number of vertices of color $\col_t$ for a time $t$. 

We note that reducing $X_t$ is more complicated than previously, as flipping instances of $\col_t$ to a different color is no longer guaranteed to be possible. If a vertex colored in $\col_t$ has two differently-colored neighbors, we cannot perform a flip, as any flip would lead to an improper coloring. Instead, however, we can perform a swap of this color with its neighbor. This is achieved by flipping both the instance of $\col_t$ and its neighbor to the color of the other so that the colors effectively switch places. Using this, we must then move the instance of $\col_t$ to a position from which it can be flipped. The only positions where this is guaranteed to be possible are on either endpoint of the path, as those vertices have only one neighbor (and, therefore, neighboring color) each.  

We now first analyze the amount of time it takes to remove one vertex of color $\col_t$.

{Let $l$ and $r$ be the leftmost and rightmost vertices colored in $\col_t$ respectively. Let $v \in \{l,r\}$, such that the distance to the closest endpoint (meaning the left endpoint for the left vertex and the right endpoint for the rightmost vertex) is minimal. Without loss of generality, assume that this holds for the rightmost vertex, meaning $v=r$. We refer with $p$ to the color particle corresponding to $v$ as it moves through the path. Now, we prove that it takes $\mathcal{O}(n^{6})$ fitness evaluations to move $p$ to the rightmost point of the path, meaning position $n$.}

In the following proofs, we assume that $p$ always has two differently colored neighbors and that a swap is therefore always possible. This assumption is valid, as otherwise we would already have reached a position from which a flip is possible.

Further, we note that while swaps to the right are always possible, other vertices colored in $\col_t$ might block swaps to the left, as such a swap might create monochromatic edges, making the coloring improper. However, as we wish to move to the right endpoint of the path, this only improves our run time, as it blocks undesired swaps. Therefore, it is a valid assumption for an upper bound that all swaps are always possible. We will work under this assumption in the following. 

\begin{lemmaE}[][restate,end,category=betterunbiased] 
In expectation, it takes $\mathcal{O}( n ^{6})$ fitness evaluations to move $p$ to the right endpoint of the path. 
\end{lemmaE}

We use an upper bound for the run time of the \emph{Unbiased Random Walk} with one barrier, which can be found as Theorem (4.2) in \cite{kotzing_theory_2024}, to prove this.
\begin{proofE}
 We assume the worst case of $p$ starting at position $\frac{n}{2}$, which is the furthest distance from the right endpoint we can start, as otherwise, we would have chosen to move to the left instead.

We introduce a random process $(Y_{t'})_{t'\in \N}$, denoting the position of $p$ in the path. Let $T'$ denote the time it takes for $p$ to reach position $n$ (the right endpoint of the path) and let $t' \le T'$.

We now denote with $A$ the event that $p$ and one of its neighbors are swapped and the operation does not worsen the fitness and note that
\begin{align*}
    \Prob{A}\ge\frac{1}{n^{4}} \cdot \left(\frac{{n-1}}{n}\right)^{n-2} \ge \frac{1}{n^{4} \cdot e}.
\end{align*}

This follows from the probability for mutating a vertex being $\frac{1}{n}$, same as the probability for flipping to the correct color, leading to a probability of $\frac{1}{n^{4}}$ of swapping the colors of the two vertices. Further, since we want to ensure that the swap does not worsen the fitness, we want no other vertex to be mutated, which happens with a probability of $\left(\frac{{n-1}}{n}\right)^{n-2}$, which converges from above towards $\frac{1}{e}$.

It holds that the expected value for $Y_{t'+1}-Y_{t'}$ is the probability of taking a step multiplied with the expected direction and size of the step, meaning the probability of it being a positive step of size $i$ minus the probability of it being a negative step of size $i$, both of which have the same probability for all $i\le n$. This leads to
\begin{align*}
&\Ex{Y_{t'+1}-Y_{t'} \mid Y_0, \dots, Y_{t'}} 
=0.
\end{align*}

The variance can be lower bounded by the probability of taking a step of size $1$, meaning swapping $p$ with either of its neighbors. Therefore
\begin{align*}
 \Var{Y_{t'+1}-Y_{t'} \mid Y_0, \dots, Y_{t'}} \ge 2\Prob{A} \ge \frac{2}{n^{4} \cdot e}.
\end{align*}

With this, it follows from Theorem \ref{thm:upper_variable_drift} [\nameref{thm:upper_variable_drift}] that
\begin{align*}
\Ex{T'\mid X_0}\le~&\frac{(n^2-(\frac{n}{2})^2)}{\frac{2}{n^{4} \cdot e}}
\le~\frac{n^{4} \cdot e}{2} \cdot n^{2}=\frac{n^{6} \cdot e}{2},
\end{align*}
which lies in $\mathcal{O}(n^{6})$, proving the lemma.
\end{proofE}

Once $p$ has reached the right endpoint of the path, we might either flip it to a different color (requiring one mutation) or swap it away again (requiring at least two mutations). Since we are more likely to perform one mutation than two, a simple restart argument shows that, in expectation, removing the particle $p$ takes at most $\mathcal{O}(n^{6})$ fitness evaluations in expectation.

We further note that the fitness function $\textsc{RankedColors}$  ensures that $\Prob{X_{t+1}> X_t \mid X_0, \dots, X_t} = 0$, meaning $X_t$ will never increase.

In the worst case, we start with $X_0 = \frac{n}{3}$, meaning that we need to remove the color $\mathcal{O}(n)$ times, each time taking $\mathcal{O}(n^{6})$ fitness evaluations in expectation. This leads to an expected total run time in $\mathcal{O}(n^{7})$.
\end{proof}

\subsection{Lower Bound for Paths}

We now prove a lower bound of the run time on a path with $n$ vertices. 

\begin{theoremE}[][restate,end,category=betterunbiased]  \label{thm:ooea_bff_paths_lowerbound}
    There exists a $3$-coloring of a path of length $n$ such that the \ooea using $\textsc{RankedColors}$ as fitness requires $\Omega (n^{6})$ fitness evaluations to remove the least frequent color from the graph.
\end{theoremE}

To prove this, we take a situation where the number of colors has already been reduced to $3$ and only one instance of the rarest color remains, which is currently in the middle of the path and has two differently-colored neighbors. As in Theorem \ref{the:3_to_2_bff}, this makes flipping the instance to a different color without changing any other colors in the same mutation impossible, as it would either increase the number of colors in the path or create an improper coloring. 

There are two ways to resolve this situation: The first is to re-color all vertices between the instance and one endpoint of the path such that the instance now has two neighbors of the same color. However, doing this in a single mutation is exponentially unlikely in the distance to the endpoint $d$, as it would require flipping all $d$ vertices to a different color (probability $1/n^d$) and choosing the correct color to flip to for each vertex (probability $1/n^d$). Therefore, we can disregard this option, as we start with $d \in \Theta (n)$.

The second option is to move the instance of the rarest color to a point from which removing it by flipping only a single vertex to a different color is possible. This holds for either endpoint of the path (as those are the only positions where we are guaranteed to have only one neighboring color). We can move the instance by flipping both it and its neighbor to the color of the other such that the colors effectively switch places. The probability of such a swap lies in $\Theta(1/n^4)$, as it requires flipping two vertices to a different color (probability $1/n^2$) and choosing the correct color each time (probability $1/n^2$). Note that swapping more than two colors in one mutation grows increasingly more unlikely for more colors \textendash\ even swapping the colors of three specific vertices has a probability of only $\Theta(1/n^6)$.

Using this, we then perform an unbiased random walk (using the fact that swaps into either direction are equally likely) to model the movement of the instance of the rarest color through the path. We derive a lower bound on the expected time it takes for the instance to reach either endpoint of the path, which corresponds to the time to remove it, from a drift theorem for the Unbiased Random Walk given in \cite{kotzing_first-hitting_2019} as Corollary~27. 
\begin{proofE}
    To prove this, assume a situation where the number of colors has already been reduced to $3$. Let $\col_t$ denote the rarest color. We assume that we have only one vertex of color $\col_t$ left in the path, which is in the middle of the path, and that both neighboring vertices of that vertex have different colors.

    First, we make several observations about the implications of our situation:

    \begin{itemize}[leftmargin=*]
        \item The nature of the fitness function $\textsc{RankedColors}$ ensures that no second instance of $\col_t$ can ever appear in the path. This specifically means that when changing colors, no vertex color can ever be changed to $\col_t$.
        \item All vertices except for those at the endpoints of the path and those bordering the instance of $\col_t$ have two vertices of the same color as neighbors. This follows from the coloring being proper. Therefore, and since we cannot flip any vertices to $\col_t$, no mutation where only one vertex is mutated can ever result in a proper coloring.
        \item Assuming the instance of $\col_t$ has a distance greater than $2$ to either endpoint of the path, the only mutation that does not worsen the fitness where $2$ vertices are flipped corresponds to moving the instance of $\col_t$ to a neighboring vertex, which is accomplished by flipping the vertex colored in $\col_t$ to one neighboring color and simultaneously flipping that neighboring color to $\col_t$. All other mutations where $2$ vertices are flipped would yield an improper coloring or worsen the fitness: 
        
            Any flip of another color to $\col_t$ would worsen the fitness and therefore be rejected. Therefore, in any mutation that chooses two vertices, neither of which is initially colored in $\col_t$, the only other option is to create an improper coloring by flipping to a color already neighboring that vertex.
            Swapping the instance of $\col_t$ with anyone but its neighbors also creates an improper coloring, since that instance has two differently-colored neighbors. 
        
        \item The same logic applies to all mutations flipping more vertices, with the only mutation of size $i \in \N_{> 2}$ that leads to a proper coloring corresponding to $i - 1$ successive swaps (again assuming the instance of $\col_t$ has a distance greater than $i$ to either endpoint of the path).
        \item We note that the probability of flipping all vertices between the instance and an endpoint with a distance of $d$ vertices in a single mutation is equal to the probability of performing $d-1$ specific swaps (which would get the vertex to the position one removed from the endpoint), as both require re-coloring $d$ vertices to the correct color. Thus, we can disregard the probability of flipping all vertices between the instance and the endpoint (and, likewise, the probability of removing the instance directly, which would require $d+1$ vertices to be re-colored and corresponds to moving the instance directly to the endpoint of the path), as it does not offer an asymptotical improvement compared to swapping.
        \item Once the instance of $\col_t$ reaches either endpoint of the path, it can be flipped to a different color.
    \end{itemize}
    It follows that the only way to remove that instance of $\col_t$ from the path is to first move it to either endpoint of the path, which can only be accomplished by performing mutations that essentially swap $\col_t$ with its neighbors.

     We introduce a stochastic process $(Y_t)_{t \in \N}$, which denotes the position of that one instance of $\col_t$ in the path, with $Y_0 = \frac{n}{2}$.
    
    Now, we seek to prove the amount of time $T$ it takes for the instance of $\col_t$ to reach either endpoint of the path, meaning for $Y_t$ to reach either $n$ or $0$. For this, we use Theorem \ref{thm:lower_variable_drift} [\nameref{thm:lower_variable_drift}].

    We first note that for any given $1 \le i \le n$, it holds that
    \begin{align*}
    &\Prob{Y_{t+1} - Y_t = i \mid Y_0, \dots, Y_t}\\
    =&\Prob{Y_{t+1} - Y_t = -i \mid Y_0, \dots, Y_t} 
    \le\bigg(\frac{1}{n}\bigg)^{2(i+1)},
    \end{align*}
    which is the probability of mutating $i+1$ specific vertices (probability of $\frac{1}{n}$ for each mutation) and flipping each to the correct color (probability of $\frac{1}{n}$ for each flip), while not taking into account what happens to the other vertices.

    Since $\Prob{Y_{t+1} - Y_t = i \mid Y_0, \dots, Y_t} =\Prob{Y_{t+1} - Y_t = -i \mid Y_0, \dots, Y_t}$, we get
   $\Ex{Y_{t+1} - Y_t = i \mid Y_0, \dots, Y_t}=0.$

    For the variance, it follows that
    \begin{align*}
        \Var{Y_{t+1} - Y_t = i \mid Y_0, \dots, Y_t} \le \sum_{i=1}^{n} \bigg(\frac{1}{n}\bigg)^{2(i+1)} \cdot i^{2}.        
    \end{align*}

    Further, we know that $\Var{Y_{t+1} - Y_t = i \mid Y_0, \dots, Y_t}>0$, since there is a positive probability of successfully performing a swap. 

    It follows from Theorem~\ref{thm:lower_variable_drift} [\nameref{thm:lower_variable_drift}] that
    \begin{align*}
        \Ex{T \mid Y_0} 
        &\ge \frac{\left(n-\frac{n}{2}\right)\left(\frac{n}{2}\right)}{\sum_{i=1}^{n}\left(\frac{1}{n}\right)^{2(i+1)} \cdot i^{2}}
        = \frac{\frac{n^{2}}{4}}{\sum_{i=1}^{n} \left(\frac{1}{n}\right)^{2(i+1)}\cdot i^{2}} \\
        &=\frac{\frac{n^{2}}{4}}{\frac{1}{n^{4}} +\sum_{i=2}^{n} \left(\frac{1}{n}\right)^{2(i+1)}\cdot i^{2}} 
        \ge \frac{\frac{n^{2}}{4}}{\frac{1}{n^{4}} + \sum_{i=2}^{n} \left(\frac{1}{n}\right)^{2(2+1)}\cdot n^{2}} \\
         &= \frac{\frac{n^{2}}{4}}{\frac{1}{n^{4}} + \sum_{i=2}^{n} \left(\frac{1}{n}\right)^{6}} 
        =\frac{\frac{n^{2}}{4}}{\frac{1}{n^{4}} +(n-3)\left(\frac{1}{n}\right)^{6}}.
    \end{align*}

    With the numerator in $\Theta(n^2)$ and the denominator in $\Theta\left(\frac{1}{n^4}\right)$, this lies in $\Omega (n^{6})$, proving the theorem.
\end{proofE}

%% file: sections/graybox.tex
\section{Gray-Box RLS} \label{sec:graybox}

In this section we analyze RLS with the gray-box operator on various graph classes.

\subsection{Upper Bound for Complete Bipartite Graphs} \label{sec:graybox_cbg}



We start our analysis of the run time of the gray-box operator with an upper bound on the run time on complete bipartite graphs. First, we prove an expected upper bound of $\mathcal{O}(n \log n)$ using the fitness function $\textsc{NumUsedColors}$ and then conclude that it holds likewise for $\textsc{RankedColors}$.

\begin{theorem} \label{the:gb_on_knm}
    RLS using the gray-box operator and the fitness function $\textsc{NumUsedColors}$ and starting with each vertex colored in a unique color finds an optimal coloring on the $K_{M,N}$ with $n$ vertices in $\mathcal{O}(n \log n)$ fitness evaluations in expectation.
\end{theorem}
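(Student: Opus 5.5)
The approach is to track, phase by phase, the size of the smallest color class on a side that still uses at least two colors, and to show that this size has constant negative drift. Summing the phase lengths then gives a harmonic sum. Let $V_M, V_N$ be the two parts and let $k$ be the current number of colors.

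\textbf{Structural facts.} First I would record three facts.
\begin{itemize}[leftmargin=*]
\item The start $x=(1,\dots,n)$ is proper. $\textsc{NumUsedColors}$ never accepts a monochromatic edge once none is present, so every accepted search point is a proper coloring.
\item In a proper coloring of $K_{M,N}$ the sets of colors used on $V_M$ and on $V_N$ are disjoint. Hence $\textsc{Flip}(x,c)$ on a vertex $v$ can only recolor $v$ to another color present on $v$'s own side, chosen uniformly among those colors. Such a flip never creates a conflict and never increases $k$, so it is always accepted.
\item A $\textsc{Swap}$ either creates a monochromatic edge, and is then rejected, or leaves the multiset of color classes on each side essentially unchanged. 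Consequently swaps only waste iterations, which costs at most a constant factor.
\end{itemize}
The optimum is reached exactly when each side is monochromatic. Call a side \emph{open} if it still uses at least two colors.

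\textbf{The potential.} Let $S_t$ be the size of the smallest color class lying on an open side. At most two colors lie on closed sides, namely the single color of each monochromatic side. Therefore the color $c$ attaining $S_t$ has rank $\rho \le 3$.
\begin{itemize}[leftmargin=*]
\item \emph{Decrease.} With probability at least $2^{-\rho}\cdot\frac12 \ge \frac1{16}$, the operator performs a $\textsc{Flip}$ on $c$, which lowers $S_t$ by one.
\item \emph{Increase.} The class of $c$ grows only if some color of strictly larger rank $r>\rho$ on the same side is flipped into $c$. This has probability at most
\[
\sum_{r>\rho} 2^{-r}\cdot \tfrac12 \cdot \tfrac{1}{k_{\mathrm{side}}-1} \le 2^{-\rho-1}.
\]
\end{itemize}

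\textbf{Main obstacle: the constant gap.} The step I expect to be the main obstacle is making this comparison produce a constant gap uniformly. The delicate case is $k_{\mathrm{side}}=2$, where the increase factor is $1$. My plan is to compare ranks directly. Any color that can be flipped into $c$ has rank at least $\rho+1$, so its total selection weight is at most $2^{-\rho-1}\cdot\frac12$. This is half of the decrease weight $2^{-\rho}\cdot\frac12$. The expected change of $S_t$ is therefore at most $-2^{-\rho-2}\le -\frac1{32}$ whenever $S_t>0$.

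Ties in rank, and the identity of the minimizing class changing, need care. I would handle both by defining $S_t$ as the minimum over all open-side classes. This minimum can only jump downward when the minimizing class changes. Likewise, closing a side removes classes from the minimum, which again cannot raise it above the current value in a way that breaks the drift bound.

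\textbf{Phase analysis and summation.} A phase with $k$ colors ends when $S_t$ hits $0$, which removes a color. At the start of such a phase, the open sides carry at least $k-2$ classes on at most $n$ vertices, so $S_{\text{start}} \le n/(k-2)$. The additive drift theorem then bounds the expected phase length by $\mathcal{O}(1+n/(k-2))$. Summing over $k=n,\dots,3$ gives
\[
\sum_{k=3}^{n}\mathcal{O}\!\left(1+\frac{n}{k-2}\right)=\mathcal{O}(n\log n),
\]
which proves Theorem~\ref{the:gb_on_knm}.
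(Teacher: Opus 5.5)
Your proposal is correct and follows essentially the same route as the paper. The paper also tracks the size of the rarest color whose side is not yet monochromatic, and compares the rank-weighted probabilities of shrinking versus growing it, with the same case split on whether that side has two or more colors. This yields a constant additive drift, and summing $\mathcal{O}(n/k)$ over the levels gives the harmonic $\mathcal{O}(n\log n)$ bound. Your version is somewhat more careful in places: you keep the factor $\frac12$ for choosing \textsc{Flip} over \textsc{Swap}, which the paper's drift computation drops; you argue explicitly that accepted swaps leave class sizes unchanged; and you handle ties via the minimum. The price is a looser constant, $\frac1{32}$ instead of $\frac18$.
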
 
\begin{proof}
We use $k$ to denote the total number of colors currently in use in the graph. We consider removing one color after another, decreasing ${k}$ on each fitness level $k$ and first examine the time it takes to remove the rarest color that can still be reduced on level $k$, meaning the time to move from one level to the next. We refer to a color as being able to be reduced if it is possible to reduce its number of appearances in the graph with a single flip mutation. 

Let $\col_t$ denote that color, depending on the time $t$, as it can change which color is the rarest. We introduce a stochastic process which we denote by $(X_t)_{t \in \N}$ and which, for any given time at which there are $k$ colors in use in the graph, denotes the number of vertices of color $\col_t$. If the number of colors in use in the coloring drops below $k$, we set $X_t=0$. Therefore, $X_t$ reaching $0$ corresponds to a color being removed from the graph.

Next we bound the time we spend on each level. 

\begin{lemmaE}[][restate,end,category=graybox]  \label{lem:ct_to_0}
$X_t$ will reach $0$ in at most $8 \cdot \frac{n}{{k}}$ steps in expectation. 
\end{lemmaE}

\textEnd[both, category=graybox]{To prove this, we seek to use the \emph{Additive Drift Theorem}, which can be found as Theorem (2.1) in \cite{kotzing_theory_2024} and follows from Section 3 in \cite{kotzing_first-hitting_2019}. First, however, we prove that the probability of reducing $X_t$ is always greater than the probability of increasing it.}
\begin{lemmaE}[][restate,end,category=graybox] \label{lem:ct_reduced}
    For all $t \in \N$, it holds that
    \begin{align*}
        \Prob{X_{t + 1} < X_t \mid X_0, \dots, X_t} - \Prob{X_{t + 1} > X_t \mid X_0, \dots, X_t} \ge \frac{1}{8}.
    \end{align*}
\end{lemmaE} 
\begin{proofE}
    
To prove this, let $k'$ denote the number of colors in use in the same partition as $\col_t$. We introduce two events: The event $A$ denotes another color in the same partition as $\col_t$ being picked when choosing which color to select a vertex from for mutation. This corresponds to mutating a vertex in the same partition as $\col_t$ that is not itself colored in $\col_t$. The event $B$ denotes a vertex not previously colored in $col_t$ being flipped to $\col_t$ during mutation. 

In the following, we will use the definition of rank given in Definition \ref{def:rank} and denote the rank of a color $c$ with $r_c$. 

We then differentiate between two possible cases: First, we will consider a situation where $\col_t$ is the least frequent color currently in use in the graph. In the second case, we will analyze a situation where the opposite partition is already colored in a single color, which appears less frequently than $\col_t$, making $\col_t$ the second least frequent color. We note that any other case (meaning $\col_t$ having a rank greater than $2$) is impossible, as that would require more than two partitions. 
In each of the two cases, we will differentiate again between situations where there are only two colors left in the partition $\col_t$ appears in and situations where there are more than two colors. 

\textbf{Case 1:} $\col_t$ is the least frequent color in total, so $r_{\col_t}=1$.

    Now
    we get
    \begin{align*}
      \Prob{X_t - X_{t+1} >0 \mid X_0, \dots, X_t} = \frac{1}{2^{r_{\col_t}}}=\frac{1}{2}.
    \end{align*}
    
    On the other hand, the probability of $X_t$ being increased is
    \begin{align*}
    \Prob{X_t - X_{t+1} <0 \mid X_0, \dots X_t} 
    = \Prob{A} \cdot \Prob{B \mid A},
    \end{align*}
    as it corresponds to the probability of first choosing a vertex in the same partition as $\col_t$ but not colored in $\col_t$ (event $A$) and then flipping that vertex to $\col_t$ (event $B$).
    
    We now differentiate between two further cases:
    
        \textbf{Case 1a:} There is only one other color in the same partition as $\col_t$, so $k' = 2$.
        
        It follows that $\Prob{A} \le \frac{1}{4}$, as the chance of picking that single other color is at most $\frac{1}{4}$ (since its rank must be at least $2$); and $\Prob{B \mid A} =1$, as the only color left in the graph that we can flip to (which must be a color in the same partition to avoid improper colorings) is $\col_t$. Therefore
        $$
        \Prob{X_t - X_{t+1} < 0\mid X_0, \dots, X_t} \le \frac{1}{4}.
        $$
        
        \textbf{Case 1b:} There are more than $2$ colors in the same partition as $\col_t$, so $k' > 2$.

        Since picking a vertex not colored in $\col_t$ in the same partition as $\col_t$ is a prerequisite for $X_t$ increasing, it follows that $\Prob{A}$ must be smaller than the probability of $X_t$ not decreasing, meaning
        $$
        \Prob{A} < 1- \Prob{X_t - X_{t+1} >0 \mid X_0, \dots, X_t} = \frac{1}{2}.
        $$
        Further, the probability to flip a vertex's color to $\col_t$, given that we already chose a vertex not colored in $\col_t$ in the same partition, equals $\frac{1}{k'-1}$. With $k' > 2$, we get
        \begin{align*}
          \Prob{B \mid A} =\frac{1}{k'-1} \le \frac{1}{2}.  
        \end{align*}
        Combining the two terms yields
        \begin{align*}
            \Prob{X_t - X_{t+1} <0 \mid X_0, \dots, X_t} \le \frac{1}{2} \cdot \frac{1}{2} = \frac{1}{4}.
        \end{align*}

    Therefore, in both sub-cases for case 1, it holds that 
        \begin{align*}
            &\Prob{X_t - X_{t+1} >0 \mid X_0, \dots, X_t} 
            =\frac{1}{2} 
            >\frac{1}{4} \\
            \ge~&\Prob{X_t - X_{t+1} <0 \mid X_0, \dots, X_t}.
        \end{align*}
        and especially
        \begin{align*}
            &\Prob{X_t - X_{t+1} >0 \mid X_0, \dots, X_t} \\
            -~&\Prob{X_t - X_{t+1} <0 \mid X_0, \dots, X_t} \\
            \ge~ &\frac{1}{4}.
        \end{align*}

\textbf{Case 2:} $\col_t$ is not the globally least frequent color, so $r_{\col_t}=2$.
    
    This means that the least frequent color in total lies in the other partition, but can no longer be reduced. This is the case when all vertices in that partition are already colored in the same color. $\col_t$ is therefore the least frequent color that can still be reduced.
    
    Now, we get
    \begin{align*}
        \Prob{X_t - X_{t+1} >0 \mid X_0, \dots, X_t} = \frac{1}{2^{r_{\col_t}}}\ge \frac{1}{4}.
    \end{align*}
    
    Note that the reason for $\frac{1}{4}$ being a lower bound instead of the exact value is that $\col_t$ might become the least frequent color in total, increasing its rank to $1$ and the probability of being chosen for reduction to $\frac{1}{2}$. As this case only improves our chances of moving in the correct direction and has no other effects, it can be disregarded.
    
    On the other hand, the probability of $\col_t$ being increased is, as above
    \begin{align*}
    &\Prob{X_t - X_{t+1} <0 \mid X_0, \dots, X_t} = \Prob{A} \cdot \Prob{B \mid A}.
    \end{align*}
    
    Again, we differentiate between two further cases:

\textbf{Case 2a:} There is only one other color in the same partition as $\col_t$, so $k' = 2$.

        As there is only one other color (whose rank must be $3$, as there are only $3$ colors in total in the graph) left in the same partition as $\col_t$, it follows that
        \begin{align*}
            \Prob{A} = \frac{1}{8}.
        \end{align*}
        Further, as in Case 1a,
        \begin{align*}
            \Prob{B \mid A} =1.
        \end{align*}
        Therefore
        \begin{align*}
            \Prob{X_t - X_{t+1} <0 \mid X_0, \dots, X_t} = \frac{1}{8}.
        \end{align*}

\textbf{Case 2b:} There are more than $2$ colors in the same partition as $\col_t$, so $k' > 2$.
    
        We introduce an event $C$, denoting the color of rank $1$ being chosen for reduction, and analogously, an event $D$ for the color of rank $2$, which is $col_t$, being chosen.
        
        Now, the probability of choosing another color in the same partition as $col_t$ (event $A$) is less-equal the probability of not choosing either the color of rank $1$ (event $C$) or of rank $2$ (event $D$), meaning
        \begin{align*}
        \Prob{A} 
        \le 1 - \Prob{C} - \Prob{D} 
        = 1- \frac{1}{2}- \frac{1}{4} 
        =\frac{1}{4}.
        \end{align*}
        With $\Prob{B \mid A}$ calculated as in case 1 and $k' > 2$, it holds that
        \begin{align*}
            \Prob{B \mid A} =\frac{1}{k'-1} \le \frac{1}{2}.
        \end{align*}
        It follows that
        \begin{align*}
           \Prob{X_t - X_{t+1} <0 \mid X_0, \dots, X_t} \le \frac{1}{4} \cdot \frac{1}{2} = \frac{1}{8}. 
        \end{align*}
    
    This concludes cases 2b.
    Therefore, in both sub-cases for case 2, it holds that 
    \begin{align*}
       &\Prob{X_t - X_{t+1} >0 \mid X_0, \dots, X_t} 
       \ge\frac{1}{4}
       >\frac{1}{8} \\
       \ge~&\Prob{X_t - X_{t+1} <0 \mid X_0, \dots, X_t} 
    \end{align*}
    and especially 
    \begin{align*}
        &\Prob{X_t - X_{t+1} >0 \mid X_0, \dots, X_t} - \Prob{X_t - X_{t+1} <0 \mid X_0, \dots, X_t} \\
        \ge~&\frac{1}{8},
    \end{align*}
which concludes case 2. 
Since $$\Prob{X_t - X_{t+1} >0 \mid X_0, \dots, X_t} - \Prob{X_t - X_{t+1} <0 \mid X_0, \dots, X_t}\ge\frac{1}{8}$$ holds in both cases, the lemma is proven.
\end{proofE}

Now, using this, we prove Lemma \ref{lem:ct_to_0} with the Additive Drift Theorem.
\begin{proofE} [text proof={Proof of Lemma \ref{lem:ct_to_0}}]
Let $T_k$ denote the time when the process $X_t$ reaches 0 on level $k$. 

It holds that for all $t \le T_k$, $X_t \ge 0$, as no negative amount of vertices can be colored in a given color.

Further, it follows from Lemma \ref{lem:ct_reduced} that 
\begin{align*}
    \Ex{X_t - X_{t+1} \mid X_0, \dots, X_t} \ge \frac{1}{8}.
\end{align*}

With this, it follows from Theorem \ref{thm:additive_drift} [\nameref{thm:additive_drift}] that
\begin{align*}
    \Ex{T_k} \le 8 \cdot \Ex{X_0}.
\end{align*}

We note that $X_0 \le \frac{n}{{k}}$, and therefore $\Ex{X_0} \le \frac{n}{{k}}$. Therefore, it holds that 
\begin{align*}
    \Ex{T_k} \le 8 \cdot \frac{n}{{k}},
\end{align*}
which proves the lemma.
\end{proofE}

With this result, we calculate the total run time. 
Across the entire run of the algorithm, ${k}$ decreases from $n$ to $2$ and there are $n-2$ iterations. 
Therefore, it holds that
\begin{align*}
&\sum_{{k}=2}^{n} \Ex{T_k} 
\le\sum_{{k}=2}^{n} 8 \cdot \frac{n}{{k}} 
=8n \cdot \sum_{{k}=2}^{n} \frac{1}{{k}} 
<8n \cdot  \sum_{{k}=1}^{n} \frac{1}{{k}}.
\end{align*}

With $\sum_{{k}=1}^{n} \frac{1}{{k}} \in \Theta (\log n)$, the total run time lies in $\mathcal{O}(n \log n )$.
\end{proof}

\begin{corollary} \label{cor:bgg_kmn_upper_bound}
     RLS using the gray-box operator and fitness function $\textsc{RankedColors}$ and starting with each vertex colored a unique color finds an optimal coloring on $K_{N,M}$ with $n$ vertices in an expected number of fitness evaluations of $\mathcal{O}(n \log n)$.
\end{corollary}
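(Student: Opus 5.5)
The plan is to reuse the proof of \Cref{the:gb_on_knm} almost verbatim, checking only the places where the fitness function matters. The gray-box operator itself does not depend on the fitness function: it chooses the rank $r \sim \Geo(1/2)$, the color of that rank, and then performs \textsc{Flip} or \textsc{Swap}. So the mutation distribution from a given search point is the same under both fitness functions. What can change is only which offspring are accepted.

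First I will argue that every step used in the proof of \Cref{lem:ct_reduced} as a \emph{reducing} step is still accepted under $\textsc{RankedColors}$. Such a step is a \textsc{Flip} that moves a vertex of $\col_t$ to another present color without creating a monochromatic edge. This decreases the count of $\col_t$, which has rank $1$, or rank $2$ with the rank-$1$ color fixed because its partition is monochromatic. It only increases the count of a color of higher rank.

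Reading $\textsc{RankedColors}$ lexicographically, the first changed entry is then a decrease, or the number of colors drops. If the ranks reorder, the sorted frequency vector is still lexicographically smaller. Hence these steps are accepted, and the lower bound on $\Prob{X_{t+1}<X_t}$ from \Cref{lem:ct_reduced} carries over.

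Second, the acceptance set for steps that \emph{increase} $X_t$ can only shrink. Under $\textsc{NumUsedColors}$, every proper offspring with the same number of colors is accepted. Under $\textsc{RankedColors}$, only offspring that are not lexicographically worse are accepted. Since $\textsc{RankedColors}(x') \le \textsc{RankedColors}(x)$ implies that $x'$ is proper and uses at most as many colors, the probability of increasing $X_t$ is at most the bound computed in \Cref{lem:ct_reduced}.

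\textsc{Swap} moves on $K_{M,N}$ with more than one vertex in a partition create monochromatic edges unless the two colors coincide, so they are rejected in both settings, exactly as before. Therefore the drift bound $\Ex{X_t-X_{t+1}\mid X_0,\dots,X_t}\ge 1/8$ holds verbatim. \Cref{lem:ct_to_0} then gives $8n/k$ per level via the Additive Drift Theorem, and summing over $k$ yields $\mathcal{O}(n\log n)$.

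The main obstacle is the bookkeeping of $\col_t$ and ranks under $\textsc{RankedColors}$. We need that the process $X_t$, defined as the count of the rarest still-reducible color, does not change behaviour when a color in the other partition is reduced in between. Such a move is now possibly accepted, or possibly rejected where it was previously accepted.

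I would handle this by noting that the case analysis in \Cref{lem:ct_reduced} only used the selection probabilities $2^{-r}$ and the proper-coloring constraint, never the acceptance of neutral moves elsewhere. A rejection only replaces a move by staying put, and that move was not counted as progress for $X_t$. A reassignment of $\col_t$ to a rarer color only decreases $X_t$. Hence monotonicity preserves the inequality.
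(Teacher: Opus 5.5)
Your proposal is correct and follows the paper's route. The paper's proof only says that the argument for \Cref{the:gb_on_knm} carries over because $\textsc{RankedColors}$ merely forbids flips from more frequent to rarer colors. Your argument expands exactly that: reducing flips remain accepted, the accepted set of $X_t$-increasing moves only shrinks, and so the drift bound of \Cref{lem:ct_reduced} and the per-level additive drift bound transfer.

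One small correction: your claim that \textsc{Swap} moves are always rejected is false. A swap of $v$ with a neighbor $u$ is accepted under both fitness functions whenever the colors of $v$ and $u$ each occur only once in their respective partitions. However, such a swap leaves the multiset of color counts and the monochromatic status of each partition unchanged, so $X_t$ is unaffected and your conclusion still stands.
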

\begin{proof}
    The proof runs analogous to the above, as the fitness function $\textsc{RankedColors}$ only speeds up the run time by forbidding flips from more frequent to rarer colors.
\end{proof}

\subsection{Lower Bound for Complete Bipartite Graphs}

After proving an upper bound for gray-box optimization on complete bipartite graphs, we now prove a lower bound. For this, we assume use of the fitness function $\textsc{RankedColors}$, although the lower bound holds analogously for the fitness function $\textsc{NumUsedColors}$.

\begin{theoremE}[][restate,end,category=graybox]  \label{the:lower_bound_gbkmn}
    RLS using the gray-box operator and starting with each vertex colored in a unique color takes $\Omega (n \log n)$ fitness evaluations in expectation to find a $2$-coloring on a complete bipartite graph with $n$ vertices.
\end{theoremE} 

Assume that we know in advance in which order colors will be removed from the graph, and that we can number them accordingly. Color $0$ is therefore the last and color $n-3$ the first color to be removed from the graph. The two colors remaining in the final graph do not receive numbers.

Now, our goal is to calculate an expected number of re-colorings a given vertex will go through before reaching its final color, which is either of the two colors remaining in the graph. Using certain assumptions, as detailed in the appendix, we arrive at a recursive term  $w$ for calculating the expected number of additional re-colorings we need for a vertex of color $i$ to be re-colored to one of the final colors:
\begin{align*}
    w(0) = 0;~
    w(i) = \frac{i}{i+2} + \frac{1}{i+2} \sum_{j=0}^{i-1} w(j).
\end{align*}

With this, we have a minimum number of re-colorings of $n-2+ \sum_{i=0}^{n-3} w(i)$, given that we start with one vertex in each color. We first prove that for all $i \in [n-2]$, it holds that $w(i)=\sum_{j=0}^{i-1} \frac{1}{j+3}$ and use this to conclude the above sum is bounded from below by $n \log(n)$, proving the theorem.

\begin{proofE}
We first make several assumptions and argue why none of these assumptions violate generality. Recall that we assume an ordering of the colors, with color $0$ being the last color to be removed from the graph and color $n-3$ the first.

\begin{itemize}[leftmargin=*]
    \item For ease of notation, we assume that any vertex can be flipped to any other color currently in use in the graph at any time. While there is always at least one color (namely any color currently in use in the opposite partition) that we cannot flip to, at least one of the partitions must have $\Theta (k)$ colors to choose from for flipping, with $k$ denoting the number of colors currently in use in the graph. Since we are only interested in the asymptotic run time, it has no bearing on the resulting run time to disregard which partition a color appears in.
    \item A vertex colored in color $i$ can only be re-colored to a color $j$ with $j < i$. This follows from the fact that with the fitness function \textsc{RankedColors}, we can only flip to a color of higher rank, meaning that we have at least those colors to choose from at any given time. Here, we assume that the removal times of colors consistently correspond to the ranks of the colors such that the color of rank $1$ will be removed first, making it color $n-3$ and so on. 
\end{itemize}

We now define a recursive term $w$ for calculating the expected number of additional re-colorings we will need for a vertex of color $i$ to be re-colored to one of the final colors.

We define
\begin{align*}
    w(0) = 0;~
    w(i) = \frac{i}{i+2} + \frac{1}{i+2} \sum_{j=0}^{i-1} w(j).
\end{align*}

This is because first, the last color to be removed (color $0$) will require no additional re-colorings, as it can only be flipped to either of the two final colors. 

For the recursive term, we have $i+2$ colors to choose from for flipping, meaning we have a probability of $\frac{2}{i+2}$ of flipping to either of the final colors. In turn, this leads to a probability of $\frac{i}{i+2}$ of flipping to one of the non-final colors, leading to an expected additional number of re-colorings of $\frac{i}{i+2}$. 

Further, we need to account for any additional re-colorings that arise from the new color we flip to. We have a probability of $\frac{1}{i+2}$ of choosing each of the non-final colors. For each, we then also need to add the number of expected re-colorings, meaning $\sum_{j=0}^{i-1} w(j)$.

Using this recursive function and given that we start with one vertex of each color, the total number of re-colorings is given by $$n-2+ \sum_{i=0}^{n-3} w(i).$$

Here, we have $n-2$ re-colorings that are always necessary to re-color all vertices that start out in a non-final color to either of the two final colors. With $\sum_{i=0}^{n-3} w(i)$, we then add the number of additional re-colorings performed for each vertex. Since we start with one vertex of each color, we must add the number of re-colorings for each possible starting color once.

Before further considering that sum, we first bound $w(i)$.

\begin{lemma}  
   For all $i \in [n-2]$, it holds that $w(i)=\sum_{j=0}^{i-1} \frac{1}{j+3}$.
\end{lemma}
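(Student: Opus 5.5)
The plan is to turn the full-history recursion into a one-step recursion by the usual telescoping trick, and then finish by induction on $i$. The identity is purely algebraic and uses no properties of the algorithm. For $m \geq -1$ I write $S(m) = \sum_{j=0}^{m} w(j)$, with $S(-1) = 0$. Multiplying the defining equation by $i+2$ puts it in the form
\begin{align*}
    (i+2)\, w(i) = i + S(i-1),
\end{align*}
and I will check that this holds for every $i \geq 0$. For $i \geq 1$ it is exactly the recursion. For $i = 0$ both sides are $0$, since $w(0) = 0$ and $S(-1) = 0$.

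Next I apply the same identity at $i+1$, namely $(i+3)\, w(i+1) = (i+1) + S(i)$, and subtract the identity at $i$. Since $S(i) - S(i-1) = w(i)$, this gives
\begin{align*}
    (i+3)\, w(i+1) - (i+2)\, w(i) = 1 + w(i).
\end{align*}
Rearranging yields $(i+3)\, w(i+1) = 1 + (i+3)\, w(i)$, that is,
\begin{align*}
    w(i+1) = w(i) + \frac{1}{i+3} \qquad \text{for all } i \geq 0.
\end{align*}

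Finally, starting from $w(0) = 0$ and telescoping (equivalently, by a short induction on $i$) gives $w(i) = \sum_{j=0}^{i-1} \frac{1}{j+3}$ for every $i \geq 1$, and in particular for all $i \in [n-2]$. As a sanity check, $w(1) = \frac{1}{3}$ follows both from the recursion and from the formula.

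The only point that needs care is the boundary case. The subtraction step uses the identity at $i = 0$, so I must confirm that the multiplied form holds there even though the recursion itself is only stated for $i \geq 1$. This is immediate from the convention $S(-1) = 0$, so I do not expect a real obstacle.
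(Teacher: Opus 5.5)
Your proposal is correct and takes essentially the same route as the paper. The paper's identity $(*)$, $\sum_{j=0}^{i-1} w(j) = (i+2)\bigl(w(i) - \frac{i}{i+2}\bigr)$, is exactly your multiplied form $(i+2)w(i) = i + S(i-1)$; it substitutes this into the recursion for $w(i+1)$ to get $w(i+1) = w(i) + \frac{1}{i+3}$ and finishes by induction. The only difference is cosmetic: the paper takes $w(1) = \frac{1}{3}$ as an explicit base case, whereas you extend the multiplied identity to $i = 0$ so that the one-step relation also produces $w(1)$.
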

\begin{proof}
We prove this by induction over $i$. 
For $i=1$, we have
\begin{align*}
     w(1) = \frac{1}{1+2} + \frac{1}{1+2} \sum_{j=0}^{1-1} w(j) 
     = \frac{1}{3} + \frac{1}{3} w(0) 
     = \frac{1}{3} 
     = \sum_{j=0}^{1-1} \frac{1}{j+3},
\end{align*}
which proves the statement for $i=1$.

Now, assume that $w(i)=\sum_{j=0}^{i-1} \frac{1}{j+3}$ holds. We note that 
\begin{align*}
    w(i) &= \frac{i}{i+2} + \frac{1}{i+2} \sum_{j=0}^{i-1} w(j) 
\end{align*}
can be likewise written as
\begin{align*}
    \sum_{j=0}^{i-1} w(j)  &= (i+2) \cdot \left(w(i)- \frac{i}{i+2}\right). &&(*)
\end{align*}
By definition
\begin{align*}
    w(i+1) &= \frac{i+1}{i+3} + \frac{1}{i+3} \sum_{j=0}^{i} w(j) \\
    &= \frac{i+1}{i+3} + \frac{1}{i+3} \cdot \left(w(i) + \sum_{j=0}^{i-1} w(j)\right).
\end{align*}
Now, using $(*)$, we get
\begin{align*}
    w(i+1) &= \frac{i+1}{i+3} + \frac{1}{i+3} \cdot \left( w(i) + \left((i+2) \cdot \left(w(i)- \frac{i}{i+2} \right) \right) \right) \\
    &= \frac{i+1}{i+3} + \frac{1}{i+3} \cdot \left( w(i) \cdot (i+3) - \frac{i (i+2)}{i+2} \right) \\
    &= \frac{i+1}{i+3} + w(i)  - \frac{i}{i+3} \\
    &= \frac{1}{i+3} + w(i).
\end{align*}
With $w(i)=\sum_{j=0}^{i-1} \frac{1}{j+3}$, this is equal to 
\begin{align*}
    w(i) = \frac{1}{i+3} + \sum_{j=0}^{i-1} \frac{1}{j+3} 
    = \sum_{j=0}^{i} \frac{1}{j+3},
\end{align*}
which proves our statement and thereby the lemma.
\end{proof}

Now, we use this to prove the theorem. 
We note again that the total number of re-colorings is given by
\begin{align*}
    n-2+ \sum_{i=0}^{n-3} w(i),
\end{align*}
as explained above.
It holds that $$w(i)=\sum_{j=0}^{i-1} \frac{1}{j+3}=\sum_{j=3}^{i+2} \frac{1}{j}= \sum_{j=1}^{i+2} \frac{1}{j} - 1 - \frac{1}{2}.$$ Using Equation (1.4.2) from \cite{doerr_probabilistic_2020}, we know that $$\sum_{j=1}^{i+2} \frac{1}{j} > \log(i+2).$$ Therefore, $w(i) > \log(i+2)- 1 - \frac{1}{2}.$
Using this, we get
\begin{align*}
    n-2+  \sum_{i=0}^{n-3} w(i) &> n-2+  \sum_{i=0}^{n-3}\bigg(\log(i+2)- 1 - \frac{1}{2}\bigg) \\
    &= 2n-5+\frac{n-3}{2} \sum_{i=1}^{n-3}\log(i+2).
\end{align*}

With $\sum_{i=1}^{n-3}\log(i+2) \in \Theta (n \log n)$, this lies in $\Omega(n \log n)$, proving the theorem.
\end{proofE}

\subsection{Upper Bound for Paths using \textsc{RankedColors}} \label{sec:graybox_trees}

We now prove an expected run time of the gray-box operator using the fitness function \textsc{RankedColors} in two steps: First, we prove that it takes $\mathcal{O}(n \log n)$ to reduce the number of colors in the path to $3$. Then, we prove that it takes $\mathcal{O}(n^{4})$ to reduce the number of colors from $3$ to $2$. 

\begin{theoremE}[][restate,end,category=graybox]  \label{the:to_deg+1} 
    Let $G$ be a path with $n$ vertices. Then RLS using the gray-box operator and fitness function $\textsc{RankedColors}$ and starting with each vertex colored in a unique color takes $\mathcal{O}(n \log n)$ fitness evaluations in expectation to reduce the number of colors in use in the graph to $3$.
\end{theoremE}
We first note that, for any given vertex of the currently least frequent color, we are always able to perform a flip. This follows from the fact that we have more colors currently present in the graph ($k \ge 4$, so $3$ colors in addition to whichever color the vertex is colored in) than possible neighbors (maximum of $2$), meaning that there is always one color within the graph left to flip to without creating an improper coloring.

Given this fact, the proof for this theorem is very similar to the proof given in Theorem \ref{the:gb_on_knm} for the gray-box operator on complete bipartite graphs, missing only the necessity of differentiating which rank the color we wish to reduce has, since we can always reduce the least frequent color.  

\begin{proofE}

Note that the \textsc{Swap} operator is irrelevant for the analysis for this theorem. The reason for this is that a swap, by definition, does not change the total number of colors in the algorithm nor does it, in this part, influence our ability to perform a flip. Therefore, it does not improve the fitness of the coloring, as the only way to improve the fitness in an already-proper coloring is to change the number of vertices colored in a given color (either removing a color completely or decreasing the number of appearances of a less frequent color). Therefore, we focus on the \textsc{Flip} operator, which is chosen with constant probability. The following analysis assumes that we have already chosen flip as our operator.

Our analysis proceeds in levels, removing $n-3$ colors one after the other. On each level, we seek to remove the least frequent color which still exists in the coloring, meaning the one with the fewest vertices within the path. Let $\col_t$ denote that color at a time $t$. Let $(X_t)_{t \in \N}$ denote, for any given iteration number $t$, the number of vertices of color $\col_t$. Therefore, $X_t$ reaching $0$ corresponds to a color being removed from the graph. Now we bound the time we spend on each level.

\begin{lemma} \label{lem:one_level}
$X_t$ will reach $0$ in at most $2 \cdot \frac{n}{{k}}$ steps in expectation, with ${k}$ denoting the total number of colors in use in the path.
\end{lemma}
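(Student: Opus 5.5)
We will argue exactly as in the proof of Lemma~\ref{lem:ct_to_0}, applying the Additive Drift Theorem (Theorem~\ref{thm:additive_drift}) to $(X_t)_{t\in\N}$ on a fixed level $k \geq 4$. Since $\col_t$ is the least frequent color, we have $X_0 \leq n/k$. It therefore suffices to show
\begin{align*}
\Ex{X_t - X_{t+1} \mid X_0,\dots,X_t} \geq \frac{1}{2}
\end{align*}
for all $t$ before $X_t$ reaches $0$. This gives $\Ex{T_k} \leq 2\,\Ex{X_0} \leq 2n/k$, as claimed.

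\textbf{Downward step.} The gray-box operator draws $r \sim \Geo(1/2)$, so the color of rank $1$ is selected with probability $1/2$. We take $\col_t$ to be the color of rank $1$, breaking ties consistently with the operator; if several colors share the minimal frequency, any of them serves since they have the same count. Given this choice, the operator performs \textsc{Flip} with probability $1/2$. By the observation preceding the lemma, every vertex $v$ of color $\col_t$ can be flipped: at least $3$ other colors are present and $v$ has at most $2$ neighbors. So \textsc{Flip} recolors $v$ to a present color not used by its neighbors. The result is proper, keeps the color count, and lowers the count of the rank-$1$ color, so $\textsc{RankedColors}$ strictly improves and the move is accepted. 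This gives $X_{t+1} = X_t - 1$ with probability at least $1/4$.

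\textbf{Upward step.} $X_t$ can never increase. Any accepted move is either a \textsc{Swap}, which leaves all color counts unchanged, or a \textsc{Flip} from some color $c \neq \col_t$ to some color $c'$. A \textsc{Flip} into $\col_t$ raises the count of the least frequent color without lowering any less frequent one, so $\textsc{RankedColors}$ rejects it. A \textsc{Flip} that lowers another color may change which color has rank $1$. If that happens, the new rank-$1$ color has count at most $X_t$, so $X_{t+1} \leq X_t$ still holds.

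\textbf{Main obstacle and constant.} The drift bound gives $1/4$, which only yields $4n/k$, not the stated $2n/k$. To recover the constant $2$, I would first try conditioning on \textsc{Flip} being selected, as the preceding proof does ("we assume that we have already chosen flip as our operator"). Under that conditioning the decrease probability is $1/2$ and the bound becomes $2n/k$ flip-steps; unconditionally one pays a factor $2$, giving $4n/k$. Either way the final $\mathcal{O}(n\log n)$ bound is unaffected. The delicate point is the identity of $\col_t$ under ties and rank changes, which the monotonicity argument above handles.
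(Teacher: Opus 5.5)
Your proposal is correct and follows the paper's proof essentially step for step: additive drift with $X_0 \le n/k$, a decrease probability of $1/2$ from selecting the rank-$1$ color (which is always flippable since $k \ge 4$), and no increase because \textsc{RankedColors} never lets the minimal color count grow. Your remark on the constant is also accurate. The paper obtains $2n/k$ only by explicitly assuming that \textsc{Flip} has already been chosen, so unconditionally the same argument gives $4n/k$, which does not affect the $\mathcal{O}(n\log n)$ bound.
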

\begin{proof}

We use Theorem~\ref{thm:additive_drift} [\nameref{thm:additive_drift}] to prove this.
Let $T_{k}$ denote the time when the process $X_t$ reaches 0, depending on the total number of colors currently in use in the graph $k$. In the following, we will use the definition of rank given in Definition \ref{def:rank} and denote the rank of a color $c$ with $r_c$. 

Trivially, for all $t \le T_{k}$, $X_t \ge 0$.
We note that
\begin{align*}
    \Prob{X_{t+1}< X_t \mid X_0, \dots, X_t} = \frac{1}{2^{r_{\col_t}}}=\frac{1}{2},
\end{align*}
as this is the probability to change one of the vertices colored $\col_t$ to some other used color with the \textsc{Flip} operator.

From our fitness function we get that we never accept a mutation that increases the number of appearances of the least frequent color; thus
\begin{align*}
    \Prob{X_{t+1}> X_t \mid X_0, \dots, X_t} = 0.
\end{align*}

Overall, we get
\begin{align*}
    \Ex{X_t - X_{t+1} \mid X_0, \dots, X_t} \ge \frac{1}{2}.
\end{align*}
With this and using $X_0 \le \frac{n}{{k}}$, it follows from Theorem \ref{thm:additive_drift} [\nameref{thm:additive_drift}] that
\begin{align*}
    \Ex{T_{k}} \le 2 \cdot \Ex{X_0} \le 2 \cdot \frac{n}{{k}}.
\end{align*}
\end{proof}
Now, we use this to calculate the total run time. As we reduce the number of colors in the path, ${k}$ decreases from $n$ to $3$ and there are $n- 3$ iterations. 
Therefore, it holds that 
\begin{align*}
\sum_{{k}= 4}^{n} \Ex{T_{k}} 
\le \sum_{{k}= 4}^{n} 2 \cdot \frac{n}{{k}} 
= 2n \cdot \sum_{{k}= 4}^{n} \frac{1}{{k}} 
< 2n \cdot  \sum_{{k}=1}^{n} \frac{1}{{k}}.
\end{align*}

With $\sum_{{k}=1}^{n} \frac{1}{{k}} \in \Theta (\log n)$ as per Equation (1.4.2) in \cite{doerr_probabilistic_2020}, this lies in $\mathcal{O}(n \log n)$.

Therefore, we get a total run time of $\mathcal{O}(n \log n)$ for reducing the number of colors to $3$, which proves the theorem.
\end{proofE}

Now, what remains is to reduce the number of colors from~$3$ to~$2$, meaning that we start with ${k}=3$.

\begin{theoremE}[][restate,end,category=graybox]  \label{the:3_to_2}
    Reducing the total number of colors in use in the path, from $3$ to $2$ takes $\mathcal{O}(n^{4})$ fitness evaluations in expectation using RLS with the gray-box mutation operator and the fitness function $\textsc{RankedColors}$. 
\end{theoremE}

Let $\col_t$ again denote the least frequent color at a time $t$. We introduce a stochastic process which we denote by $(X_t)_{t \in \N}$ and which, for any given time, denotes the number of vertices of color $\col_t$.

We note that reducing $X_t$ is more complicated than previously, as flipping instances of $\col_t$ to a different color is no longer guaranteed to be possible: If a vertex colored in $\col_t$ has two differently-colored neighbors, we cannot perform a flip. Here, the \textsc{Swap} operator comes into play, as we must then move the instance of $\col_t$ to a position where it can be flipped. The only positions where this is guaranteed to be possible are the two endpoints of the path, as those vertices have only one neighbor (and, therefore, only one neighboring color) each. 

We now first analyze the amount of time it takes to remove one instance of color $\col_t$ and then conclude the run time for removing all instances. As in the proof for Theorem \ref{the:3_to_2_bff}, we regard the rightmost particle (without loss of generality) and use an upper bound for the run time of the \emph{Unbiased Random Walk} with one barrier, which can be found as Theorem (4.2) in \cite{kotzing_theory_2024}, to bound the expected time it takes to move that particle to the right endpoint of the path. The drastic run time improvement compared to Theorem \ref{the:3_to_2_bff} is caused by the fact that swaps are made significantly more likely by the gray-box operator.

\begin{proofE}
{Let $l$ and $r$ be the leftmost and rightmost vertices colored in $\col_t$ respectively. Let $v \in \{l,r\}$, such that the distance to the closest endpoint (meaning the left endpoint for the left vertex and the right endpoint for the rightmost vertex) is minimal. Without loss of generality, assume that this holds for the rightmost vertex, meaning $v=r$. We refer with $p$ to the color particle corresponding to $v$ as it moves through the path. Now, we prove that it takes $\mathcal{O}(X_t \cdot  n^{2})$ to move $p$ to the rightmost point of the path, meaning position $n$.}

In the following proofs, we assume that $p$ always has two differently colored neighbors and that a swap is therefore always possible. This assumption is valid, as otherwise we would already have reached a position from which a flip is possible.

{Further, we note that while swaps to the right are always possible, other vertices colored in $\col_t$ might block swaps to the left, as such a swap might create monochromatic edges, making the coloring improper. However, as we wish to move to the right endpoint of the path, this only improves our run time, as it blocks undesired swaps. Therefore, it is a valid assumption for an upper bound that all swaps are always possible. We will work under this assumption in the following.}

\begin{lemma}
In expectation, it takes $\mathcal{O}(X_t \cdot  n^{2})$ to move $p$ to the right endpoint of the path. 
\end{lemma}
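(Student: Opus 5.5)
We model the position of the particle $p$ as a random walk on the path and bound its hitting time of the right endpoint. The template is the analogous lemma in the proof of Theorem~\ref{the:3_to_2_bff}, with the $\Theta(1/n^4)$ swap probability of the \ooea replaced by the much larger swap probability of the gray-box operator. Let $(Y_{t'})_{t'\in\N}$ be the position of $p$. As argued before the lemma, for an upper bound we may assume that $p$ always has two differently colored neighbors and that every swap is possible. We may also take the worst-case start $Y_0 = n/2$, since otherwise we would have chosen the other endpoint. The target is the barrier at position $n$.

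\textbf{Step 1: probability of a step.} The rank of $\col_t$ is $1$, so $\Geo(1/2)$ yields $r=1$ with probability $1/2$, and \textsc{Swap} is then chosen with probability $1/2$. The operator picks one of the $X_t$ vertices of color $\col_t$ uniformly, so it picks $p$ with probability $1/X_t$. It then picks one of $p$'s at most two neighbors, each with probability $1/2$. Hence $p$ moves right with probability at least $\frac{1}{8X_t}$. Under our assumption that all swaps are possible, it moves left with exactly the same probability.

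\textbf{Step 2: moments of the walk.} Steps have size $\pm1$ and are symmetric, so the drift is $\Ex{Y_{t'+1}-Y_{t'}\mid Y_0,\dots,Y_{t'}}=0$. The variance is at least $2\cdot\frac{1}{8X_t}=\frac{1}{4X_t}$. Other operations may also be accepted: swaps involving other particles, or flips of other vertices. Swaps of other particles do not move $p$. A flip that removes $p$ itself only helps us. Flips of other vertices can change $p$'s neighbors, but they do not move $p$. None of these affect the martingale property of $Y$.

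\textbf{Step 3: hitting time.} Apply Theorem~\ref{thm:upper_variable_drift} [\nameref{thm:upper_variable_drift}], the unbiased random walk with one barrier. This gives $\Ex{T'}\le \frac{n^2-(n/2)^2}{1/(4X_t)}\le 4X_t n^2$, which is $\mathcal{O}(X_t n^2)$.

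\textbf{Main obstacle: $X_t$ is not constant during the walk.} Two issues arise.
\begin{itemize}
\item The lower bound $\frac{1}{8X_t}$ must hold throughout the walk. This is fine because \textsc{RankedColors} forbids any increase of the least frequent color, so the number of $\col_t$-vertices never exceeds its value at the start of the walk.
\item The identity of the particle $p$ could change: another $\col_t$ vertex might become the nearest to an endpoint, or $p$ might merge into an adjacency conflict. We handle this by tracking $p$ itself and observing that a swap placing it next to another $\col_t$ vertex is rejected anyway. Rejected moves only shrink the step set, and we can treat them as lazy steps once the variance lower bound is stated in terms of accepted moves toward the right.
\end{itemize}
If the one-barrier theorem requires the walk to be unrestricted on the left, we reflect at position $1$ (the left endpoint). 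Reflection only shortens the hitting time of $n$.
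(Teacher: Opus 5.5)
Your proposal is essentially the paper's proof. It uses the same walk of $p$ from position $n/2$, the same per-step move probability $\frac{1}{4X_t}$ split evenly between the two neighbors, and the same application of Theorem~\ref{thm:upper_variable_drift} giving at most $4X_t n^2$. Your observations that $X_t$ never increases and that blocked leftward swaps only help make explicit what the paper leaves implicit.

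One caveat applies equally to the paper's proof, and it shows that your claim that no other accepted operation affects the martingale property is incomplete. \textsc{Swap} applied to the rank-$2$ or rank-$3$ color can also move $p$, by selecting $p$'s neighbor and then $p$. This creates a nonzero per-step drift toward the rank-$2$-colored neighbor, so the zero-drift condition (D) of Theorem~\ref{thm:upper_variable_drift} fails as stated. The bound still seems to survive. That bias reverses after every move of $p$, so in a fixed surrounding coloring each edge is crossed at the same rate in both directions. Rescaling edge lengths by the reciprocal rates would then give a martingale to which the same theorem applies, still yielding $\mathcal{O}(X_t n^2)$. Making this rigorous while the surrounding coloring and the color counts change over time would take extra care.
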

\begin{proof}
We use the Theorem \ref{thm:lower_variable_drift} [\nameref{thm:lower_variable_drift}] and Theorem \ref{thm:upper_variable_drift} [\nameref{thm:upper_variable_drift}] to prove this. For this, we assume the worst case of $p$ starting at position $\frac{n}{2}$, which is the furthest distance from the right endpoint we can start, as otherwise, we would have chosen to move to the left instead..

We then introduce a random process $(Y_{t'})_{t'\in \N}$, denoting the position of $p$ in the path. Let $T'$ denote the time it takes for that process to reach the right endpoint of the path (position $n$) and let $t' \le T'$. 

We note that the \textsc{Swap} operator is chosen with constant probability $\frac{1}{2}$. Color $\col_t$ is also chosen with constant probability $\frac{1}{2}$. The probability of choosing the instance $p$ equals $\frac{1}{X_t}$, bringing the total probability of taking a step to $\frac{1}{4} \cdot \frac{1}{X_t}$.

It holds that the expected value for $Y_{t'+1}-Y_{t'}$ is the probability of taking a step multiplied with the expected direction of the step, meaning the probability of it being a positive step minus the probability of it being a negative step, both of which are the same here. This leads to
\begin{align*}
&\Ex{Y_{t'+1}-Y_{t'} \mid Y_0, \dots, Y_{t'}}
=~\frac{1}{4} \cdot \frac{1}{X_t} \cdot \left(\frac{1}{2} - \frac{1}{2}\right)
=~0.
\end{align*}

The variance equals the probability of a step being made, meaning the probability of choosing the \textsc{Swap} operator on the currently relevant vertex, which is
\begin{align*}
 \Var{Y_{t'+1}-Y_{t'} \mid Y_0, \dots, Y_{t'}} = \frac{1}{4} \cdot \frac{1}{X_t}.
\end{align*}

With this, it follows from Theorem~\ref{thm:upper_variable_drift} [\nameref{thm:upper_variable_drift}] that
\begin{align*}
\Ex{T' \mid X_0}\le~&\frac{n^2 - \Ex{Y_0^2}}{\frac{1}{4} \cdot \frac{1}{X_t}} 
= \frac{n^2-(\frac{n}{2})^2}{\frac{1}{4} \cdot \frac{1}{X_t}} 
\le 4 \cdot X_t \cdot n^2,
\end{align*}
which proves the lemma.
\end{proof}

Once $p$ has reached the right endpoint of the path, there is a probability of $\frac{1}{4} \cdot \frac{1}{X_t}$ of it moving away again (derived from a probability of $\frac{1}{4}$ of choosing $\col_t$ and the \textsc{Swap} operator and a probability of $\frac{1}{X_t}$ of picking $p$ for the swap). The probability of it being chosen by the \textsc{Flip} operator and flipped to another color is likewise $\frac{1}{4} \cdot \frac{1}{X_t}$ (derived from a probability of $\frac{1}{4}$ of choosing $\col_t$ and the \textsc{Flip} operator and a probability of $\frac{1}{X_t}$ of picking $p$ for the flip). Therefore, we have a constant probability of $p$ being flipped to a different color before being moved away again, namely $\frac{1}{2}$. Using a plain restart argument, in expectation the particle $p$ has to travel to to either endpoint of the path at most twice before being flipped to a different color, which is irrelevant for the asymptotic runtime. This means that, in expectation, removing a particle $p$ takes at most $\mathcal{O}(X_t \cdot n^{2})$.

We further note that since \textsc{RankedColors} ensures that $\\\Prob{X_{t+1}> X_t \mid X_0, \dots, X_t}~=~0$, that $X_t$ will never increase.

At worst, we start with $X_0 = \frac{n}{3}$, meaning that we need to remove the color $\mathcal{O}(n)$ times, each time taking $\mathcal{O}(X_t \cdot n^{2})$. This leads to a total run time in $\mathcal{O}(n^{4})$.
\end{proofE}

%% file: appendix.tex
\newpage
\appendix

\section{Preliminaries}

Here, we briefly state theorems from probability and drift theory that are used in this paper.

\subsection{Probability Theory}

We use the following theorems from probability theory, see Theorem 1.10.1 in \cite{doerr_probabilistic_2020}.

\begin{theorem}[Chernoff Bound] \label{thm:chernoff}
    Let $n \in \N$ and $X_1, \dots, X_n$ be independent random variables taking values in $\{ 0, 1 \}$. Let $X$ be their sum and let $\mu = \Ex{X}$ be the expected value of their sum. Then, for all $\delta > 0$,
    \begin{align*}
        \Prob{X \geq (1 + \delta) \mu} \leq \left( \frac{e^\delta}{(1 + \delta)^{1 + \delta}} \right)^\mu.
    \end{align*}
\end{theorem}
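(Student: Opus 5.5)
The plan is the standard exponential-moment (Bernstein--Chernoff) argument. For an arbitrary parameter $\lambda > 0$, the map $y \mapsto e^{\lambda y}$ is strictly increasing and positive. The event $X \geq (1+\delta)\mu$ therefore coincides with $e^{\lambda X} \geq e^{\lambda(1+\delta)\mu}$, and Markov's inequality gives
\begin{align*}
    \Prob{X \geq (1+\delta)\mu} \leq e^{-\lambda(1+\delta)\mu} \cdot \Ex{e^{\lambda X}}.
\end{align*}
The rest of the proof bounds the moment generating function $\Ex{e^{\lambda X}}$ and then chooses $\lambda$.

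Write $p_i = \Prob{X_i = 1}$, so that $\mu = \sum_{i=1}^n p_i$. Since the $X_i$ are independent, $e^{\lambda X} = \prod_i e^{\lambda X_i}$ is a product of independent factors, and so $\Ex{e^{\lambda X}} = \prod_{i=1}^n \Ex{e^{\lambda X_i}}$. Each $X_i$ takes values in $\{0,1\}$, hence $\Ex{e^{\lambda X_i}} = 1 + p_i(e^\lambda - 1)$. Using $1 + x \leq e^x$ for all real $x$, each factor is at most $\exp(p_i(e^\lambda - 1))$. Multiplying these bounds gives
\begin{align*}
    \Ex{e^{\lambda X}} \leq \exp\left(\mu (e^\lambda - 1)\right).
\end{align*}
Combining this with the Markov step yields $\Prob{X \geq (1+\delta)\mu} \leq \exp\left(\mu(e^\lambda - 1) - \lambda(1+\delta)\mu\right)$ for every $\lambda > 0$.

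Next I choose $\lambda$ to minimize the exponent. Differentiating gives $e^\lambda = 1 + \delta$, so I take $\lambda = \log(1+\delta)$, which is positive because $\delta > 0$. Substituting, $e^\lambda - 1 = \delta$ and $e^{-\lambda(1+\delta)\mu} = (1+\delta)^{-(1+\delta)\mu}$. The bound therefore becomes exactly $\left( e^\delta / (1+\delta)^{1+\delta} \right)^\mu$, as claimed.

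There is no real obstacle here. The only points needing care are these:
\begin{itemize}
    \item Independence is used precisely to factor the expectation of the product; pairwise independence would not suffice.
    \item The degenerate case $\mu = 0$ must be handled. Then every $X_i = 0$ almost surely, and $X \geq 0$ holds with probability $1$, which matches the bound $(\cdot)^0 = 1$. So the statement holds trivially, and we may assume $\mu > 0$ throughout.
\end{itemize}
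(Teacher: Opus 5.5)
Your proof is correct and complete. Applying Markov's inequality to $e^{\lambda X}$, factoring the moment generating function by independence, using $1+x\le e^x$, and choosing $\lambda=\ln(1+\delta)$ gives exactly the stated bound; the case $\mu=0$ is already covered by the general computation, so it needs no separate treatment. The paper gives no proof of its own and only imports the bound from Theorem 1.10.1 of Doerr's probabilistic-tools chapter, as stated in its appendix; your exponential-moment argument is the standard derivation of this inequality.
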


We also use the classic Jensen's Inequality, see \cite{jensen_sur_1906}.

\begin{theorem}[Jensen's Inequality, Finite Form] \label{thm:jensen}
    For a real, convex upward function $\varphi$, numbers $x_1,\dots,x_n$ in its domain and positive weights $a_1,\dots,a_n$  it holds that
    \[
    \varphi\Bigg(\frac{\sum a_ix_i}{\sum a_i}\Bigg) \geq \frac{\sum a_i \varphi(x_i)}{\sum a_i}.
    \]
\end{theorem}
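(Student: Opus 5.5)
We will prove the inequality by induction on $n$, reading ``convex upward'' as \emph{concave}: the domain of $\varphi$ is an interval $I \subseteq \R$, and for all $x, y \in I$ and $\lambda \in [0,1]$ we have $\varphi(\lambda x + (1-\lambda) y) \geq \lambda \varphi(x) + (1-\lambda)\varphi(y)$. This is the reading under which the stated direction of the inequality is correct, and it is the form used for the concave $\log$ in the proof of Theorem~\ref{the:run time_bff_kmn}. We first normalize the weights. Set $A = \sum_{i=1}^n a_i > 0$ and $\lambda_i = a_i / A$, so that $\lambda_i > 0$ and $\sum_i \lambda_i = 1$. The claim then becomes $\varphi\left(\sum_i \lambda_i x_i\right) \geq \sum_i \lambda_i \varphi(x_i)$.

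For $n = 1$ both sides equal $\varphi(x_1)$. For $n = 2$ the claim is exactly the two-point concavity inequality with $\lambda = \lambda_1$ and $1 - \lambda = \lambda_2$.

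For the inductive step, assume the claim holds for $n - 1 \geq 2$ points. Let $\mu = 1 - \lambda_n = \sum_{i<n} \lambda_i > 0$ and define $y = \sum_{i<n} (\lambda_i/\mu)\, x_i$. Then $y$ is a convex combination of points of $I$, so $y$ lies between $\min_{i<n} x_i$ and $\max_{i<n} x_i$. Since $I$ is an interval, this gives $y \in I$. We can now write
\begin{align*}
\sum_{i=1}^n \lambda_i x_i = \mu\, y + \lambda_n x_n .
\end{align*}
Two-point concavity yields $\varphi\left(\sum_i \lambda_i x_i\right) \geq \mu\,\varphi(y) + \lambda_n \varphi(x_n)$. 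The induction hypothesis, applied to $x_1,\dots,x_{n-1}$ with the positive weights $\lambda_i/\mu$, gives $\varphi(y) \geq \sum_{i<n} (\lambda_i/\mu)\,\varphi(x_i)$. Multiplying this by $\mu > 0$ and substituting completes the step. Undoing the normalization, by multiplying numerator and denominator by $A$, recovers the stated form.

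The only delicate point is the well-definedness of $\varphi(y)$, which requires the domain to be an interval. If the intended meaning were merely ``some subset of $\R$'', we would add this as a hypothesis, since it holds in every application in the paper (the domain of $\log$ is $(0,\infty)$). Everything else is routine algebra.
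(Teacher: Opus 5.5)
Your proof is correct. The paper gives no argument of its own for this statement. It appears only in the appendix's list of auxiliary tools, attributed to Jensen's 1906 article, so your induction supplies a self-contained justification rather than competing with a proof in the text.

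What you give is the standard textbook route: normalize the weights, split off $x_n$, apply two-point concavity to $\mu y + \lambda_n x_n$, and invoke the induction hypothesis for $y$. You also correctly make explicit the two points the paper's wording leaves implicit. First, ``convex upward'' has to mean concave for the inequality to hold in the stated direction, which is how it is used for $\log$ in the $\mathcal{O}(n^{3}\log\log n)$ upper bound. Second, the domain has to be an interval so that $\varphi(y)$ is defined.

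The cited source proceeds differently. It starts from the weaker midpoint condition $\varphi\bigl(\tfrac{x+y}{2}\bigr) \geq \tfrac{\varphi(x)+\varphi(y)}{2}$, obtains equal and then rational weights, and uses continuity of $\varphi$ to reach arbitrary positive weights. By taking the weighted two-point inequality as your definition of concavity, you need no continuity and no detour through equal weights. In exchange you assume a stronger, though standard, hypothesis. That is the more economical choice for how the paper uses the result.
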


\subsection{Drift Theory}

We use the following drift theorems. The first is an additive drift theorem, which can be found as Theorem (2.1) in \cite{kotzing_theory_2024} and follows from Section 3 in \cite{kotzing_first-hitting_2019}.
\begin{theorem}[Additive Drift, Upper Bound] \label{thm:additive_drift}
    Let $(X_t)_{t \in \N}$ be an integrable process over $\R$, and let $T = \min\{t \in \N \mid X_t \leq 0\}$. Furthermore, suppose the following two conditions hold (non-negativity, drift).
    \begin{enumerate}
        \item[(NN)] For all $t \leq T, X_t \geq 0$.
        \item[(D)] There is a $\delta > 0$, such that, for all $t < T$, it holds that $\Ex{X_{t} - X_{t + 1} ~|~ X_0, \dots, X_t} \geq \delta$.
    \end{enumerate}
    Then \[
    \Ex{T} \leq \frac{\Ex{X_0}}{\delta}.
    \]
\end{theorem}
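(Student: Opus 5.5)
The plan is to prove the statement by the classical stopped-supermartingale argument. Let $\mathcal{F}_t$ denote the $\sigma$-algebra generated by $X_0, \dots, X_t$. Note that $T$ is a stopping time with respect to $(\mathcal{F}_t)_{t \in \N}$, since the event $\{T \leq t\}$ is determined by $X_0, \dots, X_t$. For $t \in \N$ I would define the auxiliary process
\begin{align*}
    Y_t \coloneq X_{\min(t, T)} + \delta \cdot \min(t, T).
\end{align*}
The goal is to show that $(Y_t)_{t \in \N}$ is a supermartingale. From this I would extract a bound on $\Ex{\min(t,T)}$ that is uniform in $t$, and then let $t \to \infty$.

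\textbf{Step 1: supermartingale property.} Fix $t$ and split according to the $\mathcal{F}_t$-measurable event $\{t < T\}$.
\begin{itemize}
    \item On $\{t \geq T\}$, the process is frozen, so $Y_{t+1} = Y_t$.
    \item On $\{t < T\}$, we have $Y_{t+1} - Y_t = X_{t+1} - X_t + \delta$. Condition (D) then gives $\Ex{Y_{t+1} - Y_t \mid \mathcal{F}_t} \leq 0$.
\end{itemize}
Integrability of each $Y_t$ holds because $|Y_t| \leq \sum_{s \leq t} |X_s| + \delta t$ and the process $(X_t)$ is assumed integrable. Combining the two cases yields $\Ex{Y_{t+1} \mid \mathcal{F}_t} \leq Y_t$. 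Taking expectations and inducting gives $\Ex{Y_t} \leq \Ex{Y_0} = \Ex{X_0}$ for all $t$.

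\textbf{Step 2: using non-negativity.} By (NN), $X_{\min(t,T)} \geq 0$ for every $t$. Indeed, either $\min(t,T) = t < T$ or $\min(t,T) = T$, and in both cases (NN) applies. Therefore
\begin{align*}
    \delta \cdot \Ex{\min(t, T)} \leq \Ex{Y_t} \leq \Ex{X_0}.
\end{align*}

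\textbf{Step 3: limit.} As $t \to \infty$, $\min(t,T)$ increases monotonically to $T$, where $T = \infty$ is allowed a priori. The monotone convergence theorem gives $\delta\, \Ex{T} \leq \Ex{X_0}$, which is the claim. In particular, $T$ is finite almost surely whenever $\Ex{X_0} < \infty$.

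The main point needing care is Step 1: correctly handling the conditioning on the event $\{t < T\}$, and making sure that (D) is only ever invoked for $t < T$. Beyond that, the argument deliberately avoids optional stopping at the possibly unbounded time $T$. This is why I work with the bounded stopping times $\min(t,T)$ and pass to the limit only at the end via monotone convergence, rather than needing any integrability of $T$ in advance.
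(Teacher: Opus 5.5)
The paper gives no proof of this statement. It appears in the appendix only as a quoted tool (Theorem (2.1) in \cite{kotzing_theory_2024}, going back to Section~3 of \cite{kotzing_first-hitting_2019}), so there is no in-paper argument to compare yours against.

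Your proof is correct and complete, and it is the standard argument for the result at exactly this level of generality. Its steps are:
\begin{itemize}
\item The stopped process $Y_t = X_{\min(t,T)} + \delta \min(t,T)$ is an integrable supermartingale. This holds because $\{t<T\}$ is $\sigma(X_0,\dots,X_t)$-measurable and (D) is only invoked on that event.
\item (NN) gives $X_{\min(t,T)} \geq 0$, hence $\delta\,\mathbb{E}[\min(t,T)] \leq \mathbb{E}[X_0]$ for every $t$.
\item Monotone convergence finishes the argument.
\end{itemize}

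Compared with simply citing the result, your route is self-contained and shows why the hypotheses can be so weak. Working only with the bounded times $\min(t,T)$ avoids optional stopping at $T$. So no bounded state space, no bounded increments and no a priori $\mathbb{E}[T]<\infty$ are needed, and almost-sure finiteness of $T$ comes out as a consequence.

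The one interpretive point worth stating explicitly is that (D) must be read as an almost-sure inequality on the event $\{t<T\}$, since $T$ is random. That is exactly how your Step~1 uses it.
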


We will also use a multiplicative drift theorem, which can be found as Theorem (2.5) in \cite{kotzing_theory_2024} and follows from \cite{doerr_multiplicative_2012}.

\begin{theorem}[Multiplicative Drift] \label{thm:mult_drift}
  Let $(X_t)_{t \in \N}$ be an integrable process over $\{0,1\} \cup S$ where $S \subset \R_{>1}$, and let  $T = \min\{t \in \N \mid X_t \leq 0\}$. Assume that there is a $\delta \in \R_+$ such that, for all $s \in S \cup \{1\}$ and all $t < T$ it holds that
  \[
  \Ex{X_{t} - X_{t+1} ~|~ X_0,\dots,X_t} \geq \delta X_t.
  \]
  Then
  \[
  \Ex{T} \leq \frac{1 + \ln\Ex{X_0}}{\delta}.
  \]
\end{theorem}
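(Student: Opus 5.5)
The plan is to reduce the statement to Theorem~\ref{thm:additive_drift} [\nameref{thm:additive_drift}] via a logarithmic potential. Define $g\colon \{0\} \cup [1,\infty) \to \R$ by $g(0) = 0$ and $g(x) = 1 + \ln x$ for $x \geq 1$, and consider the process $Z_t = g(X_t)$. Since $X_t$ takes values in $\{0,1\} \cup S$ with $S \subset \R_{>1}$, we have $g(X_t) \geq 1$ whenever $X_t \neq 0$, so $Z_t \leq 0$ holds exactly when $X_t = 0$. Hence the hitting time of $(Z_t)$ coincides with $T$, and the non-negativity condition (NN) is immediate.

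The core step is a pointwise inequality: for every $x \in \{1\} \cup S$ and every $y \in \{0,1\} \cup S$,
\[
g(x) - g(y) \geq \frac{x - y}{x}.
\]
I would prove it by two cases. If $y \geq 1$, concavity of $\ln$ gives $\ln y \leq \ln x + (y - x)/x$, which is exactly the claim. If $y = 0$, then $g(x) - g(0) = 1 + \ln x \geq 1 = (x - 0)/x$. The gap $(0,1)$ in the state space is precisely what makes the second case work, and this is why the theorem is stated for this domain.

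Taking conditional expectations for $t < T$ (so $X_t \geq 1$), and using that $X_t$ is determined by $X_0, \dots, X_t$, the pointwise inequality and the assumed multiplicative drift give
\[
\Ex{Z_t - Z_{t+1} \mid X_0, \dots, X_t} \geq \frac{\Ex{X_t - X_{t+1} \mid X_0, \dots, X_t}}{X_t} \geq \delta .
\]
This is condition (D) with constant $\delta$, after checking that $Z_t$ is integrable because $g(x) \leq x$. Applying Theorem~\ref{thm:additive_drift} [\nameref{thm:additive_drift}] then yields $\Ex{T} \leq \Ex{g(X_0)}/\delta$.

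The main obstacle is the last step, passing from $\Ex{g(X_0)}$ to $1 + \ln \Ex{X_0}$. When $X_0 \geq 1$ almost surely, Jensen's inequality for the concave function $\ln$ (Theorem~\ref{thm:jensen} applied to $-\ln$) gives $\Ex{1 + \ln X_0} \leq 1 + \ln \Ex{X_0}$. Mass at $X_0 = 0$ contributes $T = 0$, but the naive bound can fail there, so I would either state the result conditional on $X_0$, namely $\Ex{T \mid X_0} \leq (1 + \ln X_0)/\delta$, or restrict to $X_0 \geq 1$. This covers every application in the paper, since there the process starts at a positive deterministic value.
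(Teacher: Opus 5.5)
Your argument is correct and is essentially the standard proof behind the sources the paper relies on. The paper gives no proof of its own; it only cites Theorem~(2.5) of \cite{kotzing_theory_2024} and \cite{doerr_multiplicative_2012}, where the result is obtained the same way: the potential $g(0)=0$, $g(x)=1+\ln x$ turns drift $\delta X_t$ into additive drift $\delta$, with the gap $(0,1)$ in the state space covering the jump to $0$, and then the additive drift theorem and Jensen's inequality finish. Your caveat about an atom of $X_0$ at $0$ is justified, because the printed bound is false in that case: take $X_0\in\{0,1\}$ with probability $1/2$ each and absorption from $1$ with probability $\delta$ per step, so $\Ex{T}=1/(2\delta)>(1-\ln 2)/\delta$; the conditional form $\Ex{T \mid X_0}\le(1+\ln X_0)/\delta$ on $\{X_0\ge 1\}$, which suffices for all of the paper's applications, is the right statement to prove.
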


 The following negative drift theorem can be found as Theorem (5.14) in \cite{kotzing_theory_2024} and follows from \cite{oliveto_simplified_2011} and \cite{oliveto_erratum_2012}.
 
\begin{theorem}[Negative Drift] \label{thm:negative_drift}
    Let $(X_t)_{t \in \N}$ be an integrable random process over $\R$. Suppose there is an interval $[a,b] \subseteq \R$, two constants $\delta, \varepsilon > 0$ and, possibly depending on $\ell = b-a$, a function $r(\ell)$ satisfying $1 \leq r(\ell) \in o(\ell / \log \ell)$ such that, for all $t \in \N$, the following conditions (drift, concentration) hold.
    \begin{enumerate}
        \item[(D)] $\Ex{X_{t+1} - X_t \mid X_0, \dots, X_t; a < X_t < b} \geq \delta$.
        \item[(C)] For all $j \in \N$, $\Prob{\lvert X_{t+1} - X_t \rvert \geq j \mid X_0, \dots, X_t; a < X_t} \leq \frac{r(\ell)}{(1+\varepsilon)^{j}}$.
    \end{enumerate}
	Then there is a constant $c > 0$ such that, for $T = \min\{t \in \N \mid X_t \leq a\}$, we have
    $$
    \Prob{T \leq 2^{c \ell/ r(\ell)} \; \Big| \; X_0 \geq b} \in 2^{-\Omega(\ell/ r(\ell))}.
    $$
\end{theorem}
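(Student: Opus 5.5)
This is the cited simplified negative drift theorem (Oliveto--Witt, with erratum). I would reprove it with the standard exponential-potential (Hajek-type) argument. Write $\Delta_t = X_{t+1}-X_t$ and $\ell = b-a$. The idea is to turn the positive drift of $X_t$ away from $a$ into a multiplicative contraction of $Z_t = e^{-\lambda (X_t - a)}$, for a suitable $\lambda \in \Theta(1/r(\ell))$. Then $Z_t$ behaves like a supermartingale while $X_t$ is inside the interval, and reaching $X_t \le a$ from $X_0 \ge b$ forces $Z_t$ to grow by a factor $e^{\lambda \ell}$.

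\textbf{Step 1 (one-step contraction).} For $a < X_t < b$ I would show
\[
\Ex{e^{-\lambda \Delta_t} \mid X_0,\dots,X_t} \le 1 - \tfrac{\lambda\delta}{2}.
\]
\begin{itemize}
\item Start from the pointwise inequality $e^{-\lambda \Delta} \le 1 - \lambda \Delta + \lambda^2 \Delta^2 e^{\lambda|\Delta|}$.
\item Take conditional expectations and use (D), which gives $\Ex{\Delta_t \mid \dots} \ge \delta$.
\item Bound the quadratic term with (C). If $\lambda \le \tfrac12 \ln(1+\varepsilon)$, summing over $|\Delta_t| = j$ gives $\Ex{\Delta_t^2 e^{\lambda |\Delta_t|}} \le \sum_{j} (j+1)^2 e^{\lambda (j+1)}\, r(\ell)(1+\varepsilon)^{-j} \le C\, r(\ell)$ for a constant $C = C(\varepsilon)$.
\item This gives $\Ex{e^{-\lambda\Delta_t}} \le 1 - \lambda\delta + C\lambda^2 r(\ell)$. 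Choosing $\lambda = \min\{\tfrac{\delta}{2Cr(\ell)}, \tfrac12\ln(1+\varepsilon)\}$ yields the claim.
\end{itemize}

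\textbf{Step 2 (entering the bad region).} I would bound, for a single time step $t$, the probability that $X_{t+1} \le a$ while $X_s > a$ for all $s \le t$.
\begin{itemize}
\item Decompose the history into excursions into $(a,b)$. Each excursion starts either from $X_0 \ge b$ or from a jump down out of $[b,\infty)$.
\item At the start of an excursion, $Z = e^{-\lambda(X - a)}$ is at most $e^{-\lambda \ell}$ times an overshoot factor. By (C), the jump below $b$ has an exponentially decaying tail, so this factor has expectation $O(r(\ell))$, or $O(1)$ after adjusting $\lambda$.
\item During the excursion, Step 1 makes $Z$ a supermartingale.
\item Jumps from $[b,\infty)$ directly to $\le a$ have probability at most $r(\ell)(1+\varepsilon)^{-\ell}$ by (C).
\item Combining these via optional stopping and Markov's inequality should give, per step, a probability at most $O(r(\ell))\, e^{-\lambda \ell} + r(\ell)(1+\varepsilon)^{-\ell} = 2^{-\Omega(\ell/r(\ell))}$. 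Here I use $\lambda \ell = \Theta(\ell/r(\ell))$ and $r(\ell) = o(\ell/\log\ell)$, so the polynomial prefactor $r(\ell)$ is absorbed.
\end{itemize}

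\textbf{Step 3 (union bound).} A union bound over $L = 2^{c\ell/r(\ell)}$ steps, with $c$ small relative to the hidden constant, gives $\Prob{T \le L \mid X_0 \ge b} \le L \cdot 2^{-\Omega(\ell/r(\ell))} = 2^{-\Omega(\ell/r(\ell))}$.

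The main obstacle is Step 2: controlling re-entries into the interval from above. The contraction of Step 1 is available only inside $(a,b)$, and overshoots below $b$ must not cost more than a subexponential factor. I would first try Hajek's formulation, $\Prob{T \le L} \le L\, D\, e^{-\lambda \ell}/p$, where $p = \lambda\delta/2$ and $D$ bounds $\Ex{e^{-\lambda \Delta_t}}$ uniformly. The value of $D$ comes directly from (C), since (C) also holds for $X_t \ge b$. Because $D$ and $1/p$ are polynomial in $r(\ell)$, this already yields the stated bound.
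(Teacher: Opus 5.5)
The paper does not prove this theorem; it quotes it from the literature (Oliveto and Witt's simplified drift theorem in the corrected form of their erratum, as restated in K\"otzing's notes). Your sketch is precisely the standard Hajek-type argument behind it and is correct: the two-sidedness of (C) makes the Taylor remainder $O(\lambda^2 r(\ell))$, giving $\Ex{e^{-\lambda\Delta_t}}\le 1-\lambda\delta/2$ on $(a,b)$ for some $\lambda=\Theta(1/r(\ell))$; the fact that (C) also holds for $X_t\ge b$ bounds the re-entry term by $D=O(r(\ell))$; and all $\mathrm{poly}(r(\ell))$ prefactors are absorbed because $\ell/r(\ell)=\omega(\log\ell)$.

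The one point to make explicit is that (D) and (C) say nothing once $X_t\le a$, so the recursion must be run on the process killed at $T$. With $W_t=e^{-\lambda(X_t-b)}$, $\rho=1-\lambda\delta/2$ and expectations restricted to $\{T>t\}$, use $\Ex{W_{t+1}; T>t}\le\rho\,\Ex{W_t; T>t}+D$ together with $\Prob{T=t+1}\le e^{-\lambda\ell}\,\Ex{W_{t+1}; T>t}$ and the union bound. Your first-passage formulation in Step~2 already has this form, and it makes the excursion and overshoot bookkeeping unnecessary.
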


The following theorem on an upper bound for the run time of an unbiased random walk with one barrier can be found as Theorem (4.2) in \cite{kotzing_theory_2024}

\begin{theorem}[Unbiased Random Walk on the Line, One Barrier] \label{thm:upper_variable_drift}
Let $(X_{t)_t\in \N}$ be random variables over $[0, n]$, and let $T = \min \{ t \in \N \mid X_t =n \}$. Suppose that there is $\delta \in \R_+$ such that, for all $t < T$, we have the following conditions (variance, drift).
    \begin{enumerate}
        \item[(Var)] $\Var{X_{t + 1} - X_t \mid X_0, \dots, X_t } \ge \delta$.
        \item[(D)] $\Ex{X_{t + 1} - X_t \mid X_0, \dots, X_t} = 0$.
    \end{enumerate}
Then
    \begin{align*}
        \Ex{T \mid X_0} \le \frac{n^2-\Ex{X_0 ^2}}{\delta}.
    \end{align*}
\end{theorem}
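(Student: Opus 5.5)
The plan is to reduce the statement to the Additive Drift Theorem (Theorem~\ref{thm:additive_drift}) via the quadratic potential
\[
Z_t \coloneq n^2 - X_t^2 .
\]
The point is that a zero-drift process has no additive drift itself, but its square does, and the rate is exactly the step variance.

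First I check that $(Z_t)_{t\in\N}$ satisfies the hypotheses of Theorem~\ref{thm:additive_drift}.
\begin{itemize}[leftmargin=*]
\item \emph{Integrability.} Each $X_t$ takes values in $[0,n]$, so $Z_t \in [0,n^2]$ is bounded and hence integrable.
\item \emph{Non-negativity (NN).} This holds for all $t$, since $Z_t \geq 0$ always.
\item \emph{Matching hitting times.} Since $X_t \geq 0$, we have $Z_t \leq 0$ if and only if $Z_t = 0$, which holds if and only if $X_t = n$. So $\min\{t \mid Z_t \leq 0\}$ coincides with $T$.
\end{itemize}

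Next comes the drift computation, which is the key step. Write $\Delta_t \coloneq X_{t+1} - X_t$ and condition on $X_0,\dots,X_t$ with $t < T$. Then
\begin{align*}
\Ex{Z_t - Z_{t+1} \mid X_0,\dots,X_t}
&= \Ex{X_{t+1}^2 - X_t^2 \mid X_0,\dots,X_t} \\
&= 2X_t\,\Ex{\Delta_t \mid X_0,\dots,X_t} + \Ex{\Delta_t^2 \mid X_0,\dots,X_t}.
\end{align*}
By (D) the first term vanishes. By (D) again, the conditional second moment of $\Delta_t$ equals its conditional variance, so by (Var) the second term is at least $\delta$. Hence (D) of Theorem~\ref{thm:additive_drift} holds with the same $\delta$.

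Applying Theorem~\ref{thm:additive_drift} then gives
\[
\Ex{T} \leq \frac{\Ex{Z_0}}{\delta} = \frac{n^2 - \Ex{X_0^2}}{\delta}.
\]

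The only real obstacle is the conditioning on $X_0$ in the claimed bound. I would handle it by applying the additive drift argument to the process conditioned on $X_0 = x_0$. The hypotheses are stated conditionally on the full history, which includes $X_0$, so they remain valid under this conditioning. This yields $\Ex{T \mid X_0 = x_0} \leq (n^2 - x_0^2)/\delta$. Read as a random variable, this is the statement $\Ex{T \mid X_0} \leq (n^2 - X_0^2)/\delta$. Taking expectations recovers the stated form with $\Ex{X_0^2}$.

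Alternatively, one can simply invoke Theorem~\ref{thm:additive_drift} as stated, where $\Ex{X_0}$ becomes $\Ex{Z_0}$, and note that the proof of that theorem is itself conditional on the starting value.
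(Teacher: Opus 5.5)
The paper gives no proof of this statement. It is imported from \cite{kotzing_theory_2024} (Theorem~4.2), so there is no in-paper argument to compare with. Your proof is correct, and it is the standard way to obtain this result: apply additive drift to $Z_t = n^2 - X_t^2$. Under (D), the expected one-step decrease of $Z_t$ is exactly the conditional second moment of the step, hence at least $\delta$ by (Var).

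There are two small points of precision. First, Theorem~\ref{thm:additive_drift} conditions on $Z_0,\dots,Z_t$, while you compute conditionally on $X_0,\dots,X_t$. These generate the same $\sigma$-algebra because $x \mapsto n^2 - x^2$ is injective on $[0,n]$. That injectivity, together with $Z_t = 0 \iff X_t = n$, is where $X_t \ge 0$ enters. Second, taking expectations of your conditional bound gives $\Ex{T} \le (n^2 - \Ex{X_0^2})/\delta$. That is not literally the displayed mixed form, which has $\Ex{T \mid X_0}$ on the left. Your two versions are the meaningful readings, and all forms coincide for deterministic $X_0$. This is the only way the paper uses the theorem ($Y_0 = n/2$).

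Your computation actually proves more, and this is worth recording. Since $X_t \ge 0$, you never need (D) with equality. Suppose only $\Ex{X_{t+1} - X_t \mid X_0,\dots,X_t} \ge 0$. Then $2X_t\,\Ex{\Delta_t \mid X_0,\dots,X_t} \ge 0$ and $\Ex{\Delta_t^2 \mid X_0,\dots,X_t} \ge \Var{\Delta_t \mid X_0,\dots,X_t} \ge \delta$, so the same drift bound for $Z_t$ holds.

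This strengthening is not cosmetic. Under exact zero drift on $[0,n]$, the stopped process $(X_{\min\{t,T\}})_{t \in \N}$ is a bounded martingale. Your bound gives $T < \infty$ almost surely, so optional stopping yields $\Ex{X_0} = \Ex{X_T} = n$, i.e.\ $X_0 = n$ almost surely. The hypotheses as stated are therefore satisfiable only in the trivial case $T = 0$.

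The non-negative-drift version is the one that genuinely covers a walk confined to $[0,n]$. There, steps below the lower end are impossible, as are leftward steps blocked by other particles (which the paper itself notes). The particle positions in Theorems~\ref{the:3_to_2_bff} and~\ref{the:3_to_2} are of exactly this kind.
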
 

Further, the following theorem is given in \cite{kotzing_first-hitting_2019} as Corollary~27. 

\begin{theorem}[Martingale Lower Variable Drift]\label{thm:lower_variable_drift}
Let $(X_t)_{t\in \N}$ be random variables over $[\alpha, \beta] \subset \R$, and let $T = \min \{ t \in \N \mid X_t \in \{ \alpha, \beta \} \}$. Suppose that there is $\delta \in \R_+$ such that, for all $t < T$, we have the following conditions (variance, drift).
    \begin{enumerate}
        \item[(Var)] $\Var{X_{t + 1} - X_t \mid X_0, \dots, X_t } \le \delta$.
        \item[(D)] $\Ex{X_{t + 1} - X_t \mid X_0, \dots, X_t} = 0$.
    \end{enumerate}
    Then
    \begin{align*}
        \Ex{T \mid X_0} \ge \frac{(X_0- \alpha)(\beta - X_0)}{\delta}.
    \end{align*}
\end{theorem}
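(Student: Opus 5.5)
The plan is to turn the quadratic potential $g(x) = (x-\alpha)(\beta-x)$ into a submartingale after adding $\delta t$, and then apply optional stopping. Let $\mathcal{F}_t$ be the history $X_0,\dots,X_t$, let $\Delta_t = X_{t+1}-X_t$, and set
\begin{align*}
Z_t = g(X_t) + \delta t .
\end{align*}
Throughout I condition on $X_0$; the bound is then a statement about a fixed starting value.

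\textbf{Step 1 (one-step change of the potential).} For $t<T$, expand $g(X_t+\Delta_t)-g(X_t)$. Since $g(x) = -x^2 + (\alpha+\beta)x - \alpha\beta$, this difference equals
\begin{align*}
\Delta_t\,(\alpha+\beta-2X_t) - \Delta_t^2 .
\end{align*}
Take conditional expectations. By (D), $\Ex{\Delta_t \mid \mathcal{F}_t} = 0$, so the linear term vanishes; $X_t$ is $\mathcal{F}_t$-measurable, so it may be pulled out. Because the mean of $\Delta_t$ is zero, $\Ex{\Delta_t^2 \mid \mathcal{F}_t} = \Var{\Delta_t \mid \mathcal{F}_t} \le \delta$ by (Var). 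Hence
\begin{align*}
\Ex{g(X_{t+1}) - g(X_t) \mid \mathcal{F}_t} \ge -\delta,
\end{align*}
that is, $\Ex{Z_{t+1}-Z_t \mid \mathcal{F}_t} \ge 0$ for $t<T$. Therefore the stopped process $Z_{t\wedge T}$ is a submartingale. Integrability is no issue, since $g$ is bounded on $[\alpha,\beta]$.

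\textbf{Step 2 (optional stopping at bounded times).} For each fixed $t$, the submartingale property gives
\begin{align*}
\Ex{g(X_{t\wedge T})} + \delta\,\Ex{t\wedge T} \ge g(X_0).
\end{align*}

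\textbf{Step 3 (passing to the limit).} This is the only delicate point.
\begin{itemize}
\item If $\Ex{T}=\infty$, the claim is trivial.
\item Otherwise $T<\infty$ almost surely. Then $X_{t\wedge T}\to X_T$ almost surely, and $X_T \in \{\alpha,\beta\}$ by the definition of $T$, so $g(X_T)=0$.
\item Since $0 \le g \le (\beta-\alpha)^2/4$ on $[\alpha,\beta]$, bounded convergence gives $\Ex{g(X_{t\wedge T})}\to 0$.
\item Monotone convergence gives $\Ex{t\wedge T}\to\Ex{T}$.
\end{itemize}
Taking the limit in the inequality of Step 2 yields $\delta\,\Ex{T\mid X_0} \ge (X_0-\alpha)(\beta-X_0)$. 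Dividing by $\delta > 0$ gives the statement.

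The main obstacle is this limiting step. It relies on the process staying inside $[\alpha,\beta]$, which keeps $g$ bounded and nonnegative, and on $X_T$ landing exactly on a barrier. Both properties are provided by the hypotheses, since the random variables take values in $[\alpha,\beta]$.
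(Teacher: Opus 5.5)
Your proof is correct. The paper gives no proof of this statement and simply quotes it as Corollary~27 of K\"otzing and Krejca, so your argument supplies a self-contained derivation rather than competing with one in the paper.

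Your route is the standard one:
\begin{itemize}
\item show that $g(X_{t\wedge T})+\delta\,(t\wedge T)$ with $g(x)=(x-\alpha)(\beta-x)$ is a submartingale;
\item apply optional stopping at the bounded times $t\wedge T$;
\item let $t\to\infty$.
\end{itemize}
In effect this is a lower-bound additive drift argument for the potential $g(X_t)$. Its expected decrease per step is exactly $\mathbb{E}[\Delta_t^2\mid X_0,\dots,X_t]\le\delta$. It hits $0$ precisely at the stopping time $T$, because $g>0$ on $(\alpha,\beta)$ and the process cannot leave $[\alpha,\beta]$.

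A general lower-bound drift theorem needs an extra integrability hypothesis to pass to the limit. You supply it directly: $g$ is bounded on $[\alpha,\beta]$, and $X_T\in\{\alpha,\beta\}$ without overshoot, so $\mathbb{E}[g(X_{t\wedge T})]\to 0$ by bounded convergence.

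The one cosmetic point is that the statement is conditional on $X_0$. The trivial case should therefore be read on the event $\{\mathbb{E}[T\mid X_0]=\infty\}$, with the convergence theorems applied conditionally. This changes nothing in the argument.
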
 

\section{RLS with Two Colors} \label{sec:appendix}

We first introduce the following variables, function and states that will be used throughout the proof.

\begin{definition}[RLS Variables]\label{def:vars}

Let $K_{M,N}$ be the complete bipartite graph where the two sets of the partition have size $M$ and $N$. Let $V_N, V_M$ be the respective sets. Let $x \in \{0,1\}^{n}$ be a bit string denoting the color of each vertex. For a vertex $v \in V_N \cup V_M$, if $x[v] = 0$, we call the vertex black and if $x[v] = 1$, we call the vertex white.
We define $w_N,b_N$ as the number of white and black vertices of $V_N$. $w_M, b_M$ are defined likewise for $V_M$. Let $d_N = w_N - b_N$ denote the difference of white and black vertices in $V_N$. Analogously, $d_M = w_M - b_M$.
\end{definition}

\begin{definition}[Monochromatic counting function]
Let
  \[h\colon \{0,1\}^n \rightarrow \N, x \mapsto |\{(u,v) \in E ~|~ x[u] = x[v]\}|.\]
\end{definition}

Since RLS cannot use more colors than an optimal coloring, the number of monochromatic edges of a state is the only relevant term of the fitness function \textsc{NumUsedColors}. We thus use this counting function to reason about the fitness of states.

We define states as follows:
\begin{definition}[RLS States] \label{def:states}
\begin{enumerate}
    \item A state is called \emph{negatively unbalanced} when $d_N$ and $d_M$ have the same sign.
    \item A state is called \emph{balanced} when $d_N = 0$ or $d_M = 0$.
    \item A state is called \emph{positively unbalanced} when $d_N$ and $d_M$ have different signs.
    \item A state is called \emph{optimal} when it is positively unbalanced and $|d_N| = N$ and $|d_M| = M$. This is a global optimum as it defines a proper $2$-coloring.
\end{enumerate}
\end{definition}

We start by showing that RLS can only progress upward in the ordering of states by showing that the fitness values of these states are correspondingly ordered. For this we use the following lemma, stating the fitness function in terms of the defining variables of the states.

\begin{lemma} \label{lem:general_mon}
    Let $x \in \{0,1\}^{n}$. Then $h(x) = \frac{NM}{2} + \frac{d_Nd_M}{2}$.
\end{lemma}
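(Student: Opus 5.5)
The plan is a direct count of monochromatic edges, grouped by the colors of the endpoints. In $K_{M,N}$ every edge has exactly one endpoint in $V_N$ and one in $V_M$, and all $NM$ such pairs are edges. So an edge is monochromatic exactly when it joins a white vertex of $V_N$ to a white vertex of $V_M$, or a black vertex of $V_N$ to a black vertex of $V_M$. This gives
\begin{align*}
    h(x) = w_N w_M + b_N b_M.
\end{align*}
(The paper's definition of $h$ uses ordered pairs $(u,v)$. We read each edge as counted once with a fixed orientation from $V_N$ to $V_M$, which matches the unordered count in $\textsc{NumUsedColors}$.)

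Next I rewrite the four counts using $N = w_N + b_N$ and $d_N = w_N - b_N$, so that $w_N = (N + d_N)/2$ and $b_N = (N - d_N)/2$, and likewise $w_M = (M + d_M)/2$ and $b_M = (M - d_M)/2$. Substituting,
\begin{align*}
    h(x) = \frac{(N+d_N)(M+d_M) + (N-d_N)(M-d_M)}{4}.
\end{align*}
Expanding both products, the mixed terms $N d_M$ and $d_N M$ cancel, leaving $\frac{2NM + 2 d_N d_M}{4} = \frac{NM}{2} + \frac{d_N d_M}{2}$. This is the claimed identity.

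There is no real obstacle here; the only care needed is to count each edge exactly once. As a check, take the all-white coloring: $d_N = N$ and $d_M = M$, so the formula gives $NM$, the total number of edges, as it should.
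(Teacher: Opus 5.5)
Your proof is correct and follows the paper's argument: count $h(x) = w_N w_M + b_N b_M$, then rewrite in terms of $d_N, d_M$. Your symmetric substitution $w_p = (p+d_p)/2$, $b_p = (p-d_p)/2$ makes the cross terms cancel more directly than the paper's route, which first substitutes $b_p = p - w_p$ and only then converts to $d_p$.
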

\begin{proof}
    Recall that $h$ counts the number of monochromatic edges of a coloring. As the graph is complete, it can be calculated by the number of white vertices in $V_N$ times the number of white vertices in $V_M$ added to the same product for the black vertices. Thus, $h(x) = w_Nw_M + b_Nb_M$.
   
    We first note that for $p \in \{N,M\}$ \begin{align}
        d_p &= w_p - b_p = w_p - (p - w_p) = 2w_p - p; \\
        w_p &= \frac{d_p + p}{2}.
    \end{align}
    Therefore, we get
    \begin{align*}
        h(x) &= w_Nw_M + b_Nb_M \\
        &= w_Nw_M + (N - w_N)(M - w_M) \\
        &= w_Nw_M + NM - w_NM - w_MN + w_Nw_M \\
        &= 2w_Nw_M - w_MN - w_NM + NM \\
        &= w_M(2w_N - N) - w_NM + NM \\
        &= w_Md_N - w_NM + NM & \text{with (1)} \\
        &= \Big(\frac{d_M + M}{2}\Big)d_N - \Big(\frac{d_N + N}{2}\Big)M + NM & \text{with (2)} \\
        &= \frac{d_Nd_M}{2} + \frac{d_NM}{2} - \frac{d_NM}{2} - \frac{NM}{2} + \frac{2NM}{2} \\
        &= \frac{NM}{2} + \frac{d_Nd_M}{2}.
    \end{align*}
\end{proof}
\begin{corollary}\label{cor:state_ordering}
    Let $x,y,z \in \{0,1\}^{n}$. If $x$ is negatively unbalanced, $y$ is balanced and $z$ is positively unbalanced, then
    \[
    h(x) > h(y) > h(z).
    \]
\end{corollary}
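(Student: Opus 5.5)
The plan is to read the ordering directly off the closed form in Lemma~\ref{lem:general_mon}. That lemma gives $h(x) = \frac{NM}{2} + \frac{d_N d_M}{2}$ for every $x \in \{0,1\}^n$. The term $\frac{NM}{2}$ depends only on the graph, not on the coloring. So comparing $h$ across the three kinds of states reduces to comparing the sign of the product $d_N d_M$.

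I would first make the sign conventions in Definition~\ref{def:states} precise. \emph{Same sign} and \emph{different signs} are to be read strictly, so both $d_N$ and $d_M$ are nonzero in those cases. This is consistent with the definitions, since any state with $d_N = 0$ or $d_M = 0$ is classified as balanced. With this reading, the three cases are immediate.
\begin{itemize}
  \item If $x$ is negatively unbalanced, then $d_N$ and $d_M$ are nonzero with the same sign, so $d_N d_M > 0$ and $h(x) > \frac{NM}{2}$.
  \item If $y$ is balanced, one factor vanishes, so $d_N d_M = 0$ and $h(y) = \frac{NM}{2}$.
  \item If $z$ is positively unbalanced, the factors are nonzero with opposite signs, so $d_N d_M < 0$ and $h(z) < \frac{NM}{2}$.
\end{itemize}
Chaining these three comparisons through the common value $\frac{NM}{2}$ gives $h(x) > h(y) > h(z)$.

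There is essentially no obstacle here. The only point needing care is the degenerate case where one of the differences is zero. I would resolve it explicitly by noting that the three state classes partition the remaining possibilities under the strict-sign reading, and that this reading is the one the definition intends.
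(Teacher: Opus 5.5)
Your proposal is correct and follows the paper's proof exactly. Both read the ordering off the closed form $h = \frac{NM}{2} + \frac{d_N d_M}{2}$ from Lemma~\ref{lem:general_mon}: balanced states give exactly $\frac{NM}{2}$, and the sign of $d_N d_M$ places the other two classes strictly above and below that value. Your remark that ``same sign'' and ``different signs'' must be read strictly (both differences nonzero) is a useful clarification that the paper leaves implicit.
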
 
\begin{proof}
    This follows from Definition~\ref{def:states} and Lemma~\ref{lem:general_mon}. For a balanced state, one of $d_N,d_M$ is zero. Thus $h$ is exactly $\frac{NM}{2}$ for such a state. For negatively unbalanced states $d_N,d_M$ have the same sign, which makes their product positive and $h$ strictly bigger than $\frac{NM}{2}$. Similarly, for a positively unbalanced state, the product is negative and $h$ strictly less than $\frac{NM}{2}$.
\end{proof}

\begin{lemma}\label{lem:dp_change}
    For $p \in \{N,M\}$, $d_p$ only increases or decreases by $2$ in a given step of RLS.
\end{lemma}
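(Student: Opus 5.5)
The plan is to use only the definition of $d_p$ and the fact that RLS on $\{0,1\}^n$ flips exactly one bit per iteration. Fix $p \in \{N,M\}$ and consider one step of RLS. The candidate offspring $x'$ differs from $x$ in exactly one position, namely a vertex $v$ chosen uniformly at random, and this vertex lies either in $V_p$ or in the other partition set.

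First, suppose $v \notin V_p$. Then $w_p$ and $b_p$ are unchanged in $x'$, so $d_p$ is unchanged as well. Second, suppose $v \in V_p$. If $v$ was black ($x[v]=0$), then in $x'$ we have $w_p$ increased by one and $b_p$ decreased by one, so $d_p = w_p - b_p$ increases by exactly $2$. Symmetrically, if $v$ was white, $d_p$ decreases by exactly $2$. Equivalently, one can use the identity $d_p = 2w_p - p$ from the proof of Lemma~\ref{lem:general_mon}: since $p$ is fixed and a single flip changes $w_p$ by at most one, $d_p$ changes by an element of $\{-2,0,2\}$.

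Finally, I account for selection in Algorithm~\ref{alg:tboa}. If $x'$ is rejected, the current search point stays the same and $d_p$ does not change. If it is accepted, $d_p$ changes as computed above. So in every step, $d_p$ either stays the same or moves by exactly $\pm 2$, which is the claim when ``only increases or decreases by $2$'' is read as ``changes by at most $2$ in absolute value, and never by $\pm 1$''. There is no real obstacle here. The one point to state carefully is that the operator flips exactly one bit, so two vertices of $V_p$ can never be flipped in the same step; this is what rules out jumps of size $4$.
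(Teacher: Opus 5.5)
Your proof is correct and follows the same route as the paper: a single bit flip in $V_p$ changes $w_p$ and $b_p$ by one each in opposite directions, so $d_p = w_p - b_p$ moves by exactly $\pm 2$. You are slightly more careful than the paper in explicitly covering the cases where $d_p$ stays unchanged (the flipped vertex lies in the other partition, or the offspring is rejected), which the paper leaves implicit.
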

\begin{proof}
    By Definition \ref{def:vars} we have $d_p = w_p - b_p$. If we were to increase $w_p$ by $1$ then we have \[d_p' = w_p' - b_p' = (w_p + 1) - (b_p - 1) = w_p - b_p + 2 = d_p + 2.\] Similarly, if we were to decrease $w_p$ we have \[
    d_p' = w_p' - b_p' = (w_p - 1) - (b_p + 1) = w_p - b_p - 2 = d_p - 2.
    \]
    We also see that increases in one color in the partition correspond to decreases in the other color.
\end{proof}

\begin{corollary}\label{cor:difference_parity}
    For $p \in \{N,M\}$, $d_p$ is even if and only if $|V_p|$ is even. Further, only a graph with at least one partition of even size can be in a balanced state.
\end{corollary}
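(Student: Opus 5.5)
The claim is Corollary~\ref{cor:difference_parity}: $d_p$ is even if and only if $|V_p|$ is even, and a balanced state needs a partition of even size. I will prove it directly from Definition~\ref{def:vars}, without going through Lemma~\ref{lem:dp_change}.

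For the parity part, fix $p \in \{N,M\}$. Every vertex of $V_p$ is either white or black, so $w_p + b_p = p = |V_p|$. Then
\begin{align*}
d_p = w_p - b_p = p - 2b_p .
\end{align*}
Since $2b_p$ is even, $d_p \equiv p \pmod 2$. So $d_p$ is even exactly when $|V_p|$ is even, and this holds for every coloring $x \in \{0,1\}^n$.

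This also agrees with Lemma~\ref{lem:dp_change}. Each step of RLS changes $d_p$ by $\pm 2$, so the parity of $d_p$ is invariant along a run. Consequently the parity class is determined by the partition size and not by the random initial coloring.

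For the second part, recall from Definition~\ref{def:states} that a state is balanced when $d_N = 0$ or $d_M = 0$. Zero is even, so by the first part $|V_N|$ or $|V_M|$ must be even. Taking the contrapositive, if both partitions have odd size, then $d_N$ and $d_M$ are both odd and hence nonzero. Such a graph can therefore never be in a balanced state.

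There is essentially no obstacle here; the argument is a one-line identity. The only points needing care are the identity $w_p + b_p = |V_p|$ and remembering that "balanced" is defined via a disjunction, so a single even partition suffices.
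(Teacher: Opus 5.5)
Your proof is correct, but it argues differently from the paper.

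The paper reasons dynamically. It observes that $d_p = 0$ is attainable exactly when $|V_p|$ is even, and then invokes Lemma~\ref{lem:dp_change} (changes in steps of $2$) to conclude the parity claim. Strictly speaking, that only shows parity is preserved under single flips. To tie it to $|V_p|$ for an arbitrary coloring, such as the random initial one, the paper implicitly needs that every coloring is connected to a reference coloring by single flips. In the odd case, ``$d_p$ is never $0$'' does not by itself give ``$d_p$ is odd'' without such an intermediate-value step.

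Your identity $d_p = |V_p| - 2b_p$ gives $d_p \equiv |V_p| \pmod 2$ for every coloring at once, with no appeal to RLS or reachability. So it is both shorter and fully rigorous. It is also exactly the static fact that later arguments use: Corollary~\ref{cor:neg_to_bal} needs $d_N$ odd and nonzero when $N$ is odd. Your second part (balanced means some $d_p = 0$, hence some partition has even size) is the same as the paper's.

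One small correction to your aside: a single RLS step changes a given $d_p$ by $0$ or $\pm 2$, not always by $\pm 2$, since the step may be rejected or may flip a bit in the other partition. Your proof never uses this, so it is unaffected.
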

\begin{proof}
    We can have $w_p = b_p$, meaning $d_p = 0$, if and only if $|V_p|$ is even. Since, by Lemma \ref{lem:dp_change}, $d_p$ only increases or decreases in steps of $2$, $d_p$ is always even for $|V_p|$ even, and odd for $|V_p|$ odd. The latter statement holds since a balanced state requires $d_p = 0$ in one partition.
\end{proof}

We now prove an expected time for the first state transition.
\begin{lemma}  \label{lem:neg_to_bal}
    If $N$ or $M$ is even, it takes RLS $~\mathcal{O}(n)$ time in expectation to get from a negatively unbalanced to a balanced state. 
\end{lemma}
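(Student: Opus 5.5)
We track the potential $\Phi = |d_N| + |d_M|$, or more precisely, since the state is negatively unbalanced, the distance of each $d_p$ from $0$. Without loss of generality let both $d_N, d_M > 0$ (white preferred in both parts); the case of both negative is symmetric after swapping colors. By Lemma~\ref{lem:general_mon}, $h(x) = \frac{NM}{2} + \frac{d_N d_M}{2}$. By Lemma~\ref{lem:dp_change}, a single RLS step changes exactly one of $d_N, d_M$ by $\pm 2$. If a white vertex in $V_p$ is flipped, then $d_p$ decreases by $2$. Since the other difference is positive, $h$ strictly decreases and the step is accepted. If instead a black vertex in $V_p$ is flipped, then $h$ strictly increases and the step is rejected. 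Hence, while the state is negatively unbalanced, $d_N$ and $d_M$ never increase, and the process stays in the negatively unbalanced states until one of them hits $0$. Corollary~\ref{cor:state_ordering} confirms that we cannot jump past the balanced states: by Lemma~\ref{lem:dp_change} each $d_p$ moves in steps of $2$, so it cannot change sign without passing through $0$.

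We need some $d_p$ with $|V_p|$ even to reach $0$. By Corollary~\ref{cor:difference_parity}, only a part of even size can ever become balanced. So fix a part $p$ with $|V_p|$ even, which exists by assumption, and consider $X_t = d_p/2 \ge 0$. The time $T$ at which $X_t = 0$ bounds the time to reach a balanced state from above, since $d_p$ can only become $0$ if it reaches $0$ first. (The other part may also become balanced earlier, which only helps.) In each step, $d_p$ decreases by $2$ whenever a white vertex of $V_p$ is flipped. This happens with probability $w_p/n$. Because $d_p > 0$ means $w_p > |V_p|/2 \ge 1/2$, and $w_p \geq 1$ as an integer, the probability is at least $1/n$. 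By the argument above, $X_t$ never increases. So the additive drift is at least $1/n$, and by the Additive Drift Theorem (Theorem~\ref{thm:additive_drift}) we get $\Ex{T} \le n \cdot \Ex{X_0} \le n \cdot n/2$. This only gives $\mathcal{O}(n^2)$.

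The main obstacle is therefore sharpening this to $\mathcal{O}(n)$. I would use a better bound on $w_p$. Since $d_p>0$ we have $w_p \ge |V_p|/2$, so the drift of $X_t$ is at least $|V_p|/(2n)$, while $X_0 \le |V_p|/2$. Additive drift then yields $\Ex{T} \le \frac{|V_p|/2}{|V_p|/(2n)} = n$, which is the claimed $\mathcal{O}(n)$. It remains to check the hypotheses (NN) and (D) of Theorem~\ref{thm:additive_drift} up to time $T$. This holds because before $T$ the state remains negatively unbalanced (or becomes balanced via the other part, at which point we stop), so the acceptance analysis above applies throughout.
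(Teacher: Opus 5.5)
Your core argument is the paper's: additive drift on $d_p/2$, with drift at least $|V_p|/(2n)$ (because $w_p > |V_p|/2$ while $d_p>0$) and starting value at most $|V_p|/2$, giving expected time at most $n$. Your explicit choice of a part of even size is more careful than the paper, which runs the same computation on both parts regardless of parity. When both parts are even, your proof is complete.

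There is, however, a false step that matters when exactly one part is even. The claim that each $d_p$ ``cannot change sign without passing through $0$'' fails for a part $V_q$ of odd size. Suppose $d_q = 1$ and the other difference $d_p$ is positive. Flipping a white vertex of $V_q$ gives $d_q = -1$ and changes $h$ by $-d_p < 0$. So the step is accepted, and the state jumps directly from negatively to positively unbalanced. Corollary~\ref{cor:state_ordering} favours this jump rather than ruling it out. From then on, by Lemma~\ref{lem:general_mon}, only steps that increase $d_p$ are accepted, so $X_t = d_p/2$ never reaches $0$. Your check of (D), ``before $T$ the state remains negatively unbalanced, or becomes balanced via the other part'', misses exactly this case. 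In fact $T=\infty$ with positive probability, so $\Ex{T}\le n$ cannot hold.

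The repair is to stop the process at the first time $T'$ at which the state is no longer negatively unbalanced, and set $X_t = 0$ for $t \ge T'$. For $t<T'$ the drift bound $|V_p|/(2n)$ still holds, since a jump via the other part only sets $X_{t+1}=0\le X_t$. Hence $\Ex{T'} \le n$. Note that this yields ``balanced or positively unbalanced'' rather than ``balanced''. With exactly one even part, the balanced states can genuinely be skipped, so the lemma is only true in this weaker form. The weaker form is also all that is needed downstream, namely in \Cref{cor:neg_to_bal} and the proof of Theorem~\ref{thm:rls_runtime}. The paper's own proof, and that corollary's remark that flips in the odd part can be disregarded, overlook the same jump.
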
 
\begin{proof}
    We consider a stochastic process $(S_t)_{t \in \N}$ where $S_t \in \{0,1\}^{n}$. Let $S_t$ be negatively unbalanced. We know that either $d_N < 0$ and $d_M < 0$ or $d_N > 0$ and $d_M > 0$. 
    Since the colors white and black behave identically, they are equivalent up to renaming. Thus, we assume, without loss of generality, $d_N> 0$ and $d_M > 0$, meaning $w_N > b_N$ and $w_M > b_M$. We now show that flipping any of the $w_N,w_M$ white bits will improve the fitness and flipping any of the $b_N,b_M$ black bits will worsen it, thus leading to rejection of the resulting state. By Lemma~\ref{lem:general_mon}, we have
    \begin{align*}
        h(S_t) &= \frac{NM}{2} + \frac{d_Nd_M}{2} \\
        \intertext{Now, without loss of generality, we increase $w_N$ by one by flipping a black bit. Using Lemma \ref{lem:dp_change} we have}
        h(S_{t+1}) &= \frac{NM}{2} + \frac{(d_N+2)d_M}{2} > \frac{NM}{2} + \frac{d_Nd_M}{2} = h(S_t).
        \intertext{Now, we increase $b_N$ by one by flipping a white bit. Then}
        h(S_{t+1}) &= \frac{NM}{2} + \frac{(d_N - 2)d_M}{2} < \frac{NM}{2} + \frac{d_Nd_M}{2} = h(S_t).
    \end{align*}

To reach a balanced state, either $d_N$ or $d_M$ has to equal zero, meaning $w_N = b_N$ or $w_M = b_M$.
We consider the partitions independently, as a change in the difference of one partition does not influence the difference of the other partition.
 
We first consider $V_N$. Let $d'_N$ denote the variable $d_N$ for a state where $\frac{d_N}{2}$ vertices of the more frequent color are flipped. Now we have \[
d'_N = \left(w_N - \frac{d_N}{2}\right) - \left(b_N + \frac{d_N}{2}\right) = w_N - b_N - d_N = d_N - d_N = 0
\] 
We consider a process $(D_t)_{t \in \N}$ where $D_t$ denotes $\frac{d_N}{2}$ at time $t$. 
We have \[\Prob{D_t - D_{t + 1} = 1 ~|~ D_t \geq 1} = \frac{w_N}{n}   \geq \frac{\frac{N}{2}}{n} = \frac{N}{2n}.\] 
and thus \[\Ex{D_t - D_{t + 1} ~|~ D_0,\dots, D_t} \geq \frac{N}{2n}.\]
Let $T = \min\{t \in \N ~|~ D_t = 0\}$. It holds that $D_0 \leq \frac{N}{2}$ and thereby $\Ex{D_0} \leq \frac{N}{2}$. 
It further holds that $D_t - D_{t + 1} \leq 1$ since $d_N$ can only decrease by $2$ in one step, and thereby $\frac{d_N}{2}$ can only decrease by $1$. It follows that $D_t \geq 0$ for all $t < T$.

Then by Theorem \ref{thm:additive_drift} [\nameref{thm:additive_drift}] we have \[\Ex{T} \leq \frac{\frac{N}{2}}{\frac{N}{2n}} = n \in \mathcal{O}(n).\]

If we consider this process in $V_M$ it also has expected time $\mathcal{O}(n)$ which can be proven by the same calculations. Since we are interested in the minimum of both hitting times, we have an expected time of $\mathcal{O}(n)$ for either $d_N$ or $d_M$ becoming $0$, which constitutes a balanced state.
\end{proof}

\begin{corollary}\label{cor:neg_to_bal}
    For $N,M$ odd, the expected time it takes RLS to reach a state with better fitness than all negatively unbalanced states is not worse than $\mathcal{O}(n)$.
\end{corollary}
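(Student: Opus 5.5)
We handle $N,M$ odd by reducing to the same one-dimensional additive drift argument as in Lemma~\ref{lem:neg_to_bal}, but with target $|d_p| = 1$ instead of $d_p = 0$. Since both $N$ and $M$ are odd, Corollary~\ref{cor:difference_parity} gives that $d_N$ and $d_M$ are always odd, so no balanced state exists. Every state is therefore either negatively or positively unbalanced. By Corollary~\ref{cor:state_ordering}, every positively unbalanced state has $h < \frac{NM}{2}$ while every negatively unbalanced state has $h > \frac{NM}{2}$. Hence it suffices to bound the expected time until RLS first enters a positively unbalanced state. Since RLS never accepts a worsening, it never returns to a negatively unbalanced state afterwards.

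First, as in Lemma~\ref{lem:neg_to_bal}, assume without loss of generality $d_N, d_M > 0$. The computation there via Lemma~\ref{lem:general_mon} shows that flipping a white vertex (in either part) strictly decreases $h$, while flipping a black vertex strictly increases $h$ and is rejected. Hence, while the state is negatively unbalanced, $d_N$ and $d_M$ never increase. Each accepted step lowers exactly one of them by $2$, by Lemma~\ref{lem:dp_change}.

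Consider $D_t = \frac{d_N - 1}{2}$, a nonnegative integer with $D_0 \le \frac{N-1}{2}$. The process stops once $d_N = 1$, or earlier if the state has become positively unbalanced. While $d_N \ge 3$ we have $w_N > \frac{N}{2}$, so $\Prob{D_{t+1} = D_t - 1} \ge \frac{w_N}{n} \ge \frac{N}{2n}$, and $D_t$ never increases. By Theorem~\ref{thm:additive_drift}, the expected time until $d_N = 1$ is at most $\frac{(N-1)/2}{N/(2n)} \le n$. The same bound holds for $d_M$.

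The main obstacle is the final step from $d_N = 1$ (or $d_M = 1$) to a sign change. Once $d_N = 1$ and $d_M > 0$, flipping one of the $w_N = \frac{N+1}{2}$ white vertices in $V_N$ makes $d_N = -1$. The state is then positively unbalanced and $h$ strictly decreases, so the flip is accepted. This happens with probability at least $\frac{N+1}{2n} \ge \frac{1}{2n}$ per step. In the worst case this costs $\mathcal{O}(n)$ further expected steps by a geometric waiting time. Meanwhile other accepted flips only reduce $d_M$ and keep $d_N = 1$, so the bound is unaffected.

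Note that the drift bound degrades for small $N$: the per-step success probability is only $\Theta(N/n)$. We avoid this by always taking the larger part, or equivalently the minimum of the two hitting times, whose parameter is at least $n/2$. This gives constant-order success probability per step, so the total is $\mathcal{O}(n)$, proving the corollary.
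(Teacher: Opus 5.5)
Your proof is correct and follows essentially the paper's route, namely the additive-drift argument of Lemma~\ref{lem:neg_to_bal} applied to one part's surplus. The only difference is that the paper tracks $D_t = \frac{d_N+1}{2}$, so the final sign-changing flip from $d_N = 1$ to $d_N = -1$ is covered by the same drift bound rather than by your separate geometric waiting phase. Your closing paragraph is unnecessary, because both of your bounds, $\frac{(N-1)/2}{N/(2n)} \le n$ and $\frac{2n}{N+1} \le n$, already hold uniformly in $N$, so nothing degrades for small parts.
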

\begin{proof}
   If only one of $N$ or $M$ is even, we can disregard any flips in the odd partition $P$, as this partition cannot have $d_P = 0$ as given by Corollary~\ref{cor:difference_parity}. To reach a balanced state we refer to the analysis in Lemma \ref{lem:neg_to_bal} for a time bound of $\mathcal{O}(n)$.

   If both $N$ and $M$ are odd, we have that $d_N \neq 0$ and $d_M \neq 0$. Therefore, when starting in a negatively unbalanced state, RLS will directly transition to a positively unbalanced state. Consider the process $(D_t)_{t \in \N}$ of Lemma \ref{lem:neg_to_bal} with $\frac{d_N + 1}{2}$ instead of $\frac{d_N}{2}$. Then $D_t = 0$ implies $d_N = -1 < 0$, which constitutes a positively unbalanced state if $d_M > 0$. This new process then yields the same run time result of $\mathcal{O}(n)$ for RLS to reach a positively unbalanced state.
\end{proof}

Now we concern ourselves with balanced states and prove an expected time for the transition from balanced state to a positively unbalanced one.

\begin{lemma}\label{lem:balanced_progress}
    A balanced state can only be left when bits in the balanced partition are flipped.
\end{lemma}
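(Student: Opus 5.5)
The plan is to express the state classification in terms of the signs of $d_N$ and $d_M$ and to track how a single RLS step can change them. By Definition~\ref{def:states}, a state is balanced exactly when $d_N = 0$ or $d_M = 0$. Without loss of generality, assume the current state has $d_N = 0$; the case $d_M = 0$ is symmetric. Leaving the balanced state then means reaching a state in which both $d_N \neq 0$ and $d_M \neq 0$.

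One step of RLS flips exactly one bit, and this bit lies in either $V_N$ or $V_M$. By Lemma~\ref{lem:dp_change}, such a flip changes $d_p$ by exactly $\pm 2$ for the partition $p$ containing the flipped vertex, and leaves the other difference unchanged, since $w_q$ and $b_q$ are untouched for $q \neq p$.

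Now suppose the flipped bit lies in $V_M$, the partition that is not balanced (if it happens that $d_M = 0$ as well, then every partition is balanced and there is nothing to show). Then $d_N$ is still $0$ after the step, so the resulting state is again balanced by definition. Whether or not RLS accepts this mutation, the new state is balanced: by Lemma~\ref{lem:general_mon}, $h = \frac{NM}{2}$ is unchanged, so the step is neutral and accepted, and otherwise we simply remain in the old balanced state. Therefore a transition to a non-balanced state requires $d_N$ to change, which happens only when a bit in $V_N$, the balanced partition, is flipped.

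There is one subtle point. If both partitions are balanced, "the balanced partition" is ambiguous, and we read the statement as "some balanced partition". The argument still works: leaving the balanced state requires both differences to become nonzero, but a single flip changes only one of them, so a state with $d_N = d_M = 0$ cannot even be left in one step. That step is necessarily a flip within a balanced partition anyway. I expect no real obstacle here beyond stating this interpretation cleanly.
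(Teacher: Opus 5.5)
Your proof is correct and uses the same key observation as the paper's: a flip in the non-balanced partition leaves the zero difference at zero. The paper phrases this through $h(x) = \frac{NM}{2} + \frac{d_N d_M}{2}$, saying such flips leave the fitness unchanged, while you argue directly from the definition of a balanced state via Lemma~\ref{lem:dp_change}; this is an equivalent and slightly more explicit route, and it also handles the $d_N = d_M = 0$ case.
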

\begin{proof}
    By Lemma \ref{lem:general_mon} for $x \in \{0,1\}^{n}$ we have \[
    h(x) = \frac{NM}{2} + \frac{d_Nd_M}{2}.
    \]
    When $x$ is a balanced state we have $d_N = 0$ or $d_M = 0$. As long as one of those remains equal to zero, flips in the other partition do not change the fitness. Therefore, only flips of RLS which give both $d_N \neq 0$ and $d_M \neq 0$ will change the fitness.
\end{proof}

\begin{lemma} \label{lem:bal_to_pos}
    Starting from a balanced state, RLS takes in expectation $\mathcal{O}(n)$ iterations to enter a positively unbalanced state.
\end{lemma}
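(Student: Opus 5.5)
The plan is a simple per-step success-probability argument rather than a drift argument. By \Cref{cor:state_ordering}, once RLS is in a balanced state it never accepts a negatively unbalanced one. So it suffices to show the following: from every balanced state, within at most two iterations, RLS enters a positively unbalanced state with probability at least $1/n$. A geometric-trials (restart) argument then gives an expected $\mathcal{O}(n)$ iterations.

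\textbf{Case A: exactly one of $d_N,d_M$ is zero.} Say $d_M = 0$ and $d_N \neq 0$; the other case is symmetric. Then $V_M$ has even size $M \geq 2$ and contains exactly $M/2$ vertices of each color. By \Cref{lem:general_mon} the fitness is $h = \frac{NM}{2} + \frac{d_Nd_M}{2}$. By \Cref{lem:dp_change}, flipping one of the $M/2$ vertices of $V_M$ whose color is the preferred color of $V_N$ changes $d_M$ to $\pm 2$ with sign opposite to $d_N$. This makes $d_Nd_M<0$, so the new state is strictly better and positively unbalanced, and RLS accepts it. This happens with probability $\frac{M/2}{n} \geq \frac{1}{n}$.

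The remaining moves are of three kinds. Flips in $V_M$ of the other color are rejected. Flips in $V_N$ are neutral, by \Cref{lem:balanced_progress}; they keep the state balanced, but may turn $d_N$ into $0$ (possible only if $N$ is even).

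\textbf{Case B: $d_N = d_M = 0$.} Every single flip keeps $h = NM/2$, so RLS accepts it. The state after one step has exactly one difference equal to $\pm 2$ and the other equal to $0$, which is Case A.

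Combining the cases: from any balanced state, at most one step leads to a Case A state, and from there the next step succeeds with probability at least $1/n$. Every other outcome leaves the process in a balanced state, since negatively unbalanced states are never accepted. Hence each block of two iterations succeeds with probability at least $1/n$, independently of the history, and the expected number of iterations is at most $2n \in \mathcal{O}(n)$.

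The main obstacle I anticipate is controlling the neutral moves in the unbalanced partition. These can repeatedly push the process back and forth between Case A and Case B, and possibly switch which partition is the balanced one. The argument above sidesteps this by never tracking $d_N$ or $d_M$ quantitatively. It only uses that the balanced partition is always nonempty of even size and exactly half of each color, so the bound $\frac{|V_P|}{2n} \geq \frac{1}{n}$ holds uniformly in every Case A state. The point to check carefully is exactly this uniformity: a Case A state in which the balanced partition is very small still has at least one correctly colored vertex to flip.
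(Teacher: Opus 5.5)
Your proof is correct, but it is organized differently from the paper's.

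The paper builds a four-state Markov chain with states $S$ (both partitions balanced), $U_N$, $U_M$ (exactly one partition balanced) and $G$ (positively unbalanced). It computes the two-step probability $\frac{N}{n}\cdot\frac{M}{2n}+\frac{M}{n}\cdot\frac{N}{2n}=\frac{NM}{n^2}$ of going from $S$ to $G$. A restart argument then bounds the expected time by $\frac{2n^2}{NM}\in\mathcal{O}(n)$.

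You instead prove a success bound that holds uniformly over all balanced states. It uses only that the balanced partition has even size at least $2$, with exactly half of each color. This makes your restart step rigorous. The paper treats every failed two-step block as if it ended back in $S$, but a failed block can end in $U_N$ or $U_M$. Take, for example, $M=2$, $N\approx n$ and $d_N=\pm 2$: there the two-step success probability is at most $\frac{3}{2n}<\frac{NM}{n^2}$ for large $n$. So the paper's claimed domination needs at least a constant-factor repair. Your argument does not care how neutral moves shuffle the process between these configurations, including when they switch which partition is balanced.

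Your argument also covers the case where only one partition has even size; then only your Case A occurs. So Corollary~\ref{cor:bal_to_pos} becomes a special case of your proof.

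The paper's chain does show a finer dependence, $\mathcal{O}(n^2/(NM))$, which is constant when both partitions have linear size; rounding your bound down to $\frac{1}{n}$ hides this. You would recover it by keeping your Case A bound as $\frac{|V_P|}{2n}$ for the balanced partition $P$. The lemma, however, only asks for $\mathcal{O}(n)$.
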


\begin{proof}
We model this process with a Markov chain  $(H_t)_{t \in \N}$ with $4$ states. We name these states as follows: $S$, the starting state; $U_N$, a balanced state with $d_N \ne 0$; $U_M$, a balanced state with $d_M \ne 0$; and $G$ for the positively unbalanced state. We have the following state diagram, where in the following we will argue the depicted transition probabilities (self-loop probabilities are suppressed).

\begin{center}
\begin{tikzpicture}[->,>=stealth,shorten >=2pt, line width=0.5pt, node distance=2cm]
\node [circle, draw] (S) {$S$};
\node [circle, draw] (UN) [above right of=S,xshift=1cm]{$U_N$};
\node [circle, draw] (UM) [below right of=S,xshift=1cm] {$U_M$};
\node [circle, draw] (G) [below right of =UN, xshift=1cm] {$G$};
\path (S) edge[bend left = 10] node[above]{$\frac{N}{n}$} (UN);
\path (S) edge[bend right = 10] node[below]{$\frac{M}{n}$} (UM);
\path (UN) edge[bend left = 10] node[above]{$\frac{M}{2n}$} (G);
\path (UM) edge[bend right = 10] node[below]{$\frac{N}{2n}$} (G);
\path (UN) edge[bend left = 10] node[below]{$\leq \!\frac{N}{n}$} (S);
\path (UM) edge[bend right = 10] node[above, xshift=0.2cm,yshift=-0.1cm]{$\leq\!\frac{M}{n}$} (S);
\end{tikzpicture}
\end{center} 
When in state $S$, flipping any bit in $[M]$ will make this partition unbalanced, giving us $\Prob{H_{t + 1} = U_M ~|~ H_t = S} = \frac{M}{n}$ and, analogously, $\Prob{H_{t + 1} = U_N ~|~ H_t = S} = \frac{N}{n}$.

Now we consider the state $U_N$. We want to bound the probability of transitioning from this state to $G$. For $d_N < 0$ , by Lemma \ref{lem:balanced_progress} we have to flip one of the $\frac{M}{2}$ white bits. Likewise, for $d_N > 0$ we have to flip one of the $\frac{M}{2}$ black bits. This gives $\Prob{H_{t+1} = G ~|~ H_t = U_N} = \frac{M}{2n}$ and, analogously, $\Prob{H_{t+1} = G ~|~ H_t = U_M} = \frac{N}{2n}$. While not necessary for our analysis, we note that the probability to revert to $S$ from $U_N$ is at most $N/n$, since we need to flip a bit in $[N]$ in order to balance this partition.

Note that $G$ is positively unbalanced, so the algorithm cannot transition back to another state in the given state diagram.

To derive an expected hitting time for $G$ for this process, we pessimistically ignore the possibility of starting or staying in $U_N$ or $U_M$, but instead either make it in two steps from $S$ to $G$ or revert to $S$.
Now we have that
\begin{align*}
    \Prob{H_{t + 2} = G ~|~ H_t = S} = \frac{N}{n}\cdot \frac{\frac{M}{2}}{n} + \frac{M}{n}\cdot \frac{\frac{N}{2}}{n} = \frac{NM}{n^2}.
\end{align*}
Using a restart argument, the first hitting time $T$ is stochastically dominated by $2$ times a geometric distribution with parameter $NM/n^2$. Therefore, $\Ex{T} \leq \frac{2n^2}{NM} = \frac{2(N+M)^2}{NM}$. Without loss of generality assume that $N < M$. Then $\Ex{T} \leq \frac{8M^2}{M} \in \mathcal{O}(M) = \mathcal{O}(n)$.
\end{proof}

\begin{corollary}\label{cor:bal_to_pos}
    For only one partition $P \in \{N,M\}$ even, the time to reach a positively unbalanced from a balanced state is $\mathcal{O}(n)$.
\end{corollary}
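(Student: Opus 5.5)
The plan is to adapt the proof of Lemma~\ref{lem:bal_to_pos} to the situation where exactly one partition, say $V_P$ with $P \in \{N,M\}$, has even size. Let $Q$ denote the other, odd partition. By Corollary~\ref{cor:difference_parity}, $d_Q$ is odd and therefore never $0$. Consequently, every balanced state must have $d_P = 0$ and $d_Q \neq 0$. The four-state Markov chain then collapses: the states $S$ (both partitions balanced) and $U_P$ (balanced with $d_P \neq 0$) cannot occur. The only balanced configuration is the one already labelled $U_Q$ in the proof of Lemma~\ref{lem:bal_to_pos}, namely $d_P = 0$ and $d_Q \neq 0$.

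\textbf{Step 1: transition probability out of the balanced state.} By Lemma~\ref{lem:balanced_progress}, the state can only change when a bit in $V_P$ is flipped, because flips in $V_Q$ keep $d_P = 0$ and are neutral. Since $d_P = 0$, the partition $V_P$ contains exactly $P/2$ white and $P/2$ black vertices. Flipping a vertex of the appropriate color, namely the one that makes $\operatorname{sign}(d_P) = -\operatorname{sign}(d_Q)$, lowers $h$ below $NM/2$ by Lemma~\ref{lem:general_mon}, so the result is a positively unbalanced state and it is accepted. Flipping a vertex of the other color would make the state negatively unbalanced, which is strictly worse by Corollary~\ref{cor:state_ordering}, so that move is rejected. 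Hence in every iteration the process leaves to a positively unbalanced state with probability exactly $\frac{P}{2n}$ and otherwise stays balanced. Because the process never moves backwards in the state ordering, no revert transitions need to be handled.

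\textbf{Step 2: hitting time.} The waiting time is geometric with success probability $\frac{P}{2n}$, so its expectation is $\frac{2n}{P}$.

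\textbf{Main obstacle.} This bound is $\mathcal{O}(n)$ for every $P \geq 2$; in fact it is $\mathcal{O}(1)$ whenever $P \in \Theta(n)$. The only point needing care is the degenerate case where $P$ is very small, for example $P = 2$. Even then, the bound $\frac{2n}{P} \le n$ gives $\mathcal{O}(n)$. The case $P = 0$ does not occur, because both partitions are nonempty.
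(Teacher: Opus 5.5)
Your proposal is correct and follows essentially the same route as the paper: only flips in the even partition can change the state, and exactly $\frac{P}{2}$ of those flips lead to a positively unbalanced state. The hitting time is therefore geometric with success probability $\frac{P}{2n}$ and expectation $\frac{2n}{P} \in \mathcal{O}(n)$, and your explicit remark that flips of the wrong color are rejected (leaving success probability $\frac{P}{2n}$ rather than $\frac{P}{n}$) states more precisely what the paper phrases loosely as ``either one of the $\frac{P}{2}$ black or $\frac{P}{2}$ white bits''.
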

\begin{proof}
    If only one partition is even, according to Lemma~\ref{lem:balanced_progress}, only flips in this partition can cause a state change. This greatly simplifies the modeling of the process. By observations similar as in Lemma~\ref{lem:bal_to_pos}, we note that in every iteration, a flip of either one of the $\frac{P}{2}$ black or $\frac{P}{2}$ white bits will lead to a positively unbalanced state. The hitting time $T$ of this process is therefore again geometrically distributed with success probability $\frac{P}{2n}$ and $\Ex{T} = \frac{2n}{P} < 2n \in \mathcal{O}(n)$.
\end{proof}

We now prove an expected time for the transition into the optimal state, i.e. the time to reach a proper and optimal coloring.

\begin{lemma}\label{lem:pos_to_opt}
    From a positively unbalanced state, it takes RLS $\mathcal{O}(n \log n)$ iterations in expectation to find an optimal coloring.
\end{lemma}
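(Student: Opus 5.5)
Without loss of generality, the positively unbalanced state has $d_N > 0$ and $d_M < 0$, so $V_N$ prefers white and $V_M$ prefers black. The optimum is then $d_N = N$ and $d_M = -M$, which is a proper $2$-coloring. I will use as potential the number of "wrong" vertices, namely
\[
X_t = b_N + w_M = \frac{N - d_N}{2} + \frac{M + d_M}{2}.
\]
This potential is $0$ exactly at the optimum.

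First I check that the state stays positively unbalanced and that $X_t$ never increases. By Lemma~\ref{lem:general_mon}, $h = \frac{NM}{2} + \frac{d_N d_M}{2}$. By Lemma~\ref{lem:dp_change}, one RLS step changes exactly one of $d_N, d_M$ by $\pm 2$.
\begin{itemize}
\item Suppose a black vertex of $V_N$ is flipped to white. Then $d_N$ increases by $2$, and since $d_M < 0$ this strictly decreases $h$. The step is accepted and $X_t$ decreases by one.
\item Suppose a white vertex of $V_N$ is flipped to black. Then $d_N$ decreases by $2$ and $h$ strictly increases, so the step is rejected.
\item The same two statements hold for $V_M$ with the roles of the colors exchanged.
\end{itemize}
Hence every accepted step decreases $X_t$ by exactly one. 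The signs of $d_N, d_M$ can only move further from $0$, so the state stays positively unbalanced; this is also consistent with Corollary~\ref{cor:state_ordering}. The process is therefore monotone.

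In each iteration, RLS picks one of the $X_t$ wrong vertices with probability exactly $X_t/n$, and such a flip is always accepted. This gives
\[
\Ex{X_t - X_{t+1} \mid X_0,\dots,X_t} = \frac{X_t}{n}.
\]
I then apply Theorem~\ref{thm:mult_drift} [\nameref{thm:mult_drift}] with $\delta = 1/n$ and $X_0 \le n$. This yields $\Ex{T} \le n(1+\ln n) \in \mathcal{O}(n\log n)$. Alternatively, a coupon-collector style sum $\sum_{i=1}^{X_0} n/i$ gives the same bound.

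There is no real obstacle here. The only care needed is the bookkeeping of signs in Lemma~\ref{lem:general_mon}, so that "wrong" flips are indeed strictly improving and "right-to-wrong" flips strictly worsening. Because RLS flips only one bit per step, none of the multi-bit issues of the \ooea arise.
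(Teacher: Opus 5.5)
Your proposal is correct and follows essentially the same route as the paper. Like the paper, you use Lemma~\ref{lem:general_mon} to show that from a positively unbalanced state exactly the flips of the $X_t$ ``wrong'' bits are accepted (each strictly lowering $h$), and then apply multiplicative drift with $\delta = 1/n$ to get $n(1+\ln n)$. The paper uses the mirrored convention $d_N<0<d_M$ and the slightly sharper bound $F_0 \le n/2$; your sign bookkeeping, including the orientation of the drift $\Ex{X_t - X_{t+1} \mid X_0,\dots,X_t}$, is if anything cleaner than the paper's.
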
 

\begin{proof}
Assume, without loss of generality, that $d_N < 0$ and $d_M > 0$. The optimal, proper coloring has $d_N = -N$ and $d_M = M$, as the complementary coloring would have to go through a negatively unbalanced or balanced state. According to Corollary \ref{cor:state_ordering}, this would result in a worse fitness and is therefore not possible.

We first want to show that only flips that decrease $d_N$ or increase $d_M$ will be accepted by RLS.
Let $S_t$ be the state at time $t$. The fitness of this state is 
\begin{align*}
    h(S_t) &= \frac{NM}{2} + \frac{d_Nd_M}{2}.
\end{align*}
Similarly to the proof of Lemma \ref{lem:neg_to_bal}, we consider changes in $d_N$ and $d_M$.
As $d_N - 2 < d_N < d_N + 2 \leq 1$ and $-1 \leq d_M - 2 < d_M < d_M + 2$, we see that only decreases in $d_N$ and increases in $d_M$ yield a better fitness. This also shows that we can flip the missing bits in arbitrary order, as only the changes in $d_N, d_M$ matter.

We therefore have to flip the remaining $w_N + b_M$ bits. 
We consider the process $(F_t)_{t \in \N}$ where $F_t$ denotes then number of bits that still have to be flipped at time $t$. 
Then $\Ex{F_{t + 1} - F_t ~|~ F_t} = \frac{F_t}{n}$.
Let $T = \min\{t \in \N ~|~ F_t = 0\}$. As $F_0 \leq \frac{n}{2}$, $\Ex{F_0} \leq \frac{n}{2} < n$ as well.
Then by Theorem \ref{thm:mult_drift} [\nameref{thm:mult_drift}]
\begin{align*}
\Ex{T} \leq~ n(1 + \log \Ex{F_0}) 
\leq~n(1 + \log n) 
\in~ \mathcal{O}(n\log n).
\end{align*}
\end{proof}

We now use these to prove our main theorem of this section.
\begin{proof}[Proof of Theorem \ref{thm:rls_runtime}]
    For graphs with both partitions having even size, by Lemmas \ref{lem:neg_to_bal}, \ref{lem:bal_to_pos} and \ref{lem:pos_to_opt} we have a run time of
    \begin{align*}
        \mathcal{O}(n) + \mathcal{O}(n) + \mathcal{O}(n \log n)
        = \mathcal{O}(n \log n).
    \end{align*}

    For graphs with at least one partition of even size, by Corollaries~\ref{cor:neg_to_bal} and~\ref{cor:bal_to_pos}, as well as Lemma~\ref{lem:pos_to_opt}
    we have a run time of 
    \begin{align*}
        \mathcal{O}(n) + \mathcal{O}(n) + \mathcal{O}(n \log n)
        = \mathcal{O}(n \log n).
    \end{align*}

    Finally, for graphs where both partitions are of odd size, by Corollary~\ref{cor:neg_to_bal} and Lemma~\ref{lem:pos_to_opt}, we have a run time of
    \begin{align*}
        \mathcal{O}(n)  + \mathcal{O}(n \log n) 
        = \mathcal{O}(n \log n).
    \end{align*}
\end{proof}

\section{\ooea with \textsc{NumUsedColors}}

\begin{lemma}[Concentration] \label{lem:concentration}
    Let $G = (V, E)$ be a graph with $n$ vertices and let $C$ be a fixed set of colors. Let $(X_t)_{t \in \N}$ denote the amount of vertices colored in any color from $C$ for the \ooea using fitness function $\textsc{NumUsedColors}$. Then there exist constants $r \geq 1$ and $\varepsilon > 0$ such that, for all $t, j \in \N$, if $0 < X_t$, $$\Prob{\lvert X_{t + 1} - X_t \rvert \geq j \mid X_t} \leq \frac{r}{(1 + \varepsilon)^j}.$$
\end{lemma}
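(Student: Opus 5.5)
The key observation is that, within a single iteration of the \ooea, the quantity $X_t$ can change by at most the number of positions that the $\textsc{IndependentCategoricalFlip}$ operator actually mutates. Acceptance or rejection can only suppress a change, never enlarge it. Concretely, let $Z_t$ be the number of positions $j \in [n]$ that are selected for mutation in step $t$. A vertex that is not selected keeps its color, so its membership in the color class $C$ is unchanged. Hence $\lvert X_{t+1} - X_t \rvert \le Z_t$, both when the offspring is accepted and when it is rejected; in the latter case the difference is $0$. This holds regardless of the fitness function and of the graph, so the conditioning on $X_t$ and on $0 < X_t$ plays no role, and we only need a tail bound on $Z_t$.

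Since each position is selected independently with probability $1/n$, $Z_t$ is a sum of $n$ independent Bernoulli variables with mean $\mu = 1$. I would bound its tail directly via a union bound over subsets rather than through \Cref{thm:chernoff}, because the direct bound gives cleaner constants:
\begin{align*}
    \Prob{Z_t \ge j} \le \binom{n}{j} \left(\frac{1}{n}\right)^{j} \le \frac{1}{j!}.
\end{align*}
Then I would observe that $1/j! \le 2 \cdot 2^{-j}$ for all $j \in \N$. This can be checked for $j = 0, 1, 2, 3$ directly, and for larger $j$ it follows since $j! \ge 2^{j-1}$. Choosing $r = 2$ and $\varepsilon = 1$ then yields
\begin{align*}
    \Prob{\lvert X_{t+1} - X_t \rvert \ge j \mid X_t} \le \Prob{Z_t \ge j} \le \frac{r}{(1+\varepsilon)^j}.
\end{align*}
Alternatively, \Cref{thm:chernoff} with $\mu = 1$ and $1 + \delta = j$ gives a bound of order $(e/j)^j$, which also decays faster than any geometric sequence.

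The only point needing care is the first step: one has to justify that unselected vertices cannot enter or leave $C$. This holds because $C$ is a fixed set of colors. If $C$ were defined relative to the current coloring, for example "the good color", the class could be relabeled without any vertex changing color. In the applications in \Cref{thm:ooea_bipartite_lowerbound}, the good color may switch, which would require a separate remark. For the lemma as stated, with fixed $C$, the argument is immediate. The remaining steps are routine.
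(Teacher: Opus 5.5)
Your proof is correct and follows the paper's route: you bound $\lvert X_{t+1}-X_t\rvert$ by the number $Z\sim\mathrm{Bin}(n,1/n)$ of mutated positions, which does not depend on the history, and then you only need a tail bound on $Z$. The paper gets that tail from \Cref{thm:chernoff} with $\delta=j-1$, taking $r=2$ and $\varepsilon=3/e-1$ and treating $j\le 1$, $j=2$ and $j\ge 3$ separately, whereas your union bound $\binom{n}{j}n^{-j}\le 1/j!\le 2\cdot 2^{-j}$ gives $r=2$, $\varepsilon=1$ directly; your caveat that applying the lemma to a switching good color needs a separate argument is a fair point the paper does not address.
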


\begin{proof}
    Let $\varepsilon = \frac{3}{e} - 1 > 0$, $r = 2$ and $j \in \N$. For $j = 0$ and $j = 1$, the condition is trivial. For $j \geq 2$, let $Z_1, \dots, Z_n \sim \Ber\left(\frac{1}{n}\right)$ be random variables, such that for each vertex $v \in [n]$, $Z_v = 1$ if that vertex's color is flipped in this mutation and $Z_v = 0$ otherwise. Let $Z = \sum_{v = 1}^n Z_v$ denote the total number of vertices whose color is flipped. Then $Z \sim \mathrm{Bin}\left(n, \frac{1}{n}\right)$, so $\Ex{Z} = 1$.
    
    If the event $\lvert X_{t + 1} - X_t \rvert \geq j$ occurs, then at least $j$ vertices have had their color flipped; that is, $Z \geq j$. In other words,
    \begin{align*}
        \Prob{\lvert X_{t + 1} - X_t \rvert \geq j} \leq \Prob{Z \geq j}.
    \end{align*}

    According to the Chernoff bound (using $\delta = j - 1$),
    \begin{align*}
        \Prob{Z \geq j} \leq \frac{e^{j - 1}}{j^j} = \frac{1}{e} \left( \frac{e}{j} \right)^j.
    \end{align*}

    If $j = 2$, we have
    \begin{align*}
        \frac{1}{e} \left( \frac{e}{2} \right)^2 = \frac{e}{4} = \frac{e^2}{4e} \leq \frac{e^2}{9} = \frac{1}{(1 + \varepsilon)^2} \leq \frac{r}{(1 + \varepsilon)^2}.
    \end{align*}

    If on the other hand $j \geq 3$, we have
    \begin{align*}
        \frac{1}{e} \left( \frac{e}{j} \right)^j \leq \frac{1}{e} \left( \frac{e}{3} \right)^j \leq \left( \frac{e}{3} \right)^j = \left( \frac{1}{1 + \varepsilon} \right)^j = \frac{1}{(1 + \varepsilon)^j} \leq \frac{r}{(1 + \varepsilon)^j}.
    \end{align*}

    Either way, we have shown the inequality.
\end{proof}

\printProofs[unbiased]
\section{\ooea with \textsc{RankedColors}}
\printProofs[betterunbiased]
\section{Gray-Box RLS}
\printProofs[graybox]